\begin{document}

\title{\Huge Risk Bounds of Multi-Pass SGD for Least Squares  in  the Interpolation Regime}



\author
{
Difan Zou\thanks{Equal Contribution} \thanks{Department of Computer Science, University of California, Los Angeles, CA 90095, USA; e-mail: {\tt knowzou@cs.ucla.edu}} 
	~~
	Jingfeng Wu$^*$\thanks{Department of Computer Science, Johns Hopkins University, Baltimore, MD 21218, USA; e-mail: {\tt uuujf@jhu.edu}} 
	~~
	Vladimir Braverman\thanks{Department of Computer Science, Johns Hopkins University, Baltimore, MD 21218, USA; e-mail: {\tt
vova@cs.jhu.edu}}
	~~
	Quanquan Gu\thanks{Department of Computer Science, University of California, Los Angeles, CA 90095, USA; e-mail: {\tt qgu@cs.ucla.edu}}
	~~
	Sham M. Kakade\thanks{Department of Computer Science \& Statistics, Harvard University, Cambridge, MA 02138, USA; e-mail: {\tt sham@seas.harvard.edu}}
}

\date{}
\maketitle

\begin{abstract}%
Stochastic gradient descent (SGD) has achieved great success due to its superior performance in both optimization and generalization. Most of existing generalization analyses are made for single-pass SGD, which is a less practical variant compared to the commonly-used multi-pass SGD. Besides, theoretical analyses for multi-pass SGD often concern a worst-case instance in a class of problems, which may be pessimistic to explain the superior generalization ability for some particular problem instance. The goal of this paper is to sharply characterize the generalization of multi-pass SGD, by developing an instance-dependent excess risk bound for least squares in the interpolation regime, which is expressed as a function of the iteration number, stepsize, and data covariance. We show that the excess risk of SGD can be exactly decomposed into the excess risk of GD and a positive fluctuation error, suggesting that SGD always performs worse, instance-wisely, than GD, in generalization. On the other hand, we show that although SGD needs more iterations than GD to achieve the same level of excess risk, it saves the number of stochastic gradient evaluations, and therefore is preferable in terms of computational time.
\end{abstract}

\section{Introduction}
\emph{Stochastic gradient descent} (SGD) is one of the workhorses in modern machine learning  due to its efficiency and scalability in training and good ability in generalization to unseen test data.
From the optimization perspective, the efficiency of SGD is well understood.
For example, to achieve the same level of optimization error, SGD saves the number of gradient computation compared to its deterministic counterpart, i.e., batched \emph{gradient descent} (GD) \citep{bottou2007tradeoffs,bottou2018optimization}, and therefore saves the total amount of running time.
However, the generalization ability (e.g., excess risk bounds) of SGD is far less clear, especially from theoretical perspective.

\emph{Single-pass} SGD, a less practical SGD variant where each training data is used only once, has been extensively studied in theory.
In particular, a series of works establishes tight excess risk bounds of single-pass SGD in the setting of learning least squares \citep{bach2013non,dieuleveut2017harder,jain2017markov,jain2017parallelizing,neu2018iterate,ge2019step,zou2021benign,wu2021last}.
In practice, though, one often runs SGD with \emph{multiple passes} over the training data and outputs the final iterate, which is referred to as \emph{multi-pass SGD} (or simply SGD in the rest of this paper when there is no confusion).
Compared to single-pass SGD that has limited number of optimization steps, multi-pass SGD allows the algorithm to perform arbitrary number of optimization steps, which is more powerful in optimizing the empirical risk and thus leads to smaller bias error \citep{pillaud2018statistical}.

Despite the extensive application of multi-pass SGD in practice, there are only a few theoretical techniques being developed to study the generalization of multi-pass SGD.
One among them is through the method of \emph{uniform stability} \citep{elisseeff2005stability,hardt2016train},
which is defined as the change of the model outputs when applying a small change in the training dataset. However, the stability based generalization bound is a worst-case guarantee, which is relatively crude and does not show difference between GD and SGD (See, e.g., \citet{chen2018stability} showed GD and SGD have the same stability parameter in the convex smooth setting). 
On the contrary, one easily observes a generalization difference between SGD and GD even in learning the simplest least square problem (see Figure \ref{fig:risk-comparison}). 
In addition, \citet{lin2017optimal,pillaud2018statistical,mucke2019beating} explored the risk bounds for multi-pass SGD using the \emph{operator methods} that are originally developed for analyzing single-pass SGD.
Their bounds are sharp in the minimax sense for a class of least square problems that satisfy certain \emph{source condition} (which restricts the norm of the optimal parameter) and \emph{capacity condition} (or effective dimension, which restricts the spectrum of the data covariance matrix). 
Still, their bounds are not problem-dependent, and could be pessimistic for benign least square instances.

In this paper, our goal is to establish sharp algorithm-dependent and problem-dependent excess risk bounds of multi-pass SGD for least squares.
Our focus is the \emph{interpolation regime} where the training data can be perfectly fitted by a linear interpolator (which holds almost surely when the number of parameter $d$ exceeds the number of training data $n$). 
We assume the data has a sub-Gaussian tail \citep{bartlett2020benign}.
Our main contributions are summarized as follows:
\begin{itemize}[leftmargin=*]
    \item We show that for any iteration number and stepsize, the excess risk of SGD can be exactly decomposed into the excess risk of GD (with the same stepsize and iteration number) and the so-called \textit{fluctuation error}, which is attributed to the accumulative variance of stochastic gradients in all iterations. This suggests that GD (with optimally tuned hyperparameters) always achieves smaller excess risk than SGD for least square problems.   
    \item We further establish problem-dependent bounds for the excess risk of GD and the fluctuation error, stated as a function of the eigenspectrum of the data covariance, iteration number, training sample size, and stepsize. Compared to the bounds proved in prior works \citep{lin2017optimal,pillaud2018statistical,mucke2019beating}, our bounds hold under a milder assumption on the data covariance and ground-truth model. Moreover, our bounds can be applied to a wider range of iteration numbers $t$, i.e., for any $t>0$, in contrast to the prior results that will explode when $t\rightarrow \infty$.

    \item We develop a new suite of proof techniques for analyzing the excess risk of multi-pass SGD. Particularly, the key to our analysis is describing the error covariance based on the tensor operators defined by the second-order and fourth-order moments of the empirical data distribution (i.e., sampling with replacement from the training dataset), rather than the operators used in the single-pass SGD analysis that are defined based on the (population) data distribution \citep{jain2017markov,zou2021benign} (i.e., sampling from the data distribution), together with a sharp characterization on the properties of the operators. 
\end{itemize}
Our developed excess risk bounds for SGD and GD have  important implications on the complexity comparison between GD and SGD: to achieve the same order of excess risk, while SGD may need more iterations than GD, it can have fewer stochastic gradient evaluations than GD. For example, consider the case that the data covariance matrix has a polynomially decaying spectrum with rate $i^{-(1+r)}$, where $r>0$ is an absolute constant. In order to achieve the same order of excess risk, we have the following comparison in terms of iteration complexity and gradient complexity\footnote{We define the gradient complexity as the number of required stochastic gradient evaluations to achieve a target excess risk, which is closely related to the total computation time.}: 
\begin{itemize}[leftmargin=*]
    \item \textit{Iteration Complexity:} SGD needs to take $\tilde \cO(n^{\max\{0.5, \frac{r}{r+1}\}})$ more iterations than GD, with optimally tuned iteration number and stepsize.
    \item \textit{Gradient Complexity:} SGD needs $\tilde \cO(n^{\max\{0.5, \frac{1}{r+1}\}})$ less stochastic gradient evaluations than GD.
\end{itemize}

\begin{figure}
    \centering
    \subfigure[\small{$\lambda_i=i^{-1}\log^{-2}(i+10)$}]{\includegraphics[width=0.45\textwidth]{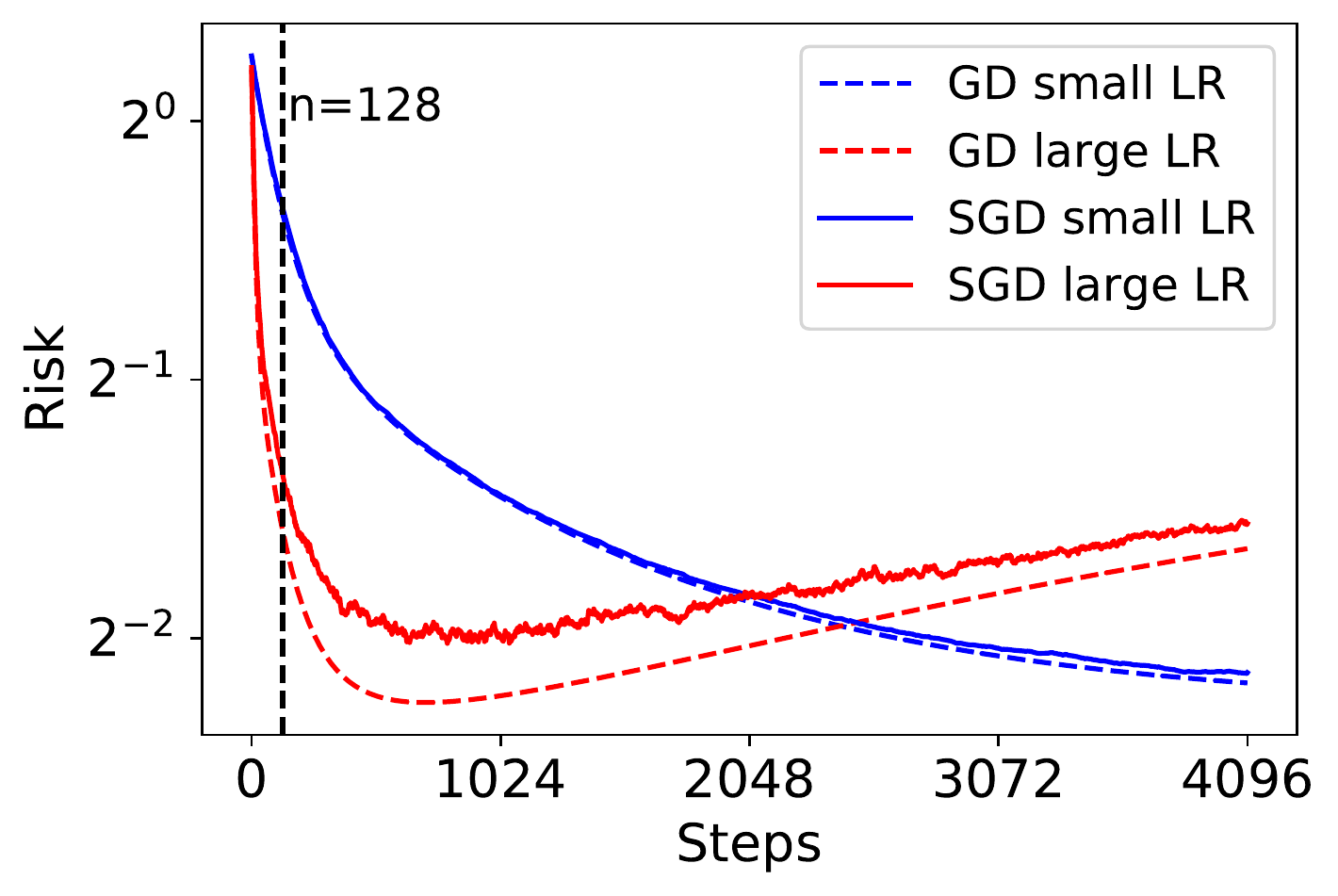}}
      \subfigure[\small{$\lambda_i=i^{-2}$}]{\includegraphics[width=0.45\textwidth]{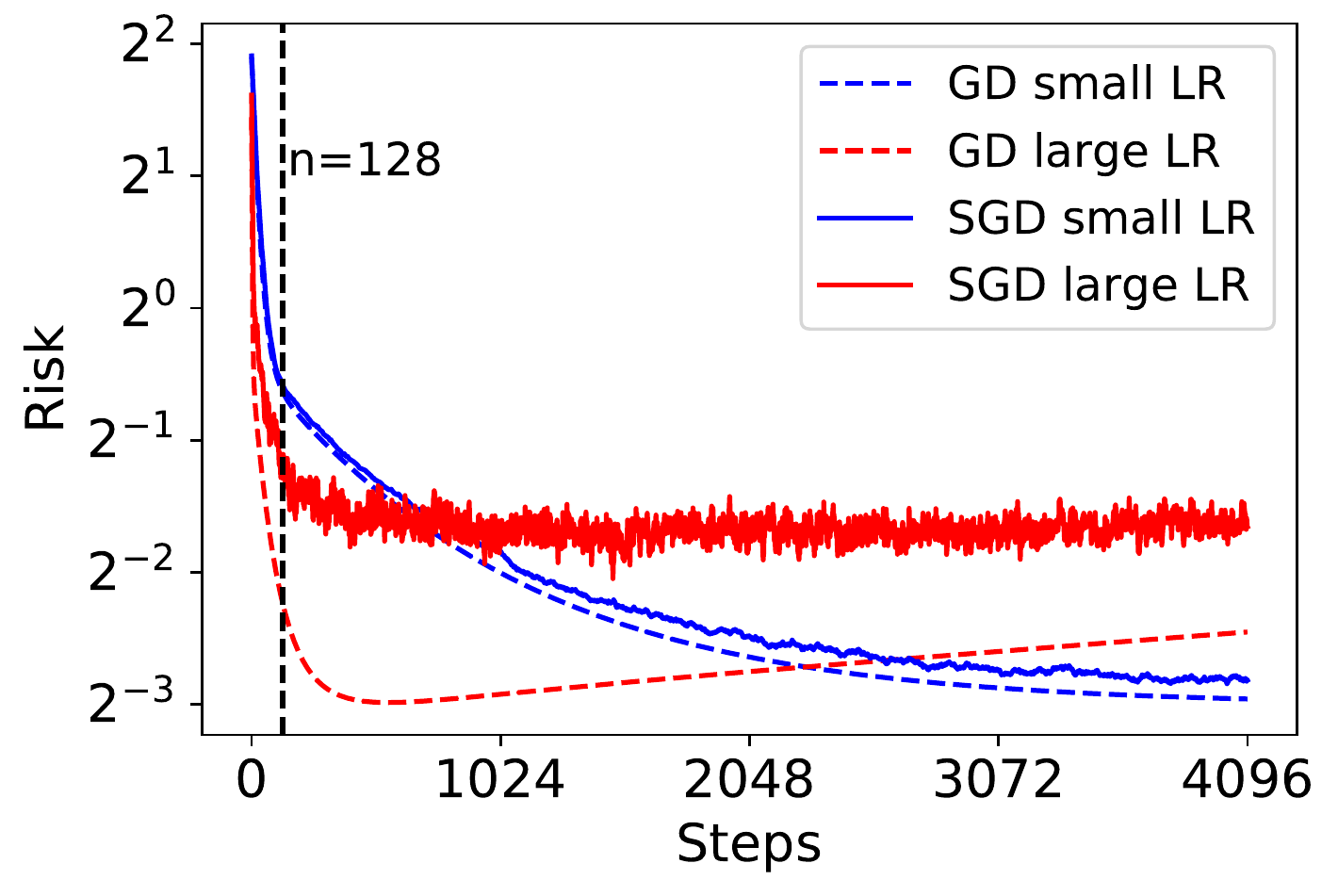}}
    \caption{\small 
      Excess risk comparison between SGD and GD with large and small stepsizes. 
      The true parameter $\wb^*$ is randomly drawn from $\cN(0, \Ib)$ and the model noise variance $\sigma^2=1$.
      The problem dimension is $d=256$, and we randomly draw $n = 128$ training data.
      We consider two data covariance  with eigenspectrum $\lambda_i=i^{-1}\log^{-2}(i+10)$ and $\lambda_i=i^{-2}$.
      For SGD, the reported risk is averaged over $100$ repeats of the algorithm's randomness.
      The large stepsize is $\eta = 0.2$ and the small stepsize is $\eta = 0.02$.\label{fig:risk-comparison}}
    \vspace{-.5cm}
\end{figure}

\paragraph{Notations.}
For a scalar $n>0$. we use $\poly(n)$ to define some positive high-degree polynomial functions of $n$. For two positive-value functions $f(x)$ and $g(x)$ we write $f(x)\lesssim g(x)$ if $f(x) \le c g(x)$ for some constant 
$c>0$, we write $f(x) \gtrsim g(x)$ if $g(x) \lesssim f(x)$, and $f(x) \eqsim g(x)$ if both $f(x)\lesssim g(x)$ and $g(x) \lesssim f(x)$ hold.
We use $\tilde \cO(\cdot)$ to hide some polylogarithmic factors in the standard big-$\cO$ notation.

\section{Related Work}

\paragraph{Optimization.}
Regarding optimization efficiency, the benefits of SGD is well understood \citep{bottou2007tradeoffs,bottou2018optimization,ma2018power,bassily2018exponential,vaswani2019fast,vaswani2019painless}.
For example, for strongly convex losses (can be relaxed with certain growth conditions), GD has less iteration complexity, but SGD enjoys less gradient complexity \citep{bottou2007tradeoffs,bottou2018optimization}.
More recently, it is shown that SGD can converge at an exponential rate in the interpolating regime \citep{ma2018power,bassily2018exponential,vaswani2019fast,vaswani2019painless}, therefore SGD can match the iteration complexity of GD.
Nevertheless, all the above results are regrading the optimization performance;
our focus in this paper is to study the generalization performance of SGD (and GD).


\paragraph{Risk Bounds for Multi-Pass SGD.}
The risk bounds of multi-pass SGD are also studied from the operator perspective \citep{rosasco2015learning,lin2017optimal,pillaud2018statistical,mucke2019beating}.
The work by \citet{rosasco2015learning} focused on \emph{cyclic SGD}, i.e., SGD with multiple passes but fixed sequence on the training data.
Their results are limited to small stepsizes ($\gamma = \cO(1/n)$), while ours allow constant stepsize.
Similar to \citet{lin2017optimal,pillaud2018statistical,mucke2019beating}, we decompose the population risk of SGD iterates into a risk term caused by batch GD iterates and a fluctuation error term between SGD and GD iterates. But our methods of bounding the fluctuation error are different (see more in..). 
Moreover, our results are based on different assumptions: \citet{lin2017optimal,pillaud2018statistical,mucke2019beating} assumed strong finiteness on the optimal parameter, and their results only apply to data covariance with a specific type of spectrum (nearly polynomially decaying ones); in contrast, our results assume a Gaussian prior on the optimal parameter (which might not admit a finite norm), and our results cover more general data covariance (inlcuding those with polynomially decaying spectrum). \citet{lei2021generalization} studied risk bounds for multi-pass SGD with general convex loss. When applied to least square problems, their bounds are cruder than ours.


\paragraph{Uniform Stability.}
Another approach for characterizing the generalization of multi-pass SGD is through \emph{uniform stability} \citep{hardt2016train,chen2018stability,kuzborskij2018data,zhang2021stability,bassily2020stability}.
There are mainly two differences between this and our approach. 
First, we directly bound the excess risk of SGD; but the uniform stability can only bound the generalization error, there needs an additional triangle inequality to relate excess risk with generalization error plus optimization error (plus approximation error) --- this inequality can easily be loose (consider the algorithmic regularization effects).
Secondly, the uniform stability bound is also crude. For example, in the non-strongly convex setting, the uniform stability bound for SGD/GD linearly scales with the total optimization length (i.e., sum of stepsizes), which grows as $t$ \citep{hardt2016train,chen2018stability,kuzborskij2018data,zhang2021stability,bassily2020stability} (this is minimaxly unavoidable according to \citet{zhang2021stability,bassily2020stability}). 
Notably, \citet{bassily2020stability} extended the uniform stability approach to the non-convex and smooth setting. We left such an extension of our method as a future work.

\section{Problem Setup}
Let $\xb$ be a feature vector in a Hilbert space $\cH$ (its dimension is denoted by $d$, which is possibly infinite) and $y \in \RR$ be its response, and assume that they jointly follow an unknown population distribution $\cD$.
In linear regression problems, the \emph{population risk} of a parameter $\wb$ is defined by
\[
 L_{\cD}(\wb) := \frac{1}{2}\EE_{(\xb, y)\sim D} (\la \xb, \wb \ra - y)^2,
\]
and the \emph{excess risk} is defined by
\begin{equation}\label{eq:excess_risk}
\risk(\wb) := L_D(\wb) - \min_{\wb} L_D(\wb) = \frac{1}{2}  \|\wb-\wb^*\|_{\Hb}^2, \quad \ \text{where}\ \Hb := \EE_{\cD} [\xb \xb^\top] .
\end{equation}
In the statistical learning setting, the population distribution $\cD$ is unknown, and one is provided with a set of $N$ training samples, $\cS = (\xb_i, y_i)_{i=1}^n$, that are drawn independently at random from the population distribution.
We also use $\Xb := (\xb_1, \dots, \xb_n)^\top$ and $\yb := (y_1, \dots, y_n)^\top$ to denote the concatenated features and labels, respectively. 
The linear regression problems aim to find a parameter based on the training set $\cS$ that affords a small excess risk.

\paragraph{Multi-Pass SGD.}
We are interested in solving the linear regression problem using multi-pass \emph{stochastic gradient descent} (SGD).
The algorithm generates a sequence of iterates $(\wb_t)_{t\ge 1}$ according to the following update rule:
the initial iterate is $\wb_0 = \boldsymbol{0}$ (which can be assumed without lose of generality); then at each iteration, an example $(\xb_{i_t}, y_{i_t})$ is drawn from $\cS$ uniformly at random, and the iterate is updated by
\begin{align*}
\wb_{t+1} = \wb_t - \eta\cdot \xb_{i_t} (\xb_{i_t}^\top\wb_t - y_{i_t}),
\end{align*}
where $\eta > 0$ is a constant stepsize (i.e., learning rate).

\paragraph{GD.}
Another popular algorithm is \emph{gradient descent} (GD). 
For the clarify of notations, we use $(\hat{\wb}_t)_{t \ge 1}$ to denote the GD iterates,
which follow the following updates:
\begin{align*}
\hat{\wb}_{t+1} = \hat{\wb}_t - \eta\cdot \frac{1}{n} \sum_{i=1}^n \xb_i (\xb_i^\top \hat{\wb}_t - y_i),\quad \hat\wb_0=\boldsymbol{0},
\end{align*}
where $\eta > 0$ is a constant stepsize.

\paragraph{Notations and Assumptions.}
We use $\Hb := \EE[\xb\otimes\xb]$ to denote the population data covariance matrix.
The eigenvalues of $\Hb$ is denoted by $(\lambda_i)_{i \ge 1}$, sorted in non-increasing order.
Given the training data $(\Xb, \yb)$, we define $\bepsilon=\yb-\Xb\wb^*$ the collection of model noise, $\Ab = \Xb\Xb^\top$ as the gram matrix, and $\bSigma = n^{-1}\Xb^\top\Xb$ as the empirical covariance. Then the \textit{minimum-norm solution} is defined by
\begin{align*}
\hat\wb : = (\Xb^\top\Xb)^{\dagger}\Xb^\top\yb = \Xb^\top\Ab^{-1}\yb.
\end{align*}
It is clear that with appropriate stepsizes, both SGD and GD algorithms converge to $\hat{\wb}$ \citep{gunasekar2018characterizing,bartlett2020benign}.

The assumptions required by our theorems are summarized in below.
\begin{assumption}\label{assump:data_distribution}
For the linear regression problem:
\begin{enumerate}[label=\Alph*]
    \item  The components of $\Hb^{-1/2}\xb$ are independent and $1$-subGaussian.\label{assump:item:subgaussian_data}
    \item The response $y$ is generated by $y := \la\wb^*, \xb\ra + \xi$, where  $\wb^*$ is the ground truth weight vector and $\xi$ is a noise independent of $\xb$. Furthermore, the additive noise satisfies $\EE[\xi]=0$, $\EE[\xi^2]=\sigma^2$.\label{assump:item:well_specified_noise}
    \item The ground truth $\wb^*$ follows a Gaussian prior $\cN(\boldsymbol{0},\ \omega^2\cdot\Ib)$, where $\omega^2$ is a constant. \label{assump:item:gaussian_prior}
    \item The minimum-norm solution $\hat{\wb}$ linearly interpolates all training data, i.e., \(y_i = \hat{\wb}^\top \xb_i \)  for every \( i=1,\dots,n\). \label{assump:item:interpolator}
\end{enumerate}
\end{assumption}
Assumptions \ref{assump:data_distribution}\ref{assump:item:subgaussian_data} and \ref{assump:item:well_specified_noise} are standard for analyzing overparameterized linear regression problem \citep{bartlett2020benign,tsigler2020benign}.
Assumption \ref{assump:data_distribution}\ref{assump:item:gaussian_prior} is also widely adopted in analyzing least square problems (see, e.g.,  \citet{ali2019continuous,dobriban2018high,xu2019number}).
Finally, Assumption \ref{assump:data_distribution}\ref{assump:item:interpolator} holds almost surely when $d > n$, i.e., the number of parameter exceeds the number of data.


In the following, the presented risk bounds will hold (i) with high-probability with respect to the randomness of sampling feature vectors $\Xb$, and (ii) in expectation with respect to the randomness of multi-pass SGD algorithm, the randomness of sampling additive noise  $\bepsilon$  and the randomness of the true parameter $\wb^*$ as a prior.
For these purpose, we will use $\EE_{\sgd}, \EE_{\wb^*}$ to refer to taking expectation with respect to the SGD algorithm and the prior distribution of $\wb^*$, respectively.

\section{Main Results}


Our first theorem shows that, under the same stepsize and number of iterates, SGD always generalizes worse than GD.

\begin{theorem}[Risk decomposition]\label{thm:risk_decomposition}
Suppose that Assumption \ref{assump:data_distribution}\ref{assump:item:interpolator} holds.
Then the excess risk of SGD can be decomposed by
\[
\EE_{\sgd} \big[\risk(\wb_t)\big] = \risk(\hat{\wb}_t) + \FlucError(\wb_t).
\]
Moreover, the fluctuation error is always positive.
\end{theorem}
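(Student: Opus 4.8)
The plan is to recognize the fluctuation error as the variance of the SGD iterate around its mean in the $\Hb$-seminorm, to show that this mean is exactly the GD iterate, and then to obtain both the decomposition and the positivity from a bias--variance (orthogonality) argument.

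\emph{Step 1: the GD iterate is the mean of the SGD iterate.} The structural fact I would establish first is that $\EE_{\sgd}[\wb_t] = \hat\wb_t$ for every $t$. Because the least-squares stochastic gradient $\xb_{i_t}(\xb_{i_t}^\top\wb_t - y_{i_t})$ is affine in $\wb_t$, and because $i_t$ is sampled uniformly and independently of $\wb_t$, taking the conditional expectation over $i_t$ collapses the single-sample gradient to the full-batch gradient:
\[
\EE_{i_t}\big[\wb_{t+1}\mid \wb_t\big] = \wb_t - \eta\cdot\frac1n\sum_{i=1}^n \xb_i(\xb_i^\top\wb_t - y_i),
\]
which is precisely one GD step applied at $\wb_t$. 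I would then induct on $t$: the base case $\wb_0 = \hat\wb_0 = \boldsymbol 0$ is immediate, and combining the tower rule, the affinity of the GD map, and the inductive hypothesis $\EE_{\sgd}[\wb_t]=\hat\wb_t$ gives $\EE_{\sgd}[\wb_{t+1}]=\hat\wb_{t+1}$.

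\emph{Step 2: bias--variance decomposition.} Splitting $\wb_t - \wb^* = (\wb_t - \hat\wb_t) + (\hat\wb_t - \wb^*)$ and expanding the $\Hb$-seminorm gives
\[
\EE_{\sgd}\|\wb_t - \wb^*\|_{\Hb}^2 = \EE_{\sgd}\|\wb_t - \hat\wb_t\|_{\Hb}^2 + \|\hat\wb_t - \wb^*\|_{\Hb}^2 + 2\,\EE_{\sgd}\,\la \wb_t - \hat\wb_t,\ \hat\wb_t - \wb^*\ra_{\Hb}.
\]
Since $\hat\wb_t - \wb^*$ is nonrandom once the data and the prior draw are fixed, and since Step 1 yields $\EE_{\sgd}[\wb_t - \hat\wb_t] = \boldsymbol 0$, the cross term vanishes. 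Dividing by two identifies the two pieces of the claimed decomposition, with
\[
\FlucError(\wb_t) = \tfrac12\,\EE_{\sgd}\|\wb_t - \hat\wb_t\|_{\Hb}^2 = \tfrac12\,\EE_{\sgd}\big\|\wb_t - \EE_{\sgd}[\wb_t]\big\|_{\Hb}^2 .
\]
This last expression is a quadratic form in the positive semidefinite matrix $\Hb$, hence manifestly nonnegative (and strictly positive as soon as $t\ge1$ and the per-sample updates are not all identical), which gives the positivity claim.

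\emph{Main obstacle.} There is no deep difficulty in the decomposition itself: everything rests on the unbiasedness in Step 1, which is special to the affine gradients of least squares, and the positivity is then automatic from the PSD structure of $\Hb$. The only points needing care are the measurability bookkeeping in Step 1 --- ensuring $i_t$ is independent of $\wb_t$ so that the conditional expectation indeed collapses to the GD map --- and observing that Assumption \ref{assump:data_distribution}\ref{assump:item:interpolator} makes $\hat\wb$ a common fixed point of all per-sample updates (every residual $y_i - \hat\wb^\top\xb_i$ vanishes), which is what will later allow the fluctuation error to be sharply bounded rather than merely shown nonnegative.
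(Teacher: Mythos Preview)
Your proof is correct and rests on the same key identity the paper records as Fact~\ref{fact:sgd_expectation}: $\EE_{\sgd}[\wb_t] = \hat\wb_t$. The paper, however, packages the argument differently. It centers the recursion at $\hat\wb$ (using interpolation to kill the residuals), tracks the matrix $\Eb_t := \EE_{\sgd}[(\wb_t-\hat\wb)(\wb_t-\hat\wb)^\top]$, and unrolls it via operators as $\Eb_t = \cG^t\circ\Eb_0 + \eta^2\sum_{k<t}\cG^{t-1-k}\circ(\cM-\tilde\cM)\circ\Eb_k$; the fluctuation error is then $\tfrac12\la\bTheta_2,\Hb\ra$ for the second summand $\bTheta_2$, and positivity comes from $\cG$ and $\cM-\tilde\cM$ being PSD mappings. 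Your scalar bias--variance decomposition is more elementary and, as you note, does not even need Assumption~\ref{assump:data_distribution}\ref{assump:item:interpolator} for the decomposition itself. What the paper's route buys is the explicit operator formula
\[
\FlucError(\wb_t) = \frac{\eta^2}{2}\sum_{k=0}^{t-1}\big\la\cG^{t-1-k}\circ(\cM-\tilde\cM)\circ\Eb_k,\ \Hb\big\ra,
\]
which is the starting point for the quantitative bound in Theorem~\ref{thm:upperbound_fluctuation}; your variance expression $\tfrac12\,\EE_{\sgd}\|\wb_t - \hat\wb_t\|_{\Hb}^2$ is equal to it but does not by itself deliver that handle.
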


\paragraph{A Risk Comparison.}
Theorem \ref{thm:risk_decomposition} shows that, in the interpolation regime, SGD affords a strictly large excess risk than GD, given the same hyperparameters (stepsize $\eta$ and number of iterates $t$). 
Therefore, despite of a possibly higher computational cost, the optimally tuned GD \emph{dominates} the optimally tuned SGD in terms of the generalization performance.
This observation is verified empirically by experiments in Figure~\ref{fig:risk-comparison}.


Theorem \ref{thm:risk_decomposition} relates the risk of SGD iterates to that of GD iterates. This idea has appeared in earlier literature \citep{lin2017optimal,pillaud2018statistical,mucke2019beating}. 

Our next theorem is to characterize the fluctuation error of SGD (with respect to GD).
\begin{theorem}[Fluctuation error bound]\label{thm:upperbound_fluctuation}
Suppose that Assumptions \ref{assump:data_distribution}\ref{assump:item:subgaussian_data}, \ref{assump:item:well_specified_noise} and \ref{assump:item:interpolator} all hold.
Then for every $n\ge 1$, $t\ge 1$ and $\eta\le c/\tr(\Hb)$ for some absolute constant $c$, with probability at least $1-1/\poly(n)$, it holds that
\begin{align*}
    & \FlucError (\wb_t) 
    \lesssim \notag\\
    & \bigg[\log(t)\cdot\bigg(\frac{\tr(\Hb)\log(n)}{t}+\frac{k^\dagger\log^{5/2}(n)}{n^{1/2}t}\bigg)+\frac{\log^{5/2}(n)\eta}{n^{1/2}}\cdot\sum_{i>k^\dagger}\lambda_i\bigg)\bigg]
\cdot\min\big\{\|\hat\wb\|_2^2,\ t\eta\cdot \|\hat{\wb} \|_{\bSigma}^2\big\},
\end{align*}
where $k^\dagger \ge 0$ is an arbitrary index (can be infinity).
\end{theorem}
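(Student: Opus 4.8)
The plan is to exploit the interpolation hypothesis (Assumption \ref{assump:data_distribution}\ref{assump:item:interpolator}), which gives $y_i=\hat\wb^\top\xb_i$, so that both recursions become noiseless once we center at $\hat\wb$. Writing $\tilde\wb_t:=\wb_t-\hat\wb$ and $\mathbf v_s:=\hat\wb_s-\hat\wb=-(\Ib-\eta\bSigma)^s\hat\wb$, the SGD update collapses to $\tilde\wb_{t+1}=(\Ib-\eta\xb_{i_t}\xb_{i_t}^\top)\tilde\wb_t$ and the GD update to $\mathbf v_{t+1}=(\Ib-\eta\bSigma)\mathbf v_t$. Taking the conditional expectation over the sampled index shows $\EE_{\sgd}[\tilde\wb_{t+1}\mid\tilde\wb_t]=(\Ib-\eta\bSigma)\tilde\wb_t$, so by induction $\EE_{\sgd}[\wb_t]=\hat\wb_t$. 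A bias--variance split then yields
\[
\EE_{\sgd}\big[\risk(\wb_t)\big]=\risk(\hat\wb_t)+\tfrac12\big\langle\Hb,\,\mathbf S_t\big\rangle,\qquad \mathbf S_t:=\EE_{\sgd}\big[(\wb_t-\hat\wb_t)\otimes(\wb_t-\hat\wb_t)\big],
\]
which both reproves the positivity asserted in Theorem \ref{thm:risk_decomposition} and identifies $\FlucError(\wb_t)=\tfrac12\langle\Hb,\mathbf S_t\rangle$. It remains to bound $\langle\Hb,\mathbf S_t\rangle$.

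\textbf{Step 2: a covariance recursion in empirical tensor operators.} I would track $\mathbf S_t$ through maps defined on the \emph{empirical} distribution (sampling with replacement from $\cS$): the SGD map $\mathcal A\circ\mathbf M:=\frac1n\sum_i(\Ib-\eta\xb_i\xb_i^\top)\mathbf M(\Ib-\eta\xb_i\xb_i^\top)$, the GD map $\mathcal G\circ\mathbf M:=(\Ib-\eta\bSigma)\mathbf M(\Ib-\eta\bSigma)$, and the centered fourth-moment map $\mathcal T_0\circ\mathbf M:=\frac1n\sum_i\xb_i\xb_i^\top\mathbf M\xb_i\xb_i^\top-\bSigma\mathbf M\bSigma$, so that $\mathcal A=\mathcal G+\eta^2\mathcal T_0$. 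A direct computation gives the driven recursion
\[
\mathbf S_{t+1}=\mathcal A\circ\mathbf S_t+\eta^2\,\mathcal T_0\circ\mathbf g_t,\qquad \mathbf g_t:=\mathbf v_t\otimes\mathbf v_t=\mathcal G^t\circ(\hat\wb\otimes\hat\wb),\quad \mathbf S_0=\boldsymbol 0,
\]
whose solution is $\mathbf S_t=\eta^2\sum_{s=0}^{t-1}\mathcal A^{\,t-1-s}\circ\mathcal T_0\circ\mathbf g_s$. Hence $\FlucError(\wb_t)=\tfrac{\eta^2}2\sum_{s=0}^{t-1}\big\langle\Hb,\ \mathcal A^{\,t-1-s}\circ\mathcal T_0\circ\mathbf g_s\big\rangle$: the fluctuation is exactly the variance injected by the decaying GD bias $\mathbf g_s$, then propagated and contracted by the SGD covariance map.

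\textbf{Step 3: operator estimates (the crux).} The core of the argument is to establish two quantitative properties of the empirical maps, which replace the population-operator identities used in single-pass analyses. First, a \emph{self-bounding fourth-moment inequality}: for PSD $\mathbf M$, $\mathcal T_0\circ\mathbf M\preceq c_1\langle\bSigma,\mathbf M\rangle\,\bSigma+\mathbf E(\mathbf M)$, whose leading term reduces the rank-one injection $\mathcal T_0\circ\mathbf g_s$ to the scalar $\|\mathbf v_s\|_{\bSigma}^2$ times $\bSigma$, and $\mathbf E(\mathbf M)$ is a concentration remainder. Second, a \emph{PSD-cone contraction}: for $\eta\le c/\tr(\Hb)$ (which, with high probability over the sub-Gaussian design, dominates $\max_i\|\xb_i\|^2$), $\mathcal A$ preserves the PSD cone and $\langle\Hb,\mathcal A^{k}\circ\bSigma\rangle$ is comparable to its GD counterpart $\langle\Hb,(\Ib-\eta\bSigma)^{2k}\bSigma\rangle$ up to a controlled multiplicative slack. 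I expect this step to be the main obstacle: the summands $\xb_i\xb_i^\top\mathbf M\xb_i\xb_i^\top$ are fourth-order and heavy-tailed, so concentrating the empirical map around its sub-Gaussian population counterpart (and thereby replacing $\bSigma$ by $\Hb$) under only Assumption \ref{assump:data_distribution}\ref{assump:item:subgaussian_data} is delicate, and is precisely what produces the $\log(n)$ and $\log^{5/2}(n)/\sqrt n$ factors in the final bound.

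\textbf{Step 4: summation and assembly.} Substituting the fourth-moment inequality into the solution from Step 2, and using the concentration $\bSigma\approx\Hb$ to diagonalize, reduces the operator sum to a scalar double sum of the form $\eta^2\sum_{s}\|\mathbf v_s\|_{\bSigma}^2\sum_i\lambda_i^2(1-\eta\lambda_i)^{2(t-1-s)}$ (plus the remainders $\mathbf E(\cdot)$). The remaining work is a careful convolution of the decaying bias energy $\|\mathbf v_s\|_{\bSigma}^2$ against the contracting transfer factor. I would split the eigen-index at the free parameter $k^\dagger$: the head directions $i\le k^\dagger$ contract in about $1/(\eta\lambda_i)$ steps and, after the averaging implicit in the $s$-sum, contribute the $\log(t)/t$ terms, while the slowly-contracting tail $i>k^\dagger$ contributes $\eta\sum_{i>k^\dagger}\lambda_i$. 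Throughout I would extract the common factor using the two elementary bounds $\eta\sum_{s=0}^{t-1}\|\mathbf v_s\|_{\bSigma}^2\le t\eta\|\hat\wb\|_{\bSigma}^2$ (bounding each term) and $\eta\sum_{s=0}^{t-1}\|\mathbf v_s\|_{\bSigma}^2=\big\langle(2\Ib-\eta\bSigma)^{-1}\big(\Ib-(\Ib-\eta\bSigma)^{2t}\big),\,\hat\wb\otimes\hat\wb\big\rangle\le\|\hat\wb\|_2^2$, which together give the factor $\min\{\|\hat\wb\|_2^2,\,t\eta\|\hat\wb\|_{\bSigma}^2\}$. Collecting the head, tail, and concentration-remainder contributions, and taking a union bound over the high-probability events from Step 3, yields the stated estimate with probability at least $1-1/\poly(n)$.
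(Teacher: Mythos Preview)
Your Steps 1--2 are correct and produce a legitimate alternative unfolding, $\mathbf S_t=\eta^2\sum_s\mathcal A^{\,t-1-s}\circ\mathcal T_0\circ\mathbf g_s$, in which the \emph{source} $\mathbf g_s$ is explicit but the \emph{propagator} is the full SGD covariance map $\mathcal A$. The paper instead tracks $\Eb_t:=\EE_{\sgd}[(\wb_t-\hat\wb)(\wb_t-\hat\wb)^\top]$ and unfolds via $\cG$, obtaining the same matrix as $\eta^2\sum_k\cG^{t-1-k}\circ\mathcal T_0\circ\Eb_k$: now the propagator $(\Ib-\eta\bSigma)^{2k}$ is explicit while the source $\Eb_k$ is implicit. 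Your Step-4 extraction of the factor $\min\{\|\hat\wb\|_2^2,\,t\eta\|\hat\wb\|_{\bSigma}^2\}$ is also correct and matches the paper (with $\langle\bSigma,\Eb_0\rangle=\|\hat\wb\|_{\bSigma}^2$).

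The gap is Step 3(a). The inequality $\mathcal T_0\circ\mathbf M\preceq c_1\langle\bSigma,\mathbf M\rangle\,\bSigma$ (with a small remainder) and dimension-free $c_1$ fails for the rank-one inputs $\mathbf M=\mathbf g_s$. In the interpolation regime the $\xb_i$ are linearly independent, and then $\cM\circ(\mathbf v\mathbf v^\top)\preceq c\,\|\mathbf v\|_{\bSigma}^2\,\bSigma$ is \emph{equivalent} to $\max_i(\xb_i^\top\mathbf v)^2\le \tfrac{c}{n}\sum_j(\xb_j^\top\mathbf v)^2$; this holds with high probability for any data-independent $\mathbf v$, but there is no reason it should hold for $\mathbf v_s=-(\Ib-\eta\bSigma)^s\hat\wb$, which depends on the entire sample through both $\bSigma$ and $\hat\wb$. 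Step 3(b) inherits the same difficulty, since $\langle\Hb,\mathcal A^k\circ\bSigma\rangle$ receives a fourth-moment injection at every step and is not obviously comparable to its $\cG^k$ counterpart. The paper avoids both obstacles by using self-adjointness to move the operators onto a \emph{fixed} matrix: $\langle\cG^{t-1-k}\circ\mathcal T_0\circ\Eb_k,\Hb\rangle\le\langle\cM\circ\cG^{t-1-k}\circ\Hb,\Eb_k\rangle$, after which one only needs a uniform-over-$i$ bound on $\xb_i^\top(\Ib-\eta\bSigma)^k\Hb(\Ib-\eta\bSigma)^k\xb_i$. Writing $\Hb=\bSigma+(\Hb-\bSigma)$ splits this into a term that commutes with $(\Ib-\eta\bSigma)^k$ (yielding the $\tr(\Hb)\log(n)/t$ piece) and a term controlled by $\|\Hb-\bSigma\|_2\lesssim\sqrt{\log(n)/n}$ together with a rotation-invariance argument for $\eb_i^\top\Ab(\Ib_n-\eta n^{-1}\Ab)^{2k}\eb_i$; this is where the $k^\dagger$-split and the $\log^{5/2}(n)/\sqrt n$ factor actually originate. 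Everything is then reduced to $\langle\bSigma,\Eb_k\rangle$---the SGD training loss---which is handled by a short self-referential recursion.
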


We first explain the factor $\min\big\{\|\hat\wb\|_2^2,\ t\eta\cdot \| \hat{\wb} \|_{\bSigma}^2\big\}$ in our bound. 
First of all, when the interpolator $\hat\wb$ has a small $\ell_2$-norm, the quantity is automatically small.
Furthermore, $\| \hat{\wb} \|_{\bSigma}^2 \lesssim \omega^2 \lesssim 1$ easily holds under mild assumptions on $\wb^*$, e.g., Assumption \ref{assump:data_distribution}\ref{assump:item:gaussian_prior}.
Then, for finite $t$ one can bound the factor with $\min\big\{\|\hat\wb\|_2^2,\ t\eta\cdot \| \hat{\wb} \|_{\bSigma}^2\big\} \lesssim \omega^2 \eta t$.


More interestingly, for SGD with constant stepsize ($\gamma \eqsim 1/\tr(\Hb)$) and infinite optimization steps ($t \to \infty$), our risk bound can still vanish, while all risk bounds in prior works \citep{lin2017optimal,pillaud2018statistical,mucke2019beating} are vacuous.
To see this, one can first set $k^\dagger = \infty$ and $t \to \infty$ in Theorem \ref{thm:upperbound_fluctuation}, so the fluctuation error vanishes. 
Secondly, note that GD with constant stepsize converges to the minimum-norm interpolator $\hat{\wb}$, so the risk of GD converges to the risk of $\hat{\wb}$, which is known to vanish for data covariance that enables ``benign overfitting''  \citep{bartlett2020benign}.
Combining these with Theorm \ref{thm:risk_decomposition} gives a generalization bound of SGD with constant stepsize and infinite optimization steps. 

To complement the above results, we provide the following finite-time risk bound for GD. Nonetheless, we emphasize that any risk bound for GD can be plugged into Theorems \ref{thm:risk_decomposition} and~\ref{thm:upperbound_fluctuation} to obtain a risk bound for SGD.
\begin{theorem}[GD risk]\label{thm:GD_earlystop}
Suppose that Assumptions \ref{assump:data_distribution}\ref{assump:item:subgaussian_data}, \ref{assump:item:well_specified_noise} and \ref{assump:item:gaussian_prior} all hold.
Then for every $n\ge 1$, $t \ge 1$ and $\eta < {1} / {\norm{\Hb}_2}$, with probability at least $1-1/\poly(n)$, it holds that
\begin{equation*}
\EE_{\wb^*, \bepsilon} [ \risk( \hat{\wb}_t) ] 
\lesssim \omega^2 \cdot
     \Bigg(  \frac{\tilde{\lambda}^2 }{n^2 }  \cdot \sum_{i\le k^*} \frac{1}{\lambda_i} + \sum_{i>k^*} \lambda_i \Bigg)
    + \sigma^2\cdot \rbr{\frac{k^*}{n} + \frac{n}{\tilde{\lambda}^2} \sum_{i>k^*}\lambda_i^2 },
\end{equation*}
where 
$k^* := \min\{k: n\lambda_{k+1} \le \frac{n}{\eta t} + \sum_{i > k} \lambda_i \}$ and 
$\tilde{\lambda} := \frac{n}{\eta t} + \sum_{i > k^*} \lambda_i$.
\end{theorem}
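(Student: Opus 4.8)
The plan is to condition on the design matrix $\Xb$ (so that all claimed statements become high-probability statements over the randomness of $\Xb$) and to take expectation only over the prior $\wb^*\sim\cN(\boldsymbol{0},\omega^2\Ib)$ and the noise $\bepsilon$. First I would unroll the GD recursion. Writing the update as $\hat\wb_{t+1}=(\Ib-\eta\bSigma)\hat\wb_t+\eta\,n^{-1}\Xb^\top\yb$ and substituting $\yb=\Xb\wb^*+\bepsilon$, the error satisfies $\hat\wb_t-\wb^*=-(\Ib-\eta\bSigma)^t\wb^*+\eta\sum_{j=0}^{t-1}(\Ib-\eta\bSigma)^j\,n^{-1}\Xb^\top\bepsilon$. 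Since $\wb^*$ and $\bepsilon$ are independent with zero mean, the cross term drops out of $\EE[\risk(\hat\wb_t)]=\tfrac12\EE\|\hat\wb_t-\wb^*\|_\Hb^2$, giving the exact decomposition $\EE_{\wb^*,\bepsilon}[\risk(\hat\wb_t)]=\tfrac{\omega^2}{2}\tr\!\big[\Hb(\Ib-\eta\bSigma)^{2t}\big]+\tfrac{\sigma^2}{2n}\tr\!\big[\Hb\,(\Ib-(\Ib-\eta\bSigma)^t)^2\,\bSigma^\dagger\big]$, where I use $\EE[\wb^*\wb^{*\top}]=\omega^2\Ib$, $\EE[\bepsilon\bepsilon^\top]=\sigma^2\Ib$, the cyclic property of the trace, and the fact that $\bSigma$, $\bSigma^\dagger$ and $(\Ib-\eta\bSigma)^t$ commute. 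The first term is the bias, the second the variance, and each is now a deterministic spectral functional of $\bSigma$ against $\Hb$.

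The second step is to control the two scalar shrinkage profiles. An eigenvalue $\mu$ of $\bSigma$ contributes $\mu(1-\eta\mu)^{2t}$ to the bias and $(1-(1-\eta\mu)^t)^2/\mu$ to the variance (on the range of $\bSigma$). Using $(1-\eta\mu)^t\le e^{-\eta t\mu}$ one obtains $\mu^2(1-\eta\mu)^{2t}\le (\eta t)^{-2}\sup_x x^2e^{-2x}\lesssim (\eta t)^{-2}$, together with $1-(1-\eta\mu)^t\le\min\{1,\eta t\mu\}$. These inequalities show that the early-stopped GD operator behaves like a ridge filter with effective regularization $\lambda=n/(\eta t)$: directions with $\mu\gtrsim (\eta t)^{-1}$ are essentially fully fit (negligible bias, $O(1/\mu)$ variance), while directions with $\mu\lesssim(\eta t)^{-1}$ are essentially untouched (bias $\approx\mu$, negligible variance). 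This is precisely why the threshold $k^*$ and effective regularization $\tilde\lambda=n/(\eta t)+\sum_{i>k^*}\lambda_i$ in the statement take the same form as the effective ridge quantities of \citet{bartlett2020benign,tsigler2020benign} under the identification $\lambda\eqsim n/(\eta t)$.

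The third and technically hardest step is to evaluate the two deterministic traces against $\Hb$, whose eigenbasis is not aligned with that of $\bSigma$. The plan is to split the spectrum of $\Hb$ at the index $k^*$ and treat the head ($i\le k^*$) and tail ($i>k^*$) blocks with the benign-overfitting concentration machinery of \citet{bartlett2020benign,tsigler2020benign}. On the head, the top-$k^*$ block of $\bSigma$ is well conditioned, so its eigenvalues lie within a constant factor of $\lambda_i\ge\tilde\lambda/n$ and the filter is effective there; combining the scalar bound $\mu^2(1-\eta\mu)^{2t}\lesssim(\eta t)^{-2}\lesssim(\tilde\lambda/n)^2$ with the trace against the head block produces the bias term $\tfrac{\tilde\lambda^2}{n^2}\sum_{i\le k^*}\lambda_i^{-1}$, while the variance profile $\approx 1/\mu\approx 1/\lambda_i$ gives $\tr[\Hb_{\le k^*}\bSigma^\dagger]\eqsim k^*$ and hence the $k^*/n$ variance term. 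On the tail, the bias block is controlled crudely by $(\Ib-\eta\bSigma)^{2t}\preceq\Ib$, yielding $\sum_{i>k^*}\lambda_i$; the tail variance is the most delicate, and requires the key high-probability estimate that the tail block of $\bSigma$ concentrates so that the effective regularization it induces is $\eqsim\tilde\lambda/n$, which after a careful trace computation against $\Hb$ reproduces the ridge-type quantity $\tfrac{n}{\tilde\lambda^2}\sum_{i>k^*}\lambda_i^2$. Summing head and tail contributions and collecting the $\omega^2$ and $\sigma^2$ prefactors yields the claimed bound.

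The main obstacle throughout is exactly this misalignment between $\Hb$ and $\bSigma$: the traces cannot be diagonalized simultaneously, so one must argue, with high probability over $\Xb$ using Assumption~\ref{assump:data_distribution}\ref{assump:item:subgaussian_data}, that the projection of $\bSigma$ onto the head eigenspace of $\Hb$ is spectrally close to the head block of $\Hb$, while the complementary tail block of $\bSigma$ acts like an isotropic effective ridge of size $\tilde\lambda/n$. Making these two statements quantitative---and ensuring the GD filter $(\Ib-\eta\bSigma)^{t}$ remains a contraction, i.e.\ $\eta\|\bSigma\|_2\le 1$ with high probability, which follows from $\eta<1/\|\Hb\|_2$ together with an operator-norm concentration bound $\|\bSigma\|_2\lesssim\|\Hb\|_2$---is where the bulk of the work lies, and is what distinguishes the analysis from the ridge case where the resolvent $(\bSigma+\lambda\Ib)^{-1}$ is handled directly.
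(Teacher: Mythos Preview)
Your bias--variance decomposition is correct and in fact coincides with the paper's: writing $\tilde\Ab:=\Ab\big(\Ib-(\Ib-\eta n^{-1}\Ab)^t\big)^{-1}$ so that $\hat\wb_t=\Xb^\top\tilde\Ab^{-1}\yb$, one checks that $\Ib-\Xb^\top\tilde\Ab^{-1}\Xb=(\Ib-\eta\bSigma)^t$ and $\Xb^\top\tilde\Ab^{-1}=n^{-1}\big(\Ib-(\Ib-\eta\bSigma)^t\big)\bSigma^\dagger\Xb^\top$, which recovers exactly your two traces. The route diverges after this point. You propose to bound the traces directly, splitting $\Hb$ at $k^*$ and redoing the benign-overfitting concentration arguments for the GD shrinkage profile from scratch. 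The paper instead proves a single operator sandwich (Lemma~\ref{lemma:tildeA-bounds}),
\[
\tfrac12\big(\Ab+\tfrac{n}{\eta t}\Ib\big)\preceq\tilde\Ab\preceq\Ab+\tfrac{2n}{\eta t}\Ib,
\]
which is nothing more than the scalar inequalities you already isolated, $(1-\eta\mu)^t\le(1+\eta t\mu)^{-1}$ and $1-(1-\eta\mu)^t\le\min\{1,\eta t\mu\}$, applied eigenvalue-by-eigenvalue on $\Ab$. Because the GD and ridge filters are both spectral functions of $\bSigma$ and hence commute, this sandwich immediately dominates the GD bias trace by the ridge bias trace with $\lambda=2n/(\eta t)$ and the GD variance trace by the ridge variance trace with $\lambda=n/(\eta t)$; Theorem~1 of \citet{tsigler2020benign} is then quoted verbatim. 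Your plan is sound, but it reproves the hard part: the entire $\Hb$-versus-$\bSigma$ misalignment you correctly identify as the main obstacle is already absorbed inside the cited ridge result, and the paper's only new ingredient is the elementary sandwich lemma. What your approach would buy is self-containment (no black-box citation), at the cost of reproducing a lengthy head/tail concentration argument that the reduction avoids entirely.
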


The bound presented in Theorem \ref{thm:GD_earlystop} is comparable to that for ridge regression established by \citet{tsigler2020benign} and will be much better than the bound of single-pass SGD when the signal-to-noise ratio is large \citep[Theorem 5.1]{zou2021benefits}, e.g., $\omega^2 \gg \sigma^2$.
In fact, Theorem \ref{thm:GD_earlystop} is proved via a reduction to ridge regression results (see Section \ref{sec:gd_risk} for more details).
In particular, the quantity $n/(\eta t)$ for GD is an analogy to the regularization parameter $\lambda$ for ridge regression \citep{yao2007early,raskutti2014early,wei2017early,ali2019continuous}.
As a final remark, the assumption that $\wb^*$ follows a Gaussian prior is the main concealing in Theorem \ref{thm:GD_earlystop} (which is not required by \citet{tsigler2020benign} for ridge regression).
The Gaussian prior on $\wb^*$ is known to allow a connection between early stopped GD with ridge regression \citep{ali2019continuous}. 
We conjecture that this assumption is not necessary and potentially removable.

Combining Theorems \ref{thm:risk_decomposition}, \ref{thm:upperbound_fluctuation}  and \ref{thm:GD_earlystop}, we obtain the following excess risk bound for multi-pass SGD in the interpolating least square problems:
\begin{corollary}\label{coro:expected_SGD_error}
Suppose that Assumptions \ref{assump:data_distribution}\ref{assump:item:subgaussian_data}, \ref{assump:item:well_specified_noise}, \ref{assump:item:gaussian_prior} and \ref{assump:item:interpolator} all hold.
Then with probability at least $1-1/\poly(n)$, it holds that
\begin{align*}
&\EE_{\mathrm{SGD}, \wb^*,\bepsilon}\big[\cE(\wb_t)\big]
\lesssim \omega^2 \cdot
     \Bigg(  \frac{\tilde{\lambda}^2 }{n^2 }  \cdot \sum_{i\le k^*} \frac{1}{\lambda_i} + \sum_{i>k^*} \lambda_i \Bigg)
    + \sigma^2\cdot \rbr{\frac{k^*}{n} + \frac{n}{\tilde{\lambda}^2} \sum_{i>k^*}\lambda_i^2 }\notag\\
&\quad +\big[\omega^2\tr(\Hb) +\sigma^2)\big]\eta\cdot \bigg[\log(t)\cdot\bigg(\tr(\Hb)\log(n)+\frac{k^*\log^{5/2}(n)}{n^{1/2}}\bigg)+\frac{\log^{5/2}(n)t\eta}{n^{1/2}}\cdot\sum_{i>k^*}\lambda_i\bigg)\bigg],
\end{align*}
where 
$k^* := \min\{k: n\lambda_{k+1} \le \frac{n}{\eta t} + \sum_{i > k} \lambda_i \}$ and 
$\tilde{\lambda} := \frac{n}{\eta t} + \sum_{i > k^*} \lambda_i$.
\end{corollary}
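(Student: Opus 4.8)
The plan is to assemble the three preceding results. Conditioning on the dataset and applying the risk decomposition of Theorem~\ref{thm:risk_decomposition} (which needs only Assumption~\ref{assump:data_distribution}\ref{assump:item:interpolator}), then taking expectation over the prior $\wb^*$ and the noise $\bepsilon$ (which commutes with $\EE_{\sgd}$ by linearity/Fubini), gives
\[
\EE_{\sgd,\wb^*,\bepsilon}\big[\risk(\wb_t)\big] = \EE_{\wb^*,\bepsilon}\big[\risk(\hat\wb_t)\big] + \EE_{\wb^*,\bepsilon}\big[\FlucError(\wb_t)\big].
\]
The first term is exactly the expected GD risk, so Theorem~\ref{thm:GD_earlystop} (on its high-probability event over $\Xb$) immediately produces the first two summands in the claimed bound, with the very same $k^*$ and $\tilde\lambda$. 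It remains to control the expected fluctuation error and show it matches the last bracketed term.

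For the fluctuation term I would invoke Theorem~\ref{thm:upperbound_fluctuation} with the free index chosen as $k^\dagger = k^*$. Pulling a factor $1/t$ out of the prefactor there rewrites the bound as
\[
\FlucError(\wb_t) \lesssim \frac{1}{t}\cdot\bigg[\log(t)\Big(\tr(\Hb)\log(n)+\tfrac{k^*\log^{5/2}(n)}{n^{1/2}}\Big)+\tfrac{\log^{5/2}(n)t\eta}{n^{1/2}}\sum_{i>k^*}\lambda_i\bigg]\cdot\min\big\{\|\hat\wb\|_2^2,\ t\eta\|\hat\wb\|_{\bSigma}^2\big\},
\]
in which the bracket is deterministic (it depends only on the population spectrum, $n$, $t$, $\eta$) and coincides with the bracket in the corollary. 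Since this prefactor is not random in $\wb^*,\bepsilon$, the expectation touches only the last factor, and I would bound it crudely by $\EE_{\wb^*,\bepsilon}[\min\{\cdot,\cdot\}] \le t\eta\cdot\EE_{\wb^*,\bepsilon}[\|\hat\wb\|_{\bSigma}^2]$; the $t\eta$ then cancels the $1/t$, leaving exactly the target form provided $\EE_{\wb^*,\bepsilon}[\|\hat\wb\|_{\bSigma}^2]\lesssim \omega^2\tr(\Hb)+\sigma^2$.

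The one genuinely new computation is this last bound. Here I would use the interpolation property (Assumption~\ref{assump:data_distribution}\ref{assump:item:interpolator}), $\Xb\hat\wb=\yb$, to write $\|\hat\wb\|_{\bSigma}^2 = n^{-1}\|\Xb\hat\wb\|_2^2 = n^{-1}\|\yb\|_2^2 = n^{-1}\sum_i y_i^2$. Under Assumptions~\ref{assump:data_distribution}\ref{assump:item:well_specified_noise} and~\ref{assump:item:gaussian_prior}, $\EE_{\wb^*,\bepsilon}[y_i^2]=\EE_{\wb^*}[\la\wb^*,\xb_i\ra^2]+\sigma^2 = \omega^2\|\xb_i\|_2^2+\sigma^2$, so that $\EE_{\wb^*,\bepsilon}[\|\hat\wb\|_{\bSigma}^2] = \omega^2\tr(\bSigma)+\sigma^2$, and a standard sub-Gaussian trace concentration argument (Assumption~\ref{assump:data_distribution}\ref{assump:item:subgaussian_data}) gives $\tr(\bSigma)\lesssim\tr(\Hb)$ on a high-probability event over $\Xb$. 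Intersecting this event with the high-probability events underlying Theorems~\ref{thm:upperbound_fluctuation} and~\ref{thm:GD_earlystop} via a union bound keeps the overall failure probability at $1/\poly(n)$. The main obstacle is mostly bookkeeping: making sure the deterministic-versus-random split of the Theorem~\ref{thm:upperbound_fluctuation} bound is clean so that $\EE_{\wb^*,\bepsilon}$ acts only on $\|\hat\wb\|_{\bSigma}^2$, and verifying that all three high-probability events are compatible under a single good event for $\Xb$; the algebra of matching $k^\dagger=k^*$ and cancelling $t\eta$ against $1/t$ is then routine.
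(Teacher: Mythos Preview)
Your proposal is correct and follows essentially the same route as the paper. The paper packages the computation of $\EE_{\wb^*,\bepsilon}[\|\hat\wb\|_{\bSigma}^2]$ as a separate lemma (noting $\la\Eb_0,\bSigma\ra=\|\hat\wb\|_{\bSigma}^2=n^{-1}\|\yb\|_2^2$ and then bounding via Young's inequality rather than your direct use of independence to kill the cross term), and then plugs it into Theorem~\ref{thm:upperbound_fluctuation} together with Theorem~\ref{thm:GD_earlystop}; your factor-of-$1/t$ rewriting and the choice $k^\dagger=k^*$ are exactly the algebra the paper leaves implicit.
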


\paragraph{Comparison with Existing Results.}
We now discuss differences and connections between our bound and existing ones for multi-pass SGD \citep{lin2017optimal,pillaud2018statistical,mucke2019beating}. 
First, we highlight that our bound is \emph{problem-dependent} in the sense that the bound is stated as a function of the spectrum of data covariance; 
in contrast, existing papers only provide a minimax analysis for multi-pass SGD.
Secondly, we rely on a different set of assumptions from the aforementioned papers. 
In particular, \citet{pillaud2018statistical} requires a \emph{source condition} on the data covariance, and \citet{lin2017optimal,mucke2019beating} requires an \emph{effective dimension} (defined by the data covariance) to be small, but our results are more general regarding the data covariance. 
Moreover, we assume $\wb^*$ follows a Gaussian prior (Assumption \ref{assump:data_distribution}\ref{assump:item:gaussian_prior}), but existing works require a \emph{source condition} on $\wb^*$, which are not directly comparable. 



The following corollary characterizes the risk of multi-pass SGD for data covariance with polynomially decaying spectrum.

\begin{corollary}\label{coro:SGD_error_polydecay}
Suppose that Assumptions \ref{assump:data_distribution}\ref{assump:item:subgaussian_data}, \ref{assump:item:well_specified_noise}, \ref{assump:item:gaussian_prior} and \ref{assump:item:interpolator} all hold.
Assume the spectrum of $\Hb$ decays polynomially, i.e., $\lambda_i = i^{-1-r}$ for some absolute constant $r>0$, then with probability at least $1-1/\poly(n)$, it holds that
\begin{align*}
\EE_{\wb^*,\bepsilon}[\cE(\hat\wb_t)]&\lesssim   \omega^2 \cdot
    (t\eta)^{-r/(r+1)}
    + \sigma^2\cdot \frac{(t\eta)^{1/(r+1)}}{n},\notag\\
\EE_{\sgd, \wb^*,\bepsilon}[\cE(\wb_t)]
&\lesssim\omega^2 \cdot
    (t\eta)^{-r/(r+1)}
    + \sigma^2\cdot \frac{(t\eta)^{1/(r+1)}}{n}\notag\\
    &\qquad+(\omega^2 + \sigma^2)\cdot \eta\cdot\log(t)\cdot\bigg[\log(n)+\frac{\log^{5/2}(n)}{n^{1/2}}\cdot (t\eta)^{1/(r+1)}\bigg].
\end{align*}
\end{corollary}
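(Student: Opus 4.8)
The plan is to obtain both displays by specializing the general bound of Corollary~\ref{coro:expected_SGD_error} (equivalently, Theorems~\ref{thm:upperbound_fluctuation} and~\ref{thm:GD_earlystop}) to the polynomial spectrum $\lambda_i = i^{-1-r}$ and carefully estimating the resulting spectral sums. A useful first reduction is that, because $r>0$, the trace $\tr(\Hb)=\sum_{i\ge 1} i^{-1-r}$ converges to an absolute constant, so $\tr(\Hb)\eqsim 1$. This immediately collapses the prefactor $\omega^2\tr(\Hb)+\sigma^2$ to $\omega^2+\sigma^2$ and the factor $\tr(\Hb)\log(n)$ to $\log(n)$, matching the shape of the claimed fluctuation term.

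The crux of the argument is to pin down the effective cutoff $k^*$ and the effective regularization $\tilde\lambda$. Using the integral estimate $\sum_{i>k}\lambda_i \eqsim k^{-r}$, I would solve the defining relation $n\lambda_{k^*}\eqsim \tilde\lambda$ self-consistently: substituting $\lambda_{k^*}\eqsim (k^*)^{-1-r}$ into $\tilde\lambda = n/(\eta t) + \sum_{i>k^*}\lambda_i$, one checks that in the regime $\eta t \lesssim n^{1+r}$ the term $n/(\eta t)$ dominates the tail sum, so that $\tilde\lambda \eqsim n/(\eta t)$ and hence $k^*\eqsim (n/\tilde\lambda)^{1/(1+r)}\eqsim (\eta t)^{1/(1+r)}$. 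Verifying which of the two contributions to $\tilde\lambda$ dominates---and thereby delineating the regime of validity of the clean formula---is the main obstacle: outside the stated regime ($\eta t \gtrsim n^{1+r}$) the estimates saturate at $k^*\eqsim n$ and $\tilde\lambda\eqsim n^{-r}$, in which case the displayed expression holds only as an upper bound rather than being order-tight.

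With $k^*\eqsim (\eta t)^{1/(1+r)}$ and $\tilde\lambda\eqsim n/(\eta t)$ in hand, the remaining work is routine summation: $\sum_{i\le k^*}\lambda_i^{-1} \eqsim (k^*)^{2+r}$, $\sum_{i>k^*}\lambda_i\eqsim (k^*)^{-r}\eqsim (\eta t)^{-r/(1+r)}$, and $\sum_{i>k^*}\lambda_i^2\eqsim (k^*)^{-1-2r}$. Plugging these into the GD risk of Theorem~\ref{thm:GD_earlystop}, the two bias pieces $\tilde\lambda^2 n^{-2}\sum_{i\le k^*}\lambda_i^{-1}$ and $\sum_{i>k^*}\lambda_i$ both reduce to $(\eta t)^{-r/(1+r)}$, and the two variance pieces $k^*/n$ and $n\tilde\lambda^{-2}\sum_{i>k^*}\lambda_i^2$ both reduce to $(\eta t)^{1/(1+r)}/n$; this yields the first display $\EE_{\wb^*,\bepsilon}[\cE(\hat\wb_t)]\lesssim \omega^2 (t\eta)^{-r/(r+1)}+\sigma^2 (t\eta)^{1/(r+1)}/n$.

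For the SGD display I would substitute the same estimates into the fluctuation contribution of Corollary~\ref{coro:expected_SGD_error}. The key simplification is that $\frac{t\eta}{n^{1/2}}\sum_{i>k^*}\lambda_i \eqsim \frac{t\eta}{n^{1/2}}(\eta t)^{-r/(1+r)} = \frac{(\eta t)^{1/(1+r)}}{n^{1/2}}$, which is of the same order as the term $\frac{k^*}{n^{1/2}}=\frac{(\eta t)^{1/(1+r)}}{n^{1/2}}$; these two can therefore be merged, and the stray $\log(t)$ factor attached to the first group can be pulled outside the whole bracket as an upper bound since $\log(t)\gtrsim 1$. This reproduces the fluctuation contribution $(\omega^2+\sigma^2)\,\eta\,\log(t)\big[\log(n)+\frac{\log^{5/2}(n)}{n^{1/2}}(t\eta)^{1/(r+1)}\big]$, and adding it to the GD bound from the previous paragraph gives the second display.
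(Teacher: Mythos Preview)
Your proposal is correct and follows essentially the same route as the paper's proof: compute $k^*\eqsim(\eta t)^{1/(1+r)}$ and $\tilde\lambda\eqsim n/(\eta t)$ from the defining relation, evaluate the three spectral sums $\sum_{i\le k^*}\lambda_i^{-1}$, $\sum_{i>k^*}\lambda_i$, $\sum_{i>k^*}\lambda_i^2$ via integral comparison, and plug into Corollary~\ref{coro:expected_SGD_error}. Your discussion of the regime $\eta t\lesssim n^{1+r}$ (needed for $n/(\eta t)$ to dominate the tail in $\tilde\lambda$) is an extra bit of care the paper's proof omits, but since the statement is only an upper bound this does not change the argument.
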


Corollary \ref{coro:SGD_error_polydecay} provides concrete excess risk bounds for SGD and GD, based on which we can make a comparison between SGD and GD in terms of their iteration and gradient complexities.
For simplicity, in the following discussion, we assume that $\omega^2 \eqsim \sigma^2 \eqsim 1$.
Then choosing $t \eta \eqsim n$ minimizes the upper bound for GD risk and yields the $O(n^{-r / (r+1)})$ rate. Here GD can employ a constant stepsize.
Similarly, SGD can match the GD's rate, $O(n^{-r / (r+1)})$, by setting $ t\eta \eqsim n$ and 
\begin{equation}\label{eq:sgd_opt_lr}
    \eta \lesssim  \log^{-1} (t) \cdot \min\{\log^{-1}(n)\cdot n^{-\frac{r}{r+1}}, \ \log^{-\frac{5}{2}}(n) \cdot n^{ -\frac{1}{2}  })\}.
\end{equation}
The above stepsize choice implies that that SGD (fixed stepsize, last iterate) can only cooperate with small stepsize.

\paragraph{Iteration Complexity.}
We first compare GD and SGD in terms of the iteration complexity.
To reach the optimal rate, GD can employ a constant stepsize and set the number of iterates to be $t \eqsim n$.
However, in order to shelve the fluctuation error, the stepsize of SGD cannot be large, as required by \eqref{eq:sgd_opt_lr}.
More precisely, in order to match the optimal rate, SGD needs to use a small stepsize, $\eta \eqsim n /t$, with a large number of iterates,
\begin{equation*}
    t \eqsim \begin{cases}
    \log(n) \cdot  n^{1+\frac{r}{r+1}} = \tilde{\cO}( n^{1+\frac{r}{r+1}}), & r>1; \\
    \log^{3.5}(n) \cdot n^{1.5} = \tilde{\cO}(n^{1.5}), & r \le 1.
    \end{cases}
\end{equation*}
It can be seen that the iteration complexity of SGD is much worse than that of GD.
This result is empirically verified by Figure \ref{fig:iter-comparison} (a).

\paragraph{Gradient Complexity.}
We next compare  GD and SGD in terms of the gradient complexity.
Recall that for each iterate, GD computes $n$ gradients but SGD only computes $1$ gradient. 
Therefore, to reach the optimal rate, the total number of gradient computed by GD needs to be $\Theta( n^2) $, but that computed by SGD is only $\tilde{\cO} (n^{\max\{(2r+1)/ (r+1), 1.5 \}})$.
Thus, the gradient complexity of SGD is better than that of GD by a factor of $\tilde \cO(n^{\min\{0.5, 1/(r+1)\}})$.
This result is empirically verified by Figure \ref{fig:iter-comparison} (b).

\begin{figure}
    \centering
    \subfigure[\small{Iteration Complexity vs.  Risk}]{\includegraphics[width=0.45\textwidth]{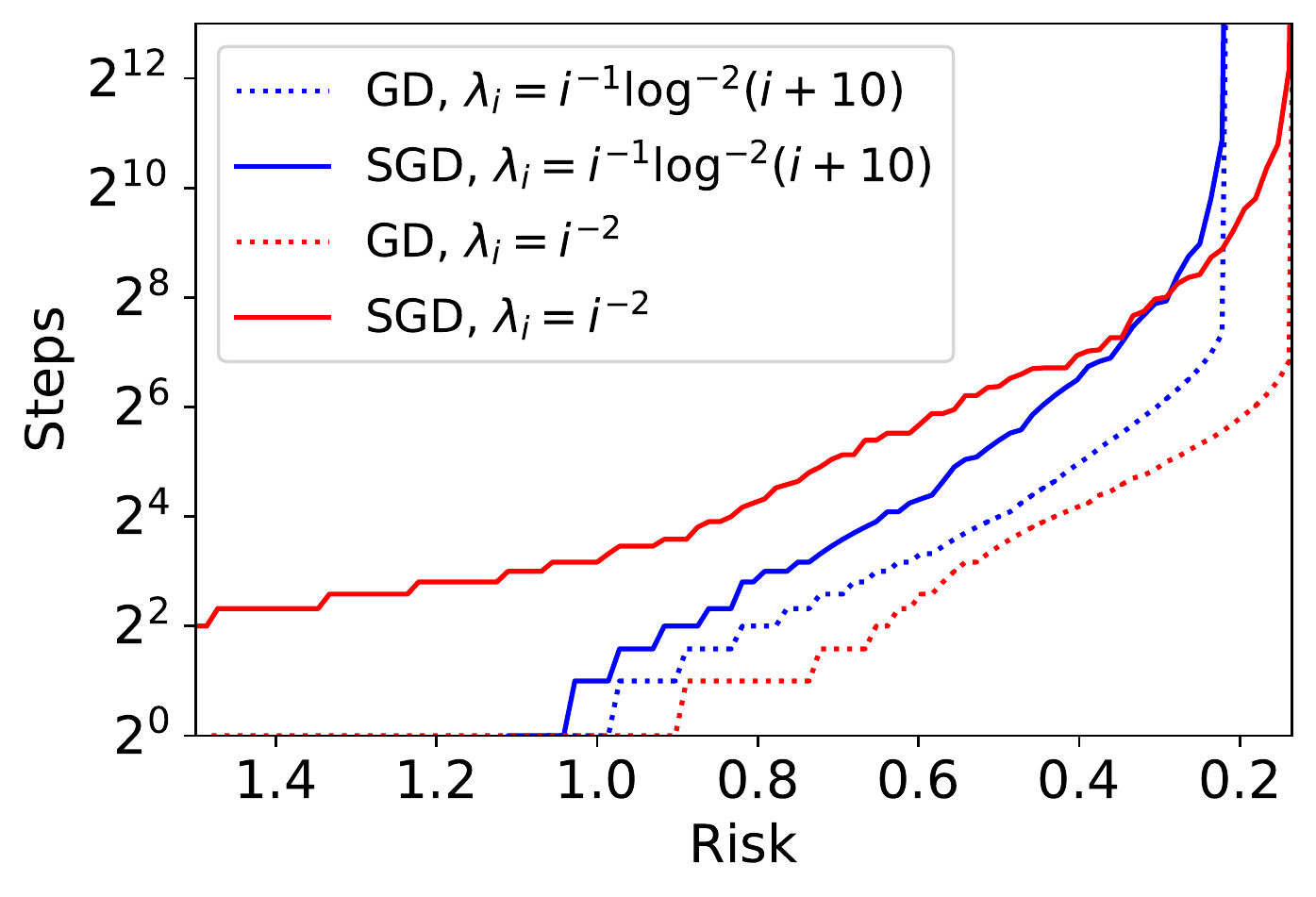}}
      \subfigure[\small{Gradient Complexity vs.  Risk}]{\includegraphics[width=0.45\textwidth]{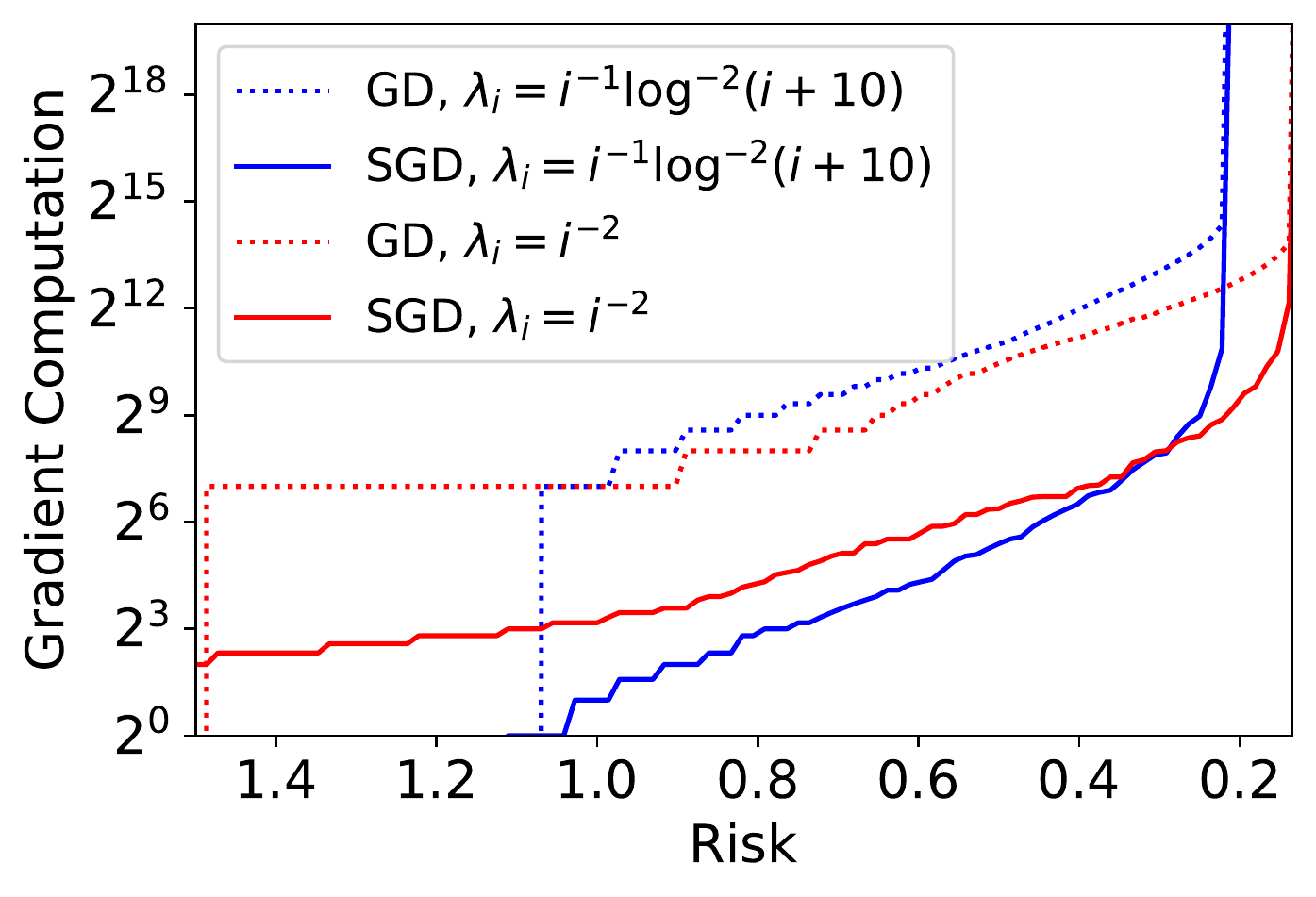}}
    \caption{\small 
      Iteration and gradient complexity comparison between SGD and GD.
      The curves report the minimum number of steps/gradients for each algorithm (with an optimally tuned stepsize) to achieve a targeted risk. Experiment setup is the same as that in Figure \ref{fig:risk-comparison}.
      }
    \label{fig:iter-comparison}
    \vspace{-.6cm}
\end{figure}

\section{Overview of the Proof Technique}

Our proof technique is inspired by the operator methods for analyzing single-pass SGD \citep{bach2013non,dieuleveut2017harder,jain2017markov,jain2017parallelizing,neu2018iterate,ge2019step,zou2021benign,wu2021last}.
In particular, they track an error \emph{matrix}, $(\wb_t-\wb^*)\otimes(\wb_t-\wb^*)$ that keeps richer information than the error norm $\|\wb_t-\wb^*\|_2^2$. 
For single-pass SGD where each data is used only once, the resulted iterates enjoy a simple dependence on history that allows an easy calculation of the expected error matrix (with respect to the randomness of data generation).
However for multi-pass SGD, a data might be used multiple times, which prevents us from tracking the expected error matrix directly.
Instead, a trackable analogy to the error matrix is the \emph{empirical error matrix}, $(\wb_t- \hat \wb)\otimes(\wb_t- \hat \wb)$ where $\hat{\wb}$ is the minimum norm interpolator.
More precisely, note that
\begin{align}\label{eq:update_sgd_error_vector}
\wb_{t+1} -\hat\wb&= \wb_t -\hat\wb -\eta\cdot (\xb_{i_t}\xb_{i_t}^\top\wb_t - \xb_{i_t}\xb_{i_t}^\top\hat \wb) = (\Ib - \eta \xb_{i_t}\xb_{i_t}^\top) (\wb_{t} -\hat\wb).
\end{align}
Therefore the expected (over the algorithm's randomness) empirical error matrix enjoy a simple update rule:
\begin{align*}
\text{let}\ \Eb_t := \EE_{\sgd}\big[(\wb_{t} -\hat\wb)(\wb_t -\hat\wb)^\top\big],\ \text{then}\ \Eb_{t+1} = \EE_{i_t} \big[(\Ib - \eta \xb_{i_t} \xb_{i_t}^\top ) \Eb_t (\Ib - \eta \xb_{i_t} \xb_{i_t}^\top )\big].
\end{align*}
Let $\bSigma := \frac{1}{n} \Xb^\top \Xb$ be the empirical covariance matrix.
We then follow the operator method \citep{zou2021benign} to define the following operators on symmetric matrices:
\[
\cG := (\Ib - \eta \bSigma) \otimes (\Ib - \eta \bSigma),\ \ 
\cM := \EE_{\sgd} [\xb_{i_t}\otimes \xb_{i_t}\otimes \xb_{i_t} \otimes \xb_{i_t}], \ \ 
\tilde{\cM} := \bSigma \otimes \bSigma.
\]
One can verify that, 
for a symmetric matrix $\Jb$, the following holds:
\begin{align*}
\cG\circ\Jb :&= (\Ib-\eta\bSigma)\Jb(\Ib-\eta\bSigma),\ \   \cM\circ\Jb:= \EE_{\sgd} [\xb_{i_t}\xb_{i_t}^\top\Jb\xb_{i_t}\xb_{i_t}^\top],\ \ \tilde \cM\circ\Jb:=\bSigma\Jb\bSigma.
\end{align*}
Moreover, the following properties of the defined operators are essential in the subsequent analysis:
\begin{itemize}[leftmargin=*]
    \item  \textbf{PSD mapping:} for every PSD matrix $\Jb$, $\cM\circ\Jb$, $(\cM-\tilde \cM)\circ\Jb$ and $\cG\circ\Jb$ are all PSD matrices.
    \item \textbf{Commutative property:} for two  PSD matrices $\Bb_1$ and $\Bb_2$, we have 
    \begin{align*}
\la\cG \circ\Bb_1, \Bb_2\ra = \la\Bb_1, \cG \circ\Bb_2\ra,\ \la\cM \circ\Bb_1, \Bb_2\ra = \la\Bb_1, \cM \circ\Bb_2\ra, \ \la\tilde\cM \circ\Bb_1, \Bb_2\ra = \la\Bb_1, \tilde\cM \circ\Bb_2\ra
\end{align*}
\end{itemize}
Based on these operators, we can obtain a close form update rule for $\Eb_t$:
\begin{align}
\Eb_{t} &= \EE_{i_{t-1}} (\Ib - \eta \xb_{i_{t-1}} \xb_{i_{t-1}}^\top ) \Eb_t (\Ib - \eta \xb_{i_{t-1}} \xb_{i_{t-1}}^\top )\notag\\
& = \cG\circ\Eb_{t-1} + \eta^2\cdot(\cM-\tilde\cM)\circ\Eb_{t-1}\notag\\
&=\underbrace{\cG^{t}\circ\Eb_0}_{\bTheta_1} + \underbrace{\eta^2\cdot \sum_{k=0}^{t-1}\cG^{t-1-k}\circ(\cM-\tilde\cM)\circ \Eb_{k}}_{\bTheta_2}.\label{eq:update_form_Et}
\end{align}
Here the first term 
\[\bTheta_1 := (\Ib-\eta\bSigma)^t\Eb_0(\Ib-\eta\bSigma)^t = (\hat{\wb}_t - \hat{\wb})(\hat{\wb}_t - \hat{\wb})^\top\] 
is exactly the error matrix caused by GD iterates (with stepsize $\eta$ and iteration number $t$), and the second term $\bTheta_2$ is a \textit{fluctuation matrix} that captures the deviation of a SGD iterate $\wb_t$ with respect to a corresponding GD iterate $\hat{\wb}_t$.
We remark that the expected error matrix $\Eb_t$ contains all information of $\wb_t$.

We next prove Theorem \ref{thm:risk_decomposition}, from where we will see the usage of $\Eb_t$.


\subsection{Risk Decomposition: Proof of Theorem \ref{thm:risk_decomposition}}
The following fact is clear from the update rule \eqref{eq:update_sgd_error_vector}.
\begin{fact}\label{fact:sgd_expectation}
The GD iterates satisfy
\( \hat{\wb}_{t+1} -\hat\wb = (\Ib - \eta \bSigma) (\hat{\wb}_{t} -\hat\wb) \) and  \( \EE_{\sgd}[\wb_t-\hat\wb] = \hat\wb_t - \hat\wb \).
\end{fact}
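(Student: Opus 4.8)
The plan is to derive both identities directly from their respective update rules, using the interpolation property (Assumption~\ref{assump:data_distribution}\ref{assump:item:interpolator}) to eliminate the labels $y_i$ in favor of $\hat\wb$, and then to relate the two recursions by an induction on $t$.

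First I would establish the GD recursion. Substituting $y_i = \xb_i^\top\hat\wb$ into the GD update collapses the batched gradient, since $\frac{1}{n}\sum_{i=1}^n \xb_i(\xb_i^\top\hat\wb_t - y_i) = \frac{1}{n}\sum_{i=1}^n \xb_i\xb_i^\top(\hat\wb_t-\hat\wb) = \bSigma(\hat\wb_t-\hat\wb)$, using $\frac{1}{n}\sum_{i=1}^n \xb_i\xb_i^\top = \frac{1}{n}\Xb^\top\Xb = \bSigma$. Subtracting $\hat\wb$ from both sides of the resulting update immediately yields $\hat\wb_{t+1}-\hat\wb = (\Ib-\eta\bSigma)(\hat\wb_t-\hat\wb)$, the first claimed identity.

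For the second identity I would work with the error vector $\wb_t - \hat\wb$ and proceed by induction. The base case is immediate: both $\wb_0$ and $\hat\wb_0$ equal $\boldsymbol{0}$, so $\EE_{\sgd}[\wb_0 - \hat\wb] = -\hat\wb = \hat\wb_0 - \hat\wb$. For the inductive step I start from the already-derived per-step identity \eqref{eq:update_sgd_error_vector}, namely $\wb_{t+1}-\hat\wb = (\Ib-\eta\xb_{i_t}\xb_{i_t}^\top)(\wb_t-\hat\wb)$, and condition on the history $i_0,\dots,i_{t-1}$ (equivalently, on $\wb_t$). Since the fresh index $i_t$ is drawn uniformly from $\{1,\dots,n\}$ independently of $\wb_t$, the conditional expectation factorizes and $\EE_{i_t}[\xb_{i_t}\xb_{i_t}^\top] = \frac{1}{n}\sum_{i=1}^n \xb_i\xb_i^\top = \bSigma$, giving $\EE[\wb_{t+1}-\hat\wb\mid\wb_t] = (\Ib-\eta\bSigma)(\wb_t-\hat\wb)$. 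Taking a further expectation and invoking the inductive hypothesis turns the right-hand side into $(\Ib-\eta\bSigma)(\hat\wb_t-\hat\wb)$, which by the first identity equals $\hat\wb_{t+1}-\hat\wb$, closing the induction.

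The only subtle point — and the step I would state most carefully — is the conditioning argument that licenses pulling the averaged linear operator $\EE_{i_t}[\Ib - \eta\xb_{i_t}\xb_{i_t}^\top]$ outside its product with $\wb_t - \hat\wb$. This relies entirely on the independence of the sampled index $i_t$ from all prior iterates; because multi-pass SGD draws each $i_t$ afresh and uniformly, this independence holds and the factorization is valid. Everything else is routine linear algebra driven by the interpolation identity $y_i = \xb_i^\top\hat\wb$.
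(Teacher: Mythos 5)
Your proof is correct and matches the paper's (implicit) argument: the paper treats this fact as immediate from the update rule \eqref{eq:update_sgd_error_vector}, which is exactly your computation --- eliminate the labels via the interpolation identity $y_i = \xb_i^\top\hat\wb$, and take expectations of the per-step recursion using $\EE_{i_t}[\xb_{i_t}\xb_{i_t}^\top] = \bSigma$ together with the independence of the fresh index $i_t$ from the history. Your explicit induction and conditioning step simply spell out the details the paper leaves to the reader.
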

Based on Fact \ref{fact:sgd_expectation} and \eqref{eq:update_form_Et}, we have
\begin{align*}
&\EE_{\sgd}[(\wb_t - \wb^*)(\wb_t - \wb^*)^\top] \notag\\
&=  \Eb_t + (\hat\wb - \wb^*)(\hat\wb_t - \hat\wb)^\top + (\hat\wb_t - \hat\wb)(\hat\wb - \wb^*)^\top + (\hat\wb-\wb^*)(\hat\wb-\wb^*)^\top\notag\\
&= \bTheta_1 +  (\hat\wb - \wb^*)(\hat\wb_t - \hat\wb)^\top + (\hat\wb_t - \hat\wb)(\hat\wb - \wb^*)^\top + (\hat\wb-\wb^*)(\hat\wb-\wb^*)^\top +\bTheta_2\notag\\
& = (\hat\wb_t - \wb^*)(\hat\wb_t - \wb^*)^\top + \bTheta_2,
\end{align*}
where $\bTheta_1$ and $\bTheta_2$ are defined in \eqref{eq:update_form_Et} and the last equality is due to $\bTheta_1 = (\hat\wb_t-\hat\wb)(\wb_t-\hat\wb)^\top$.
Also note that
\begin{align*}
\EE_{\sgd}[\cE(\wb_t)] = \frac{1}{2}\EE_{\sgd}\big[\|\wb_t - \wb^*\|_\Hb^2\big] = \frac{1}{2}\big\la\EE_{\sgd}[(\wb_t - \wb^*)(\wb_t - \wb^*)^\top], \Hb\big\ra.
\end{align*}
Combining these two inequalities proves Theorem \ref{thm:risk_decomposition}:
\begin{align} \label{eq:risk_decomposition_detailed}
\EE_{\sgd} [ \cE(\wb_t)] = \underbrace{\frac{1}{2}\|\hat\wb_t - \wb^*\|_{\Hb}^2}_{\text{GD error}} +    \underbrace{\frac{\eta^2}{2}\cdot \sum_{k=0}^{t-1}\big\la\cG^{t-1-k}\circ(\cM-\tilde\cM)\circ \Eb_{k}, \Hb\big\ra}_{\text{Fluctuation error }}.
\end{align}
Finally, the fluctuation error is also positive because both $\cG$ and $\cM - \tilde\cM$ are PSD mappings.

\subsection{Bounding the Fluctuation Error: Proof of Theorem \ref{thm:upperbound_fluctuation}}\label{sec:bounding_fluc_err}
There are several challenges in the analysis of fluctuation error: (1) it is difficult to characterize the matrix $(\cM  - \tilde \cM) \circ\Eb_k$ since the matrix $\Eb_k$ is unknown; (2) the operator $\cG$ involves an exponential decaying term with respect to the empirical covariance matrix $\bSigma$, which does not commute with the population covariance matrix $\Hb$. 

To address the first problem, we will use the PSD mapping and 
commutative property of the operators $\tilde \cM$, $\cG$, $\cM$ and obtain the following result.
\begin{align}\label{eq:formula_fluctuationErr}
\mathrm{FluctuationError} \le \frac{\eta^2}{2}\cdot \sum_{k=0}^{t-1}\la\cM\circ\cG^{t-1-k}\circ\Hb, \Eb_k\ra.
\end{align}
Now, the input of the operator $\cM\circ \cG^{t-1-k}$ will not be an unknown matrix but a fixed one (i.e., $\Hb$), and the remaining effort will be focusing on characterizing  $\cM\circ \cG^{k}\circ\Hb$.
Applying the definitions of $\cM$ and $\cG$ implies
\begin{align*}
\cM\circ\cG^k\circ\Hb = \EE_{i}\big[\xb_i\xb_i^\top (\Ib-\eta\bSigma)^k\Hb(\Ib-\eta\bSigma)^k\xb_i\xb_i^\top\big].
\end{align*}
Then our idea is to first prove an uniform upper bound on the quantity $\xb_i^\top (\Ib-\eta\bSigma)^k\Hb(\Ib-\eta\bSigma)^k\xb_i$ for all $i\in[n]$ (e.g., denoted as $U(k, \eta, n)$), then it can be naturally obtained that
\begin{align}\label{eq:upperbound_fourthmoment}
\cM\circ\cG^k\circ\Hb\preceq U(k,  \eta, n)\cdot \EE_{i}[\xb_i\xb_i^\top] = U(k,\eta, n)\cdot\bSigma,
\end{align}
then we will only need to characterize the inner product $\la\Eb_k,\bSigma\ra$ in \eqref{eq:formula_fluctuationErr}, which can be understood as the optimization error at the $k$-th iteration.

In order to precisely characterize $U(k, \eta, n)$, we encounter the second problem that the population covariance $\Hb$ and empirical covariance $\bSigma$ are not commute, thus the exponential decaying term $(\Ib-\eta\bSigma)^k$ will not be able to fully decrease $\Hb$ since some components of $\Hb$ may lie in the small eigenvalue directions of $\bSigma$. Therefore, we consider the following decomposition
\begin{align*}
\xb_i^\top (\Ib-\eta\bSigma)^k\Hb(\Ib-\eta\bSigma)^k\xb_i = \underbrace{\xb_i^\top (\Ib-\eta\bSigma)^k\bSigma(\Ib-\eta\bSigma)^k\xb_i}_{\Theta_1} + \underbrace{\xb_i^\top (\Ib-\eta\bSigma)^k(\Hb-\bSigma)(\Ib-\eta\bSigma)^k\xb_i}_{\Theta_2}.
\end{align*}
Then for $\Theta_1$, it can be seen that the decaying term $(\Ib-\eta\bSigma)^k$ is commute with $\bSigma$ thus can successfully make it decrease. For $\Theta_2$, we  will view the difference $\Hb-\bSigma$ as the component of $\Hb$ that cannot be effectively decreased by $(\Ib-\eta\bSigma)^k$, which will be small  as $n$ increases.

More specifically, we can get the following upper bound on $\Theta_1$.
\begin{lemma}\label{lemma:upperbound_Theta1}
If the stepsize satisfies $\gamma\le c/\tr(\Hb)$ for some small absolute constant $c$, then with probability at least $1-1/\poly(n)$, it holds that
\begin{align*}
\Theta_1 \lesssim \tr(\Hb) \cdot \log(n)\cdot \min\bigg\{\frac{1}{(k+1)\eta}, \|\Hb\|_2\bigg\}. 
\end{align*}
\end{lemma}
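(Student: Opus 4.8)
The plan is to reduce the scalar quadratic form $\Theta_1$ to a product of an operator norm and a vector norm that can be controlled separately. The first observation is that $(\Ib-\eta\bSigma)^k$ and $\bSigma$ are both functions of $\bSigma$ and hence commute, so the central matrix simplifies to $(\Ib-\eta\bSigma)^k\bSigma(\Ib-\eta\bSigma)^k = \bSigma(\Ib-\eta\bSigma)^{2k}$, which is PSD. I would then bound
\[
\Theta_1 = \xb_i^\top\bSigma(\Ib-\eta\bSigma)^{2k}\xb_i \le \big\|\bSigma(\Ib-\eta\bSigma)^{2k}\big\|_2\cdot\|\xb_i\|_2^2.
\]
A useful feature of this split is that it decouples the two random objects and entirely sidesteps the statistical dependence between $\xb_i$ and $\bSigma$ (recall $\xb_i$ is one of the summands defining $\bSigma$): the operator norm is a deterministic function of $\bSigma$, while $\|\xb_i\|_2^2$ is treated on its own.

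For the operator norm I would pass to the spectrum of $\bSigma$ and study the scalar map $g(x) = x(1-\eta x)^{2k}$, where $x$ denotes a generic eigenvalue of $\bSigma$. One has $g'(x) = (1-\eta x)^{2k-1}\big(1-(2k+1)\eta x\big)$, so once the argument is confined to $\eta x\le 1$, the maximum is attained at $x^\star = 1/((2k+1)\eta)$ with value $g(x^\star)\le 1/((2k+1)\eta)$. Since $g$ is increasing on $[0,x^\star]$ and dominated by $x$ throughout, this yields $\big\|\bSigma(\Ib-\eta\bSigma)^{2k}\big\|_2 = \max_x g(x) \lesssim \min\{\|\bSigma\|_2,\ 1/((k+1)\eta)\}$. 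To legitimize this computation I must first certify that every eigenvalue of $\eta\bSigma$ lies in $[0,1]$ (otherwise $(1-\eta x)^{2k}$ can blow up); this follows from the stepsize condition $\eta\le c/\tr(\Hb)$ together with $\|\bSigma\|_2\le\tr(\bSigma)$ and the concentration $\tr(\bSigma)\lesssim\tr(\Hb)$.

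It remains to control the two factors with high probability, uniformly over $i\in[n]$. The quantity $\|\xb_i\|_2^2$ is a quadratic form in the whitened vector $\Hb^{-1/2}\xb_i$, whose coordinates are independent and $1$-subGaussian by Assumption \ref{assump:data_distribution}\ref{assump:item:subgaussian_data}, so a Hanson--Wright / sub-exponential Bernstein bound gives $\|\xb_i\|_2^2\lesssim\tr(\Hb)+\sqrt{\log n}\,\|\Hb\|_F+\log n\,\|\Hb\|_2\lesssim\tr(\Hb)\log n$, and a union bound over the $n$ indices preserves the $1-1/\poly(n)$ guarantee. For the operator norm I would then replace $\|\bSigma\|_2$ by $\|\Hb\|_2$ through a covariance-concentration inequality for subGaussian designs. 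Combining the spectral bound, the uniform norm bound, and $\|\bSigma\|_2\lesssim\|\Hb\|_2$ delivers $\Theta_1\lesssim\tr(\Hb)\log n\cdot\min\{\|\Hb\|_2,\ 1/((k+1)\eta)\}$, as claimed.

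I expect the replacement $\|\bSigma\|_2\lesssim\|\Hb\|_2$ (and the companion estimate $\tr(\bSigma)\lesssim\tr(\Hb)$ used to certify $\eta\|\bSigma\|_2\le1$) to be the main obstacle. In the heavily overparameterized regime $d>n$ the empirical covariance $\bSigma$ is far from $\Hb$ entrywise, so only a matrix-concentration inequality that is sensitive to the effective rank $\tr(\Hb)/\|\Hb\|_2$ --- of Koltchinskii--Lounici / \citet{bartlett2020benign} type --- is sharp enough to conclude $\|\bSigma\|_2\lesssim\|\Hb\|_2$ rather than the much weaker $\|\bSigma\|_2\lesssim\tr(\Hb)/n$. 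Establishing this with probability $1-1/\poly(n)$ and propagating the failure probability through the union bound over $i$ is the technically delicate part; everything else is elementary spectral calculus and standard subGaussian concentration.
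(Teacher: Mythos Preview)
Your proposal is correct and follows essentially the same route as the paper: simplify $(\Ib-\eta\bSigma)^k\bSigma(\Ib-\eta\bSigma)^k$ to $\bSigma(\Ib-\eta\bSigma)^{2k}$, bound it in operator norm by $\min\{\|\bSigma\|_2,\ 1/((k+1)\eta)\}$, control $\|\xb_i\|_2^2\lesssim\tr(\Hb)\log n$ via sub-exponential concentration with a union bound over $i\in[n]$, and convert $\|\bSigma\|_2$ to $\|\Hb\|_2$ via the effective-rank covariance concentration from \citet{bartlett2020benign}. The only cosmetic difference is that the paper invokes the scalar inequality $(1-x)^k\le 1/[(k+1)x]$ directly rather than optimizing $g(x)=x(1-\eta x)^{2k}$ by calculus, and it handles the replacement $\|\bSigma\|_2\lesssim\|\Hb\|_2$ under an implicit normalization $\lambda_1=\Theta(1)$ --- your caution about this step is well placed, but the tool you identify (Koltchinskii--Lounici / \citet{bartlett2020benign}) is exactly what the paper uses.
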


For $\Theta_2$, we will rewrite $\xb_i$ as $\eb_i^\top\Xb$ where $\eb_i\in\RR^n$ and $\Xb\in\RR^{n\times d}$, then
\begin{align}\label{eq:upperbound_Theta2_main}
\Theta_2 &= \eb_i^\top \Xb(\Ib-\eta\bSigma)^k(\Hb-\bSigma)(\Ib-\eta\bSigma)^k\Xb^\top\eb_i\notag\\
&\le \|\eb_i^\top\Xb(\Ib-\eta\bSigma)^k\|_2^2\cdot\|\Hb-\bSigma\|_2.
\end{align}
Then since $\Xb$ and $\bSigma$ have the same column eigenspectrum, we can fully unleash the decaying power of the term $(\Ib-\eta\bSigma)^k$ on $\Xb$. Further note the that the row space of $\Xb$ is uniform distributed (corresponding to the index of training data), which is independent of $\eb_i$. This implies that we can adopt standard concentration arguments with covering on $n$ fixed vectors $\{\eb_i\}_{i=1,\dots,n}$ to prove a sharp high probability upper bound (compared to the naive worst-case upper bound). Consequently, we state the upper bound on $\Theta_2$ in the following lemma.
\begin{lemma}\label{lemma:upperbound_Theta2}
For every $i\in[n]$, we have with probability at least $1-1/\poly(n)$, the following holds for every $k^*\in[d]$,
\begin{align}
\Theta_2\lesssim \frac{ \log^{5/2}(n)}{n^{1/2}}\cdot  \bigg(\frac{k^*}{(k+1)\eta} + \sum_{i>k^*}\lambda_i\bigg).
\end{align}
\end{lemma}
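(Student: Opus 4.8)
The plan is to bound the two factors appearing in \eqref{eq:upperbound_Theta2_main} separately. The second factor, $\|\Hb-\bSigma\|_2$, is the deviation of the empirical covariance from the population covariance; under the sub-Gaussian Assumption \ref{assump:data_distribution}\ref{assump:item:subgaussian_data} it is controlled by a standard matrix-concentration inequality (as in \citet{bartlett2020benign}), and with probability at least $1-1/\poly(n)$ it is of order $\mathrm{polylog}(n)/\sqrt{n}$. This factor supplies the overall $n^{-1/2}$ decay in the claimed bound. The remaining, more delicate task is to bound the first factor, $\|\eb_i^\top\Xb(\Ib-\eta\bSigma)^k\|_2^2$, uniformly over $i\in[n]$ by the bracketed quantity $k^*/((k+1)\eta)+\sum_{i>k^*}\lambda_i$.

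For the first factor, I would rewrite it as the quadratic form $\|\eb_i^\top\Xb(\Ib-\eta\bSigma)^k\|_2^2=\xb_i^\top(\Ib-\eta\bSigma)^{2k}\xb_i$. Diagonalizing $\bSigma=\sum_j\hat\lambda_j\,\vb_j\vb_j^\top$ gives
\[
\xb_i^\top(\Ib-\eta\bSigma)^{2k}\xb_i=\sum_j(1-\eta\hat\lambda_j)^{2k}\,\la\xb_i,\vb_j\ra^2.
\]
The key deterministic identity is that, averaged over the index $i$, this equals the target sum: since $\sum_{i=1}^n\la\xb_i,\vb_j\ra^2=n\,\vb_j^\top\bSigma\vb_j=n\hat\lambda_j$, we get $\frac1n\sum_{i=1}^n\xb_i^\top(\Ib-\eta\bSigma)^{2k}\xb_i=\sum_j\hat\lambda_j(1-\eta\hat\lambda_j)^{2k}$. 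I would then split this sum at the index $k^*$: for the leading $k^*$ terms use the elementary inequality $\max_{x\ge 0}x(1-\eta x)^{2k}\le 1/((2k+1)\eta)\lesssim 1/((k+1)\eta)$ (valid once $\eta\|\bSigma\|_2\le 1$, which follows from $\eta\le c/\tr(\Hb)$), contributing $k^*/((k+1)\eta)$; for the tail use $(1-\eta\hat\lambda_j)^{2k}\le 1$, contributing $\sum_{j>k^*}\hat\lambda_j$. A separate empirical-to-population eigenvalue comparison (again via sub-Gaussian concentration) converts $\sum_{j>k^*}\hat\lambda_j$ into $\sum_{i>k^*}\lambda_i$ up to constants, yielding the bracketed bound for the average over $i$.

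Finally, to upgrade the bound on the average over $i$ to one holding for every fixed $i$, I would exploit the observation highlighted before the lemma: the row index $i$ (encoded by the fixed vector $\eb_i$) is independent of the unordered collection of data points, and hence of the eigenstructure of $\bSigma$. Concretely, a leave-one-out decoupling replaces $\bSigma$ by $\bSigma_{-i}$ (the empirical covariance omitting $\xb_i$), so that $\xb_i^\top(\Ib-\eta\bSigma_{-i})^{2k}\xb_i$ becomes a quadratic form in a sub-Gaussian vector against a matrix independent of it; a Hanson--Wright concentration then pins it to its mean, and a union bound over the $n$ indices costs only a $\log n$ factor (this is the ``covering on $n$ fixed vectors''). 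The rank-one gap between $\bSigma$ and $\bSigma_{-i}$ is absorbed into lower-order terms. I expect this decoupling to be the main obstacle: because $\xb_i$ itself enters $\bSigma$, the quadratic form and its own matrix are correlated, and one must argue that the leave-one-out perturbation and the residual $\Hb-\bSigma$ are simultaneously negligible uniformly in $i$. Combining the uniform bound on the first factor with the $\mathrm{polylog}(n)/\sqrt{n}$ bound on $\|\Hb-\bSigma\|_2$, and collecting all logarithmic factors into $\log^{5/2}(n)$, yields the claimed estimate.
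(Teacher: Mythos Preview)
Your treatment of the two factors in \eqref{eq:upperbound_Theta2_main} starts correctly: the bound on $\|\Hb-\bSigma\|_2$ and the trace identity
\[
\frac{1}{n}\sum_{i=1}^n \xb_i^\top(\Ib-\eta\bSigma)^{2k}\xb_i=\tr\big(\bSigma(\Ib-\eta\bSigma)^{2k}\big)
\]
together with the split at $k^*$ are exactly what the paper does (this is Lemma~\ref{lemma:trace_Aexp} rewritten on the covariance side). The gap is in the step that upgrades the average over $i$ to a bound valid for each fixed $i$.

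Your leave-one-out plus Hanson--Wright route does not close. After replacing $\bSigma$ by $\bSigma_{-i}$, Hanson--Wright pins $\xb_i^\top(\Ib-\eta\bSigma_{-i})^{2k}\xb_i$ to its \emph{conditional expectation} $\tr\big(\Hb\,(\Ib-\eta\bSigma_{-i})^{2k}\big)$, not to the sample average $\tr\big(\bSigma(\Ib-\eta\bSigma)^{2k}\big)$ you computed. Controlling $\tr\big(\Hb\,(\Ib-\eta\bSigma_{-i})^{2k}\big)$ by $k^*/((k+1)\eta)+\sum_{j>k^*}\lambda_j$ requires exactly the commutation between $\Hb$ and the empirical contraction that the $\Theta_1/\Theta_2$ split was introduced to avoid; writing $\Hb=\bSigma_{-i}+(\Hb-\bSigma_{-i})$ leaves a remainder $\tr\big((\Hb-\bSigma_{-i})(\Ib-\eta\bSigma_{-i})^{2k}\big)$ whose trace factor is of order $d-n$ (or infinite), so the argument is circular. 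Separately, the rank-one gap is not automatically ``lower order'': matrix powers are not operator monotone, so $\bSigma\succeq\bSigma_{-i}$ does not yield $(\Ib-\eta\bSigma)^{2k}\preceq(\Ib-\eta\bSigma_{-i})^{2k}$, and the perturbation is precisely along the vector $\xb_i$ you are sandwiching with.

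The paper's mechanism is different and does not use leave-one-out. It passes to the $n\times n$ Gram matrix via $\xb_i^\top(\Ib-\eta\bSigma)^{2k}\xb_i=\eb_i^\top\Ab(\Ib_n-\eta n^{-1}\Ab)^{2k}\eb_i$, and then argues that, conditionally on its eigenvalues, the eigenvector matrix of $\Ab$ is Haar-distributed, so that each fixed $\eb_i$ becomes a uniform unit vector in that basis. A quadratic-form concentration on $\cS^{n-1}$ (Lemma~\ref{lemma:highprob_matrixnorm_randunit}) then gives $\eb_i^\top\Ab(\Ib_n-\eta n^{-1}\Ab)^{2k}\eb_i\lesssim \tfrac{\log n}{n}\,\tr\big(\Ab(\Ib_n-\eta n^{-1}\Ab)^{2k}\big)$ for every $i$ after a union bound---this is the ``covering on $n$ fixed vectors''. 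The key point is that this delivers the \emph{sample} trace directly, never reintroducing $\Hb$.
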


\subsection{Bounding the Risk of GD: Proof of Theorem \ref{thm:GD_earlystop}}\label{sec:gd_risk}
Recall that 
\(
\hat{\wb} = \Xb^\top (\Xb \Xb^\top)^{-1}\yb =  \Xb^\top \Ab^{-1}\yb \),
where $\Ab := \Xb \Xb^\top$ is the gram matrix.
Then we can reformulate $\hat\wb_t$ by
\begin{align*}
\hat{\wb}_t &= \hat{\wb} - (\Ib - \eta \bSigma)^t (\hat{\wb}_0 - \hat{\wb})=  \big( \Ib - (\Ib - \eta \bSigma)^t \big)\Xb^\top \Ab^{-1}\yb= \Xb^\top \big( \Ib - (\Ib - \eta n^{-1}\Ab)^t \big) \Ab^{-1}\yb.
\end{align*}
Denote $\tilde{\Ab} := \Ab \big( \Ib - (\Ib - \eta n^{-1}\Ab)^t \big)^{-1}$, the excess risk of $\hat\wb_t$ is 
\begin{align}\label{eq:bias_var_decomposition_GD}
    \cE(\hat\wb_t)
    &= \frac{1}{2}\big\| \Xb^\top \tilde{\Ab}^{-1} \yb - \wb^* \big\|^2_{\Hb}= \underbrace{\frac{1}{2}\big\|\wb^*\big(\Ib-\Xb^\top\tilde\Ab^{-1}\Xb\big)\big\|_\Hb^2}_{\mathrm{BiasError}} + \underbrace{\frac{1}{2}\big\|\Xb^\top\tilde\Ab^{-1}\bepsilon\big\|_\Hb^2}_{\mathrm{VarError}} . 
\end{align}
The remaining proof will be relates the excess risk of early stopped GD to that of ridge regression with certain regularization parameters. In particular, note that the excess risk of the ridge regression solution with parameter $\lambda$ is $\frac{1}{2}\|\Xb^\top(\Ab+\lambda\Ib)^{-1}\yb-\wb^*\|_\Hb^2$. 
Then it remains to show the relationship between $\tilde\Ab$ and $\Ab+\lambda\Ib$, which is illustrated in the following lemma.
\begin{lemma}\label{lemma:tildeA-bounds}
For any $\eta\le c/\lambda_1$ for some absolute constant $c$ and $t>0$, we have
\[
\frac{1}{2} \bigg( \Ab + \frac{n}{\eta t} \Ib \bigg) \preceq \tilde{\Ab} \preceq\Ab + \frac{2 n}{t\eta}\cdot \Ib.
\]
\end{lemma}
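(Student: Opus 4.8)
The plan is to reduce the operator inequality to a one‑dimensional statement by spectral calculus. Since $\Ab=\Xb\Xb^\top$ is symmetric and (almost surely in the interpolation regime) positive definite, let its eigenvalues be $\mu_1,\dots,\mu_n>0$. All three matrices $\tilde\Ab=\Ab\big(\Ib-(\Ib-\eta n^{-1}\Ab)^t\big)^{-1}$, $\Ab$, and $\Ib$ are functions of $\Ab$ and are therefore simultaneously diagonalizable; on the eigenspace with eigenvalue $\mu$, the matrix $\tilde\Ab$ acts as multiplication by
\[ f(\mu):=\frac{\mu}{1-(1-\eta\mu/n)^t}. \]
Consequently the claimed sandwich $\frac12\big(\Ab+\frac{n}{\eta t}\Ib\big)\preceq\tilde\Ab\preceq\Ab+\frac{2n}{t\eta}\Ib$ is equivalent to the scalar inequalities $\frac12\big(\mu+\frac{n}{\eta t}\big)\le f(\mu)\le \mu+\frac{2n}{t\eta}$ holding for every eigenvalue $\mu$.

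I would then normalize by setting $x:=\eta\mu/n$ and rescaling by $\eta t/n$, so that the two scalar inequalities become
\[ \tfrac12(tx+1)\ \le\ \frac{tx}{1-(1-x)^t}\ \le\ tx+2. \]
Here I will use that $x\in(0,1]$: the eigenvalues of $\eta n^{-1}\Ab$ equal $\eta$ times the nonzero eigenvalues of $\bSigma$, and under $\eta\le c/\lambda_1$ together with the standard concentration $\|\bSigma\|_2\lesssim\lambda_1$ (which holds with probability $1-1/\poly(n)$ under Assumption \ref{assump:data_distribution}\ref{assump:item:subgaussian_data}), a small enough absolute constant $c$ forces $\eta\|\bSigma\|_2\le1$, hence $x\le1$. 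In this range $1-(1-x)^t\in(0,1]$, so $f$ is well defined and all the rearrangements below are reversible.

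The core of the argument is a two‑sided control of $1-(1-x)^t$ for $x\in(0,1]$ and $t\ge1$, namely
\[ \frac{tx}{1+tx}\ \le\ 1-(1-x)^t\ \le\ \min\{tx,\,1\}. \]
For the upper estimate I would invoke the Bernoulli inequality $(1-x)^t\ge 1-tx$ (giving $1-(1-x)^t\le tx$) together with $(1-x)^t\ge0$ (giving $1-(1-x)^t\le1$). For the lower estimate I would study $h(x):=(1-x)^t(1+tx)$, compute $h'(x)=-t(t+1)x(1-x)^{t-1}\le0$ on $[0,1]$, and conclude $h(x)\le h(0)=1$, i.e.\ $1-(1-x)^t\ge \frac{tx}{1+tx}$. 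Feeding the lower estimate of the sandwich into $f$ yields $f(\mu)\le\frac{n}{\eta}\cdot\frac{1+tx}{t}=\mu+\frac{n}{\eta t}\le\mu+\frac{2n}{\eta t}$; feeding the upper estimate, together with the elementary fact $\min\{tx,1\}\le\frac{2tx}{1+tx}$, yields $f(\mu)\ge\frac{n}{\eta}\cdot\frac{1+tx}{2t}=\frac12\big(\mu+\frac{n}{\eta t}\big)$. This closes both directions (and in fact gives the upper bound with constant $1$ rather than $2$).

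I expect the monotonicity step for the lower bound $1-(1-x)^t\ge\frac{tx}{1+tx}$ to be the main obstacle, since it is the one estimate that does not follow directly from Bernoulli; the derivative computation for $h$ is precisely what makes it work. The only other subtlety is the bookkeeping guaranteeing that every eigenvalue satisfies $x\le1$, which is handled by the stepsize condition and the empirical‑covariance concentration cited above. Everything else is algebraic rearrangement.
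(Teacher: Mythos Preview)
Your proof is correct and follows the same overall strategy as the paper: reduce the PSD sandwich to scalar inequalities via spectral calculus on the eigenvalues of $\Ab$. The lower bound $\tilde\Ab\succeq\frac12(\Ab+\frac{n}{\eta t}\Ib)$ is handled identically in both proofs, via $1-(1-x)^t\le\min\{tx,1\}$ (the paper phrases this as $\Ib-(\Ib-\eta n^{-1}\Ab)^t\preceq\Ib$ and $\preceq \eta t n^{-1}\Ab$, then averages).

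The difference lies in the upper bound. The paper splits into two cases at the threshold $t\eta x=\log 2$: for large $t\eta x$ it uses $(1-\eta x)^t\le 1/2$ together with $x(1-\eta x)^t\le 1/(t\eta)$, and for small $t\eta x$ it uses $1-(1-\eta x)^t\ge t\eta x/2$. Your route avoids the case split entirely by proving the single inequality $1-(1-x)^t\ge\frac{tx}{1+tx}$ through the monotonicity of $h(x)=(1-x)^t(1+tx)$. This is cleaner and, as you note, actually yields the sharper bound $\tilde\Ab\preceq\Ab+\frac{n}{\eta t}\Ib$ (constant $1$ rather than $2$). Your explicit discussion of why $x=\eta\mu/n\le 1$ (via the stepsize condition and concentration of $\|\bSigma\|_2$) is also a point the paper leaves implicit.
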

Then, the lower bound of $\tilde \Ab$ will be applied to prove the upper bound of variance error of GD, as shown in \eqref{eq:bias_var_decomposition_GD}, which is at most four times the variance error achieved by the ridge regression with $\lambda = n/(\eta t)$. The upper bound of $\tilde \Ab$ will be applied to prove the upper bound of the bias error of GD, which is at most the bias error achieved by ridge regression with $\lambda = 2n/(\eta t)$.  Finally, we can apply the prior work \citep[Theorem 1]{tsigler2020benign} on the excess risk analysis for ridge regression to complete the proof for bounding the bias and variance errors separately.

\section{Conclusion and Discussion}
In this paper, we establish an instance-dependent excess risk bound of multi-pass SGD for interpolating least square problems. The key takeaways include: (1) the excess risk of SGD is \textit{always} worse than that of GD, given the same setup of stepsize and iteration number; (2) in order to achieve the same level of excess risk, SGD requires more iterations than GD; and (3) however, the gradient complexity of SGD can be better than that of GD.
The proposed technique for analyzing multi-pass SGD could be of broader interest.

Several interesting problems are left for future exploration:
\paragraph{A problem-dependent excess risk lower bound} could be useful to help understand the sharpness of our excess risk upper bound for multi-pass SGD.
The challenge here is mainly from
the fact that the empirical covariance matrix $\bSigma$ does not commute with the population covariance matrix $\Hb$. 
In particular, one needs to develop an even sharper characterization on the quantity $\cM\circ\cG^k\circ\Hb$ (see Section \ref{sec:bounding_fluc_err}); more precisely, a sharp lower bound on $\xb_i^\top (\Ib-\eta\bSigma)^k\Hb(\Ib-\eta\bSigma)^k\xb_i$ is required.

\noindent \textbf{Multi-pass SGD without replacement} is a more practical SGD variant than the multi-pass SGD with replacement studied in this work. 
The key difference is that, the former does not pass training data independently (since each data must be used for equal times).
In terms of optimization complexity, it has already been demonstrated in theory that multi-pass SGD without replacement (e.g., SGD with single shuffle or random shuffle) outperforms multi-pass SGD with replacement \citep{haochen2019random,safran2020good,ahn2020sgd}. 
In terms of generalization, it is still open whether or not the former can be better than the latter, as there lacks a sharp excess risk analysis for multi-pass SGD without replacement. 
The techniques presented in this paper can shed light on this direction.


\appendix




\section{Risk Bound for the Fluctuation Error}


\subsection{Proof of \eqref{eq:formula_fluctuationErr}}
\begin{lemma}\label{lemma:formula_fluctuationErr}
The fluctuation error satisfies
\begin{align*}
\mathrm{FluctuationError} \le \frac{\eta^2}{2}\cdot \sum_{k=0}^{t-1}\la\cM\circ\cG^{t-1-k}\circ\Hb, \Eb_k\ra.
\end{align*}
\end{lemma}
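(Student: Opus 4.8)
Recall from \eqref{eq:risk_decomposition_detailed} that
\[
\mathrm{FluctuationError} = \frac{\eta^2}{2}\sum_{k=0}^{t-1}\big\la\cG^{t-1-k}\circ(\cM-\tilde\cM)\circ\Eb_k,\ \Hb\big\ra,
\]
so both sides of the claimed inequality share the common prefactor $\eta^2/2$ and the same summation range. The plan is therefore to prove the bound termwise: fixing $k$ and abbreviating $j := t-1-k$, I would show that
\[
\big\la\cG^{j}\circ(\cM-\tilde\cM)\circ\Eb_k,\ \Hb\big\ra \ \le\ \big\la\cM\circ\cG^{j}\circ\Hb,\ \Eb_k\big\ra,
\]
and then sum over $k = 0,\dots,t-1$.

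The only tools needed are the self-adjointness of $\cG$, $\cM$, $\tilde\cM$ (the commutative property) and the PSD-mapping property. First I would shuttle operators across the Frobenius inner product: since $\cG$, and hence $\cG^j$, is self-adjoint, the left-hand side equals $\la(\cM-\tilde\cM)\circ\Eb_k,\ \cG^{j}\circ\Hb\ra$; applying self-adjointness of $\cM$ and $\tilde\cM$ then rewrites it as $\la\Eb_k,\ \cM\circ\cG^{j}\circ\Hb\ra - \la\Eb_k,\ \tilde\cM\circ\cG^{j}\circ\Hb\ra$. By symmetry of the inner product the first term is exactly $\la\cM\circ\cG^{j}\circ\Hb,\ \Eb_k\ra$, the desired right-hand side, so it remains to argue the second term is non-negative and can be discarded.

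This last step is the crux, though it is short. I would observe that $\Hb\succeq 0$ together with the PSD-mapping property of $\cG$ gives $\cG^{j}\circ\Hb\succeq 0$, whence $\tilde\cM\circ\cG^{j}\circ\Hb = \bSigma\,(\cG^{j}\circ\Hb)\,\bSigma\succeq 0$ by congruence; moreover $\Eb_k = \EE_{\sgd}[(\wb_k-\hat\wb)(\wb_k-\hat\wb)^\top]\succeq 0$ as an average of rank-one PSD matrices. Since the Frobenius inner product of two PSD matrices is non-negative, $\la\Eb_k,\ \tilde\cM\circ\cG^{j}\circ\Hb\ra\ge 0$, which completes the termwise bound and hence the lemma. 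I do not anticipate a genuine obstacle here; the care required is purely bookkeeping --- tracking which operators are self-adjoint and confirming the PSD-ness of $\Eb_k$ and of $\tilde\cM\circ\cG^{j}\circ\Hb$ so that the dropped cross term is provably non-negative rather than merely plausible.
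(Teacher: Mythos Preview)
Your proposal is correct and follows essentially the same approach as the paper: both arguments use the self-adjointness (commutative property) of $\cG$, $\cM$, $\tilde\cM$ together with the PSD-mapping property to show that the $\tilde\cM$ contribution is non-negative and may be dropped. The only cosmetic difference is the order of operations --- the paper first bounds $\cG^{t-1-k}\circ(\cM-\tilde\cM)\circ\Eb_k \preceq \cG^{t-1-k}\circ\cM\circ\Eb_k$ (using that $\tilde\cM\circ\Eb_k\succeq 0$ and that $\cG^{t-1-k}$ preserves the PSD order) and then applies commutativity, whereas you apply commutativity first and then discard $\la\Eb_k,\tilde\cM\circ\cG^{j}\circ\Hb\ra\ge 0$; the underlying logic is identical.
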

\begin{proof}[Proof of Lemma \ref{lemma:formula_fluctuationErr}]
By Lemma \ref{eq:risk_decomposition_detailed}, we have
\begin{align*}
\mathrm{FluctuationError}= \frac{\eta^2}{2}\cdot\sum_{k=0}^{t-1}\la\cG^{t-1-k}\circ(\cM-\tilde \cM)\circ\Eb_k, \Hb\ra.
\end{align*}
Then note that $\cM$, $\cM-\tilde \cM$ and $\cG$ are the PSD mapping. Then we have
\begin{align*}
\cG^{t-1-k}\circ(\cM-\tilde \cM)\circ\Eb_k\preceq \cG^{t-1-k}\circ\cM\circ\Eb_k
\end{align*}
for all $k\ge 0$. Further using the commutative property of $\cG$ and $\cM$, we have
\begin{align*}
\la\cG^{t-1-k}\circ\cM\circ\Eb_k, \Hb\ra = \la\cM\circ\cG^{t-1-k}\circ\Hb,\Eb_k\ra.
\end{align*}
This completes the proof.

\end{proof}

\subsection{Proof of Lemma \ref{lemma:upperbound_Theta1}}

We first present the following two useful lemmas.
\begin{lemma}[Theorem 9 in \citet{bartlett2020benign}]\label{lemma:concentration_covariance} There is an absolute constant $c$ such that for any $\delta\in(0,1)$ with probability at least $1-\delta$,
\begin{align*}
\|\bSigma - \Hb\|_2\le c\|\Hb\|_2\cdot\max\bigg\{\sqrt{\frac{r(\Hb)}{n}}, \frac{r(\Hb)}{n}, \sqrt{\frac{\log(1/\delta)}{n}}, \frac{\log(1/\delta)}{n}\bigg\},
\end{align*}
where $r(\Hb) = \sum_{i}\lambda_i/\lambda_1$.

\end{lemma}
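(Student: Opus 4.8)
The plan is to write $\bSigma - \Hb = \frac1n\sum_{i=1}^n(\xb_i\xb_i^\top - \Hb)$ as a normalized sum of i.i.d.\ mean-zero symmetric random matrices and to bound its operator norm by a \emph{dimension-free} matrix concentration inequality. Before doing any computation I would record \emph{why} the effective rank $r(\Hb)=\tr(\Hb)/\|\Hb\|_2$ is the correct complexity parameter rather than the ambient dimension $d$: a naive $\epsilon$-net over the unit sphere $\{\mathbf v:\|\mathbf v\|=1\}$ reduces $\|\bSigma-\Hb\|_2$ to $\max_{\mathbf v\in\mathcal N}|\mathbf v^\top(\bSigma-\Hb)\mathbf v|$ but pays a $\log|\mathcal N|\eqsim d$ factor in the union bound, which is far too lossy and vacuous when $d=\infty$. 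So I would deliberately avoid the sphere-net route.

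The key computation is the matrix variance proxy $\mathbf V := \sum_i\EE[(\xb_i\xb_i^\top-\Hb)^2]$. Using Assumption \ref{assump:data_distribution}\ref{assump:item:subgaussian_data} (whitening $\xb=\Hb^{1/2}\mathbf z$ with $\mathbf z$ having independent $1$-subGaussian coordinates) one gets $\EE[(\xb\xb^\top)^2]=\EE[\|\xb\|_2^2\,\xb\xb^\top]\lesssim \tr(\Hb)\,\Hb$, hence $\|\mathbf V\|_2\lesssim n\,\tr(\Hb)\|\Hb\|_2$ while $\tr(\mathbf V)\lesssim n\,\tr(\Hb)^2$. The ratio $\operatorname{intdim}(\mathbf V)=\tr(\mathbf V)/\|\mathbf V\|_2\lesssim \tr(\Hb)/\|\Hb\|_2 = r(\Hb)$ is the heart of the statement: the intrinsic dimension of the variance is exactly the effective rank, so any dimension-free matrix concentration bound automatically produces $r(\Hb)$.

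With this in hand I would invoke the intrinsic-dimension matrix Bernstein inequality, which for a sum $\mathbf S$ of independent mean-zero symmetric matrices with $\|\cdot\|_2\le L$ and variance $\mathbf V$ gives $\EE\|\mathbf S\|_2\lesssim \sqrt{\|\mathbf V\|_2\log(\operatorname{intdim})}+L\log(\operatorname{intdim})$ together with a tail $\Pr(\|\mathbf S\|_2\ge s)\lesssim \operatorname{intdim}\cdot\exp(-s^2/(\|\mathbf V\|_2+Ls))$. Since $\xb\xb^\top$ is unbounded, I first truncate on the event $\{\|\xb_i\|_2^2\lesssim \tr(\Hb)+\|\Hb\|_2\log(n/\delta)\}$, which holds for all $i$ with high probability by a Hanson--Wright bound and forces $L\lesssim \tr(\Hb)$, while the discarded mass is of lower order. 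Plugging $\|\mathbf V\|_2\lesssim n\tr(\Hb)\|\Hb\|_2$, $L\lesssim\tr(\Hb)$, and $\operatorname{intdim}\lesssim r(\Hb)$ into the tail, choosing $s$ so that the exponent matches $\log(1/\delta)$, and dividing by $n$ recovers the four terms: $\sqrt{\|\mathbf V\|_2}/n\eqsim\|\Hb\|_2\sqrt{r(\Hb)/n}$ and $L/n\eqsim\|\Hb\|_2\,r(\Hb)/n$ from the variance and boundedness, and the two $\log(1/\delta)$ terms from the exponential tail.

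The main obstacle I anticipate is \emph{sharpness}: the intrinsic-dimension Bernstein route carries a spurious $\sqrt{\log r(\Hb)}$ factor, whereas the stated bound is clean. Removing it requires the sharper Koltchinskii--Lounici argument, i.e.\ bounding $\EE\|\bSigma-\Hb\|_2$ by Talagrand's generic-chaining estimate for the quadratic empirical process $\mathbf v\mapsto n^{-1}\sum_i(\mathbf v^\top\xb_i)^2$ and then upgrading to high probability via Talagrand's concentration inequality for suprema of empirical processes (which supplies the $\log(1/\delta)$ terms). The delicate points there are the comparison between the sub-Gaussian process and its Gaussian counterpart and the truncation needed to tame the heavy-tailed product $\xb\xb^\top$; this is where the bulk of the technical work lies, and I would lean on the existing Koltchinskii--Lounici operator-norm estimates (as \citet{bartlett2020benign} do) rather than re-deriving the chaining bound from scratch.
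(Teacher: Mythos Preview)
The paper does not prove this lemma at all: it is simply quoted verbatim as Theorem~9 of \citet{bartlett2020benign} and used as a black box (only in the coarse form $\|\Hb-\bSigma\|_2\lesssim\sqrt{\log(n)/n}$, see \eqref{eq:concentration_covariance} and the proof of Lemma~\ref{lemma:upperbound_Theta1}). So there is no ``paper's own proof'' to compare against; your proposal is really a sketch of how the cited result itself is established.

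That said, your sketch is accurate in spirit. You correctly identify that (i) a sphere $\epsilon$-net is too crude and the right complexity parameter is the effective rank $r(\Hb)$; (ii) the intrinsic-dimension Bernstein route gets the right scale but leaves a spurious $\sqrt{\log r(\Hb)}$; and (iii) the clean bound, as in Koltchinskii--Lounici, comes from a generic-chaining estimate on $\EE\|\bSigma-\Hb\|_2$ followed by Talagrand-type concentration for the deviation. This is exactly the lineage behind Theorem~9 in \citet{bartlett2020benign}. For the purposes of the present paper nothing further is needed; if you were to write it up, citing Koltchinskii--Lounici directly (or simply invoking \citet{bartlett2020benign}) is the appropriate level of detail.
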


\begin{lemma}[Lemma 22 in \cite{bartlett2020benign}]\label{lemma:high_prob_sum_subexp} There is a universal constant $c$ such that for any independent, mean zero, $\sigma$-subexponential random variables $\xi_1,\dots,\xi_n$, any $\ab = (a_1,\dots,a_n)$ and any $t\ge 0$,
\begin{align*}
\PP\bigg(\bigg|\sum_{i=1}^n a_i\xi_i\bigg|\ge t\bigg)\le 2\exp\bigg[-c\min\bigg(\frac{t^2}{\sigma^2\|\ab\|_2^2}, \frac{t}{\sigma\|\ab\|_\infty}\bigg)\bigg].
\end{align*}

\end{lemma}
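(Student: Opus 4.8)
The plan is to prove this Bernstein-type tail bound via the standard Chernoff / moment-generating-function (MGF) argument. First I would fix a working definition of ``$\sigma$-subexponential'': namely that a mean-zero random variable $\xi$ satisfies $\EE[\exp(\lambda\xi)]\le \exp(\lambda^2\sigma^2/2)$ for all $|\lambda|\le 1/\sigma$. Any of the usual equivalent definitions (Orlicz-norm, moment-growth, tail-based) yields this MGF control up to an absolute constant that can be absorbed into $\sigma$. Writing $S:=\sum_i a_i\xi_i$, the Chernoff bound gives, for every $\lambda>0$, the estimate $\PP(S\ge t)\le e^{-\lambda t}\,\EE[\exp(\lambda S)] = e^{-\lambda t}\prod_{i=1}^n \EE[\exp(\lambda a_i\xi_i)]$, where the product factorizes by independence of the $\xi_i$.

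Next I would control each factor. The per-coordinate MGF bound applies provided $|\lambda a_i|\le 1/\sigma$, which holds simultaneously for all $i$ as soon as $\lambda\le 1/(\sigma\|\ab\|_\infty)$. Under this restriction, $\prod_i \EE[\exp(\lambda a_i\xi_i)]\le \exp\!\big(\tfrac12\lambda^2\sigma^2\sum_i a_i^2\big) = \exp\!\big(\tfrac12\lambda^2\sigma^2\|\ab\|_2^2\big)$, and hence $\PP(S\ge t)\le \exp\!\big(-\lambda t + \tfrac12\lambda^2\sigma^2\|\ab\|_2^2\big)$ for every $\lambda\in(0,\,1/(\sigma\|\ab\|_\infty)]$.

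The heart of the argument is the constrained optimization over $\lambda$, which is exactly what produces the two regimes in the stated minimum. The unconstrained minimizer of the exponent is $\lambda^\star = t/(\sigma^2\|\ab\|_2^2)$. When $t\le \sigma\|\ab\|_2^2/\|\ab\|_\infty$ we have $\lambda^\star\le 1/(\sigma\|\ab\|_\infty)$, so the constraint is inactive, and substituting $\lambda^\star$ yields the subgaussian exponent $-t^2/(2\sigma^2\|\ab\|_2^2)$. Otherwise the constraint binds, and I would set $\lambda = 1/(\sigma\|\ab\|_\infty)$; using the inequality $\|\ab\|_2^2/\|\ab\|_\infty^2 \le t/(\sigma\|\ab\|_\infty)$ that holds precisely in this regime, the exponent is bounded by $-t/(2\sigma\|\ab\|_\infty)$, the subexponential rate. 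Combining the two cases gives $\PP(S\ge t)\le \exp\!\big(-c\min\{t^2/(\sigma^2\|\ab\|_2^2),\ t/(\sigma\|\ab\|_\infty)\}\big)$ with $c=1/2$. Finally, since this bound depends on $\ab$ only through $\|\ab\|_2$ and $\|\ab\|_\infty$, the identical argument applied with $a_i$ replaced by $-a_i$ controls the lower tail $\PP(S\le -t)$; a union bound over the two tails introduces the factor $2$ and completes the two-sided estimate.

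The only genuinely delicate point is reconciling the MGF definition I adopt with whatever normalization the term ``$\sigma$-subexponential'' carries in the ambient text, and verifying that the MGF control survives linear scaling by $a_i$ (it does, since $a_i\xi_i$ is $(|a_i|\sigma)$-subexponential). Because these definitions are equivalent up to absolute constants, any mismatch is harmlessly absorbed into the universal constant $c$, so no regime-dependent bookkeeping is lost. Beyond this normalization check, the remainder is the routine Chernoff optimization, and I do not anticipate any substantive obstacle.
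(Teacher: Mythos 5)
Your proposal is correct, but note that the paper itself contains no proof of this statement: it is imported verbatim by citation as Lemma 22 of \citet{bartlett2020benign}, so there is no in-paper argument to compare against. What you have written is the canonical Chernoff/MGF proof of Bernstein's inequality for weighted sums of subexponential variables, and it is sound in all the places where such proofs usually go wrong: the per-coordinate MGF bound is invoked only on the admissible range $\lambda\le 1/(\sigma\|\ab\|_\infty)$, the case split at $t = \sigma\|\ab\|_2^2/\|\ab\|_\infty$ correctly identifies which term of the minimum is active in each regime (in the constrained regime the bound $\|\ab\|_2^2/\|\ab\|_\infty^2\le t/(\sigma\|\ab\|_\infty)$ is exactly what salvages the exponent $-t/(2\sigma\|\ab\|_\infty)$), and the factor $2$ is obtained honestly from a union bound over the two tails, using that the MGF control holds for negative $\lambda$ as well. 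Your closing remark that the various definitions of ``$\sigma$-subexponential'' agree up to absolute constants absorbed into $c$ is the right way to reconcile your working definition with the normalization in \citet{bartlett2020benign}, where this lemma is itself a standard quoted form of Bernstein's inequality (cf.\ Vershynin's treatment). In short: a correct, self-contained proof of a result the paper only cites.
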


\begin{proof}[Proof of Lemma \ref{lemma:upperbound_Theta1}]
 Note that $(1-x)^k\le 1/[x(k+1)]$ for all $k>0$ and $x\in(0, 1)$, we have
\begin{align*}
(\Ib-\eta\bSigma)^k\bSigma(\Ib-\eta\bSigma)^k = \bSigma(\Ib-\eta\bSigma)^{2k} \preceq  \frac{1}{2(k+1)\eta}\cdot\Ib.
\end{align*}
Besides, we also have $\bSigma(\Ib-\eta\bSigma)^{2k}\preceq \bSigma$. This implies that
\begin{align}\label{eq:bound_theta1_temp}
\Theta_1 \le \min\bigg\{\xb_i^\top\bSigma\xb_i,\frac{\|\xb_i\|_2^2}{2(k+1)\eta} \bigg\}\le \min\bigg\{ \|\bSigma\|_2\cdot\|\xb_i\|_2^2,\frac{\|\xb_i\|_2^2}{2(k+1)\eta} \bigg\}.
\end{align}
Then applying Lemma \ref{lemma:concentration_covariance} and using the assumption that $\lambda_1=\Theta(1)$, we have
\begin{align*}
\|\bSigma\|_2\lesssim \|\Hb\|_2.
\end{align*}
Besides, by Assumption \ref{assump:data_distribution}, we have
\begin{align*}
\|\xb_i\|_2^2 = \sum_{i}\lambda_i\cdot z_i^2
\end{align*}
where $z_i$ is independent $1$-subgaussian random variable  and satisfies $\EE[z_i^2]=1$. Therefore, applying Lemma \ref{lemma:high_prob_sum_subexp} we can get with probability $1-\delta$,
\begin{align*}
\|\xb_i\|_2^2\lesssim \sum_{i}\lambda_i + \max\bigg\{\log(1/\delta)\cdot\lambda_1, \sqrt{\log(1/\delta)\sum_i\lambda_i^2}\bigg\}.
\end{align*}
Setting $\delta = 1/\poly(n)$ and applying union bound over all $i\in[n]$, we can get with probability at least $1-1/\poly(n)$, it holds that $\|\xb_i\|_2^2\le \log(n)\cdot \tr(\Hb)$ for all $i\in[n]$. Putting  this into \eqref{eq:bound_theta1_temp} completes the proof.

\end{proof}

\subsection{Proof of Lemma \ref{lemma:upperbound_Theta2}}
We first provide the following useful facts and lemmas.
\begin{fact}[Part of Lemma 8 in \citet{bartlett2020benign}]\label{fact:decomposition_gram}
The gram matrix $\Ab=\Xb\Xb^\top$ can be decomposed by
\begin{align*}
\Ab = \sum_{i}\lambda_i\zb_i\zb_i^\top,
\end{align*}
where $\zb_i\in\RR^n$ are independent $1$-subgaussian random vector satisfying $\EE[\|\zb_i\|_2^2]=n$.
\end{fact}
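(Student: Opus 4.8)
The plan is to expose the coordinate structure of each feature vector through the spectral decomposition of $\Hb$ and then regroup coordinates across data points to assemble the vectors $\zb_i$. Rotating to the eigenbasis of $\Hb$, I may assume without loss of generality that $\Hb = \mathrm{diag}(\lambda_1,\lambda_2,\dots)$, so that Assumption \ref{assump:data_distribution}\ref{assump:item:subgaussian_data} says precisely that the coordinates of $\Hb^{-1/2}\xb$ are independent and $1$-subgaussian. I then define the whitened data matrix $\boldsymbol{Z} := \Xb\Hb^{-1/2}\in\RR^{n\times d}$, whose $j$-th row equals $(\Hb^{-1/2}\xb_j)^\top$, and let $\zb_i$ denote its $i$-th column.

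First I would verify the decomposition itself. Since $\Hb$ is diagonal in the working basis,
\[
\Ab = \Xb\Xb^\top = \Xb\Hb^{-1/2}\,\Hb\,\Hb^{-1/2}\Xb^\top = \boldsymbol{Z}\,\Hb\,\boldsymbol{Z}^\top = \sum_i \lambda_i\,\zb_i\zb_i^\top,
\]
which is exactly the claimed identity, with $\zb_i$ the $i$-th column of $\boldsymbol{Z}$.

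Next I would establish the three stated properties of the $\zb_i$. The entries of $\boldsymbol{Z}$ are the scalars $(\Hb^{-1/2}\xb_j)_i$: within a fixed row $j$ they are independent and $1$-subgaussian by Assumption \ref{assump:data_distribution}\ref{assump:item:subgaussian_data}, and distinct rows correspond to independent draws of the data, so all $nd$ entries of $\boldsymbol{Z}$ are mutually independent and $1$-subgaussian. Consequently, distinct columns $\zb_i$ and $\zb_{i'}$ involve disjoint sets of these independent scalars and are therefore independent, while each $\zb_i$ has independent $1$-subgaussian coordinates and is thus a $1$-subgaussian vector in $\RR^n$. Finally, because $\EE\big[(\Hb^{-1/2}\xb)(\Hb^{-1/2}\xb)^\top\big] = \Hb^{-1/2}\Hb\Hb^{-1/2} = \Ib$, each coordinate satisfies $\EE[(\Hb^{-1/2}\xb_j)_i^2] = 1$, whence $\EE[\|\zb_i\|_2^2] = \sum_{j=1}^n \EE[(\zb_i)_j^2] = n$.

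The construction is essentially a regrouping of coordinates, so the only point demanding care—and the one I would regard as the main (if minor) obstacle—is the basis in which the independence of Assumption \ref{assump:data_distribution}\ref{assump:item:subgaussian_data} is read off. One must justify reducing to the eigenbasis of $\Hb$, i.e.\ confirm that the asserted independence is that of the eigencoordinates of $\Hb^{-1/2}\xb$ (equivalently, that $\Hb$ may be taken diagonal), since it is precisely this eigenbasis independence, rather than independence in an arbitrary fixed basis, that makes the columns of $\boldsymbol{Z}$ mutually independent and thereby delivers both the decomposition and the subgaussianity claims.
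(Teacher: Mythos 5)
Your proof is correct: the paper itself does not prove this fact but imports it by citation from Lemma~8 of \citet{bartlett2020benign}, and your whitening argument (setting $\boldsymbol{Z}=\Xb\Hb^{-1/2}$ in the eigenbasis of $\Hb$ and reading off the columns) is exactly the standard derivation underlying that cited lemma. You also correctly identify the one point of care, namely that Assumption \ref{assump:data_distribution}\ref{assump:item:subgaussian_data} must be read as independence of the eigencoordinates of $\Hb^{-1/2}\xb$ (as it is in \citet{bartlett2020benign}), since independence is not preserved under arbitrary rotations.
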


\begin{fact}\label{fact:covariance2gram}
Assume $n<d$ and the gram matrix $\Ab$ is of full-rank, then it holds that
\begin{align*}
\Xb(\Ib_d-\eta\bSigma)^k = (\Ib_n-\eta n^{-1}\Ab)^k\Xb.
\end{align*}

\end{fact}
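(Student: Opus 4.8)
The final statement to prove is Fact~\ref{fact:covariance2gram}, which asserts that $\Xb(\Ib_d-\eta\bSigma)^k = (\Ib_n-\eta n^{-1}\Ab)^k\Xb$. The plan is to reduce the claim for general $k$ to the base case $k=1$ and then induct. For the base case, one simply expands both sides using the definitions $\bSigma = n^{-1}\Xb^\top\Xb$ and $\Ab = \Xb\Xb^\top$. Concretely, I would compute
\[
\Xb(\Ib_d - \eta\bSigma) = \Xb - \eta\Xb\bSigma = \Xb - \eta n^{-1}\Xb\Xb^\top\Xb = \Xb - \eta n^{-1}\Ab\Xb = (\Ib_n - \eta n^{-1}\Ab)\Xb,
\]
where the crucial step is the associativity identity $\Xb(\Xb^\top\Xb) = (\Xb\Xb^\top)\Xb$, which holds for any matrix $\Xb$ by associativity of matrix multiplication. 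This is a purely algebraic ``pushing $\Xb$ through'' identity and requires neither full rank nor $n<d$.

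Having established the $k=1$ case, I would proceed by induction on $k$. Assuming $\Xb(\Ib_d-\eta\bSigma)^k = (\Ib_n-\eta n^{-1}\Ab)^k\Xb$, I would write
\[
\Xb(\Ib_d-\eta\bSigma)^{k+1} = \big[\Xb(\Ib_d-\eta\bSigma)\big](\Ib_d-\eta\bSigma)^k = (\Ib_n-\eta n^{-1}\Ab)\Xb(\Ib_d-\eta\bSigma)^k,
\]
and then apply the inductive hypothesis to the trailing factor $\Xb(\Ib_d-\eta\bSigma)^k$ to obtain $(\Ib_n-\eta n^{-1}\Ab)(\Ib_n-\eta n^{-1}\Ab)^k\Xb = (\Ib_n-\eta n^{-1}\Ab)^{k+1}\Xb$, closing the induction. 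The whole argument is a short commutation/intertwining fact: multiplication by $\Xb$ intertwines the operator $\Ib_d-\eta\bSigma$ acting on $\RR^d$ with the operator $\Ib_n-\eta n^{-1}\Ab$ acting on $\RR^n$.

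There is essentially no genuine obstacle here; the statement is elementary and I do not expect the full-rank hypothesis or the assumption $n<d$ to play any role in the proof itself (they are stated to fix the ambient setting in which the fact is applied, particularly so that $(\Ib_n - \eta n^{-1}\Ab)^k$ can later be inverted or controlled spectrally). The only point worth a moment of care is making sure the identity matrices carry the correct dimension: $\Ib_d$ acts on the $d$-dimensional feature space while $\Ib_n$ acts on the $n$-dimensional sample space, and the intertwining rectangular factor $\Xb \in \RR^{n\times d}$ converts between them. An alternative one-line justification, if a reader prefers to avoid induction, is to note that for any polynomial $p$ one has $\Xb\, p(\eta\bSigma) = p(\eta n^{-1}\Ab)\,\Xb$ (proved by linearity from the monomial case $\Xb(\bSigma)^k = (n^{-1}\Ab)^k\Xb$, itself the same push-through identity), and then specialize to the polynomial $p(x) = (1-x)^k$.
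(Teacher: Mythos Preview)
Your proof is correct. It differs from the paper's argument, which instead takes the SVD $\Xb = \Ub\bLambda\Vb^\top$, writes $\bSigma = n^{-1}\Vb\bLambda^\top\bLambda\Vb^\top$ and $\Ab = \Ub\bLambda\bLambda^\top\Ub^\top$, and then reduces the claim to the diagonal intertwining identity $\bLambda(\Ib_d - \eta n^{-1}\bLambda^\top\bLambda)^k = (\Ib_n - \eta n^{-1}\bLambda\bLambda^\top)^k\bLambda$. Your induction on the push-through identity $\Xb\bSigma = n^{-1}\Ab\Xb$ is more elementary and avoids introducing the SVD altogether; it also makes transparent (as you note) that neither the full-rank hypothesis nor $n<d$ is actually used in the algebraic identity. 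The SVD route buys nothing extra here, though it is natural in the surrounding context where one later wants to reason about the eigenstructure of $\Ab$.
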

\begin{proof}[Proof of Fact \ref{fact:covariance2gram}]
Note that $\Xb\in\RR^{n\times d}$, consider its SVD decomposition $\Xb = \Ub\bLambda\Vb^\top$, where $\Ub\in\RR^{n\times n}$, $\Vb\in\RR^{d\times d}$ and $\bLambda\in\RR^{n\times d}$. Then we have $\bSigma = n^{-1}\Xb^\top\Xb = n^{-1}\Vb\bLambda^\top\bLambda\Vb^\top$, which implies that
\begin{align*}
\Xb(\Ib-\eta\bSigma)^k = \Ub\bLambda\Vb^\top\Vb(\Ib_d-\eta n^{-1} \bLambda^\top\bLambda)^k\Vb^\top = \Ub\bLambda(\Ib_d-\eta n^{-1} \bLambda^\top\bLambda)^k\Vb^\top.
\end{align*}
Additionally, it is easy to verify that $\bLambda(\Ib_d-\eta n^{-1}\bLambda^\top\bLambda) = (\Ib_n-\eta n^{-1} \bLambda\bLambda^\top)\bLambda$. Therefore, it follows that
\begin{align*}
\Xb(\Ib-\eta\bSigma)^k = \Ub\bLambda(\Ib_d-\eta n^{-1}\bLambda^\top\bLambda)^k\Vb^\top = \Ub(\Ib_n-\eta n^{-1}\bLambda\bLambda^\top)^k \bLambda\Vb^\top = (\Ib_n - \eta\Ab)^k\Xb,
\end{align*}
where the last equality follows from the fact that $\Ab = \Ub\bLambda\bLambda^\top\Ub^\top$. This completes the proof.

\end{proof}

\begin{lemma}\label{lemma:highprob_matrixnorm_randunit}
Let $\ub\in\cS^{n-1}$ be a uniformly random unit vector,  then for any fixed PSD matrix $\bTheta\in\RR^{n\times n}$, with probability at least $1-1/\poly(n)$, it holds that
\begin{align*}
\ub^\top\bTheta\ub  \lesssim \frac{\log(n)}{n}\cdot\tr(\bTheta).
\end{align*}
\end{lemma}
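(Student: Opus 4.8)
The plan is to realize the uniformly random unit vector as a normalized Gaussian and then reduce the problem to controlling a Gaussian quadratic form by the subexponential Bernstein inequality already available as Lemma \ref{lemma:high_prob_sum_subexp}. Concretely, write $\ub = \boldsymbol{g}/\|\boldsymbol{g}\|_2$ with $\boldsymbol{g}\sim\cN(\boldsymbol{0},\Ib_n)$, so that
\[
\ub^\top\bTheta\ub = \frac{\boldsymbol{g}^\top\bTheta\boldsymbol{g}}{\|\boldsymbol{g}\|_2^2}.
\]
I would bound the numerator from above and the denominator from below, each on an event of probability $1-1/\poly(n)$, and then combine by a union bound.

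For the numerator, by rotational invariance of the standard Gaussian, $\boldsymbol{g}^\top\bTheta\boldsymbol{g}$ equals in distribution $\sum_{i}\mu_i g_i^2$, where $\mu_1\ge\cdots\ge\mu_n\ge 0$ are the eigenvalues of $\bTheta$ and the $g_i$ are i.i.d.\ $\cN(0,1)$. Each centered variable $g_i^2-1$ is mean-zero and $O(1)$-subexponential, so applying Lemma \ref{lemma:high_prob_sum_subexp} with coefficients $a_i=\mu_i$ (hence $\|\ab\|_2=\|\bTheta\|_F$ and $\|\ab\|_\infty=\|\bTheta\|_2$) gives, with probability at least $1-1/\poly(n)$,
\[
\big|\boldsymbol{g}^\top\bTheta\boldsymbol{g}-\tr(\bTheta)\big| \lesssim \max\big\{\|\bTheta\|_F\sqrt{\log n},\ \|\bTheta\|_2\log n\big\}.
\]
The key simplification is that, for a PSD matrix, $\|\bTheta\|_F\le\tr(\bTheta)$ and $\|\bTheta\|_2\le\tr(\bTheta)$, since $\sum_i\mu_i^2\le(\sum_i\mu_i)^2$ and $\mu_1\le\sum_i\mu_i$. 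Substituting these into the display yields $\boldsymbol{g}^\top\bTheta\boldsymbol{g}\lesssim\log(n)\cdot\tr(\bTheta)$, uniformly over all PSD $\bTheta$ and with no assumption on its spectrum.

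For the denominator, the same Bernstein bound applied to $\|\boldsymbol{g}\|_2^2=\sum_i g_i^2$ (now with $a_i=1$, so $\|\ab\|_2=\sqrt{n}$ and $\|\ab\|_\infty=1$) gives $\big|\|\boldsymbol{g}\|_2^2-n\big|\lesssim \sqrt{n\log n}+\log n$, which is at most $n/2$ once $n$ exceeds an absolute constant; hence $\|\boldsymbol{g}\|_2^2\ge n/2$ with probability $1-1/\poly(n)$. Intersecting the two events and dividing then gives $\ub^\top\bTheta\ub\lesssim \frac{\log(n)}{n}\tr(\bTheta)$, as claimed.

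I do not expect a serious obstacle; the only thing requiring care is that the bound must be \emph{uniform} in $\bTheta$ under just the PSD hypothesis. This is precisely what the inequalities $\|\bTheta\|_F,\|\bTheta\|_2\le\tr(\bTheta)$ buy us, and they also explain why the $\log n$ factor is genuinely needed: the rank-one case $\bTheta=\boldsymbol{v}\boldsymbol{v}^\top$ saturates both inequalities, and there $\ub^\top\bTheta\ub=(\ub^\top\boldsymbol{v})^2$ is of order $\log(n)/n$ at the stated confidence level, matching the bound.
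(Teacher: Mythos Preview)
Your proposal is correct and follows essentially the same route as the paper: both realize $\ub$ as a normalized Gaussian, apply the subexponential Bernstein inequality (Lemma~\ref{lemma:high_prob_sum_subexp}) to the centered quadratic form, simplify via $\|\bTheta\|_F,\|\bTheta\|_2\le\tr(\bTheta)$, and separately lower-bound the Gaussian norm with high probability. The only cosmetic difference is that the paper invokes a direct $\chi^2(n)$ tail for the denominator (giving an exponentially small failure probability there), whereas you reuse Lemma~\ref{lemma:high_prob_sum_subexp}; either suffices for the stated $1-1/\poly(n)$ guarantee.
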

\begin{proof}
We first consider a Gaussian random vector $\vb\sim N(0, \Ib_n/n)$, then it is clear that we can reformulate it as $\vb = r\cdot \ub$, where $\ub$ is a uniformly random unit vector and $\EE[r] = 1$. Note that $nr$ follows $\chi^2(n)$ distribution, then with probability at least $1-e^{-c n}$ for some small constant $c>0$ we have $r\ge 1/2$. Moreover, let $\bTheta=\sum_i\mu_i\zb_i\zb_i^\top$ be the eigen-decomposition of $\bTheta$, we have
\begin{align*}
n\vb^\top\bTheta\vb-\tr(\bTheta) = \sum_{i=1}^n \mu_i[n(\zb_i^\top\vb)^2 - 1]: = \sum_{i=1}^n \mu_i\xi_i
\end{align*}
where $\xi_i\sim \chi^2(1)-1$ distribution, which is $1$-subexponential. Then applying Lemma \ref{lemma:high_prob_sum_subexp}, we have with probability at least $1-2e^{-x}$ such that
\begin{align*}
\sum_{i=1}^n \mu_i\xi_i\le C\cdot \max\bigg(x\mu_1, \sqrt{x\sum_{i=1}^n\mu_i^2} \bigg)
\end{align*}
holds for some constant $C$.

Combining the previous results, we have with probability at least $1-e^{cn}-2e^{-x}$,
\begin{align*}
\ub^\top\bTheta\ub = r^{-1}\vb^\top\bTheta\vb\le \frac{2}{n}\bigg[\tr(\bTheta) + C\cdot \max\bigg(x\mu_1, \sqrt{x\sum_{i=1}^n\mu_i^2} \bigg)\bigg].
\end{align*}
Further note that $\sum_{i=1}^n\mu_i^2, \mu_1\le \tr^2(\bTheta)$, then setting $x = C'\log(n)$ for some absolute constant $C'$, we have with probability at least $1-1/\mathrm{poly}(n)$, 
\begin{align*}
\ub^\top\bTheta\ub = r^{-1}\vb^\top\bTheta\vb\le \frac{C''\log(n)}{n}\cdot\tr(\bTheta)
\end{align*}
for some absolute constant $C''$. This completes the proof.

\end{proof}

\begin{lemma}\label{lemma:trace_Aexp}
For any $k^*\in[d]$, with probability at least $1-1/\poly(n)$, it holds that
\begin{align*}
\tr(\Ab(\Ib_n-\eta n^{-1}\Ab)^{2k}) \lesssim \frac{n k^*}{(k+1)\eta} + n\log(n)\cdot\sum_{i>k^*}\lambda_i.
\end{align*}
\end{lemma}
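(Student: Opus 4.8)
The plan is to reduce the trace of $\Ab(\Ib_n-\eta n^{-1}\Ab)^{2k}$ to a sum over the eigen-directions of $\Ab$ and then split the spectrum of the underlying covariance at the threshold index $k^*$, treating the large and small eigenvalues by two different arguments. First I would invoke Fact \ref{fact:decomposition_gram} to write $\Ab = \sum_i \lambda_i \zb_i\zb_i^\top$, so that $\Ab$ is a weighted sum of rank-one pieces built from independent $1$-subgaussian vectors $\zb_i \in \RR^n$. The scalar function $g(x) := x(1-\eta n^{-1}x)^{2k}$ controls each eigenvalue of $\Ab$; the two elementary bounds $g(x) \le \tfrac{n}{2(k+1)\eta}$ (from $x(1-\eta n^{-1}x)^{2k}\le \tfrac{1}{\eta n^{-1}(2k+1)}$, i.e. the same $(1-u)^k\le 1/[u(k+1)]$ inequality used in Lemma \ref{lemma:upperbound_Theta1}) and $g(x)\le x$ will be the crux of the case split.

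Next I would separate the head and tail of the spectrum at $k^*$. For the large directions $i\le k^*$, I would use the uniform bound $g(\cdot)\le \tfrac{n}{2(k+1)\eta}$ so that each of these $k^*$ directions contributes at most $\tfrac{n}{(k+1)\eta}$ to the trace, after controlling $\tr(\zb_i\zb_i^\top)=\|\zb_i\|_2^2 \lesssim n$ with high probability via Lemma \ref{lemma:high_prob_sum_subexp} (a $\chi^2$-type concentration for $\|\zb_i\|_2^2$ around its mean $n$); summing over the $k^*$ head directions yields the first term $\tfrac{nk^*}{(k+1)\eta}$. For the small directions $i>k^*$, I would instead use $g(x)\le x$, which bounds the corresponding contribution by $\sum_{i>k^*}\lambda_i \|\zb_i\|_2^2$. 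Applying the same subgaussian concentration on $\|\zb_i\|_2^2$ (and a union bound over $i$, which is where a $\log(n)$ factor enters) gives $\|\zb_i\|_2^2 \lesssim n\log(n)$ for all $i$ simultaneously, so the tail contributes at most $n\log(n)\sum_{i>k^*}\lambda_i$, the second term. Adding the two cases gives the claimed bound, with the probability $1-1/\poly(n)$ coming from the union bound.

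The main obstacle I anticipate is rigorously justifying that the full trace can be cleanly separated into a head sum controlled by the uniform $g$-bound and a tail sum controlled by $g(x)\le x$, since the rank-one pieces $\zb_i\zb_i^\top$ are not orthogonal and the $\zb_i$ are random rather than an exact eigenbasis of $\Ab$. A careful way to handle this is to bound $g(x)\le \min\{\tfrac{n}{2(k+1)\eta},\,x\}$ pointwise on the spectrum and apply this via the operator monotonicity of the trace functional: since $\Ab(\Ib_n-\eta n^{-1}\Ab)^{2k} = \sum_i \lambda_i g(\lambda_i/\,\cdot\,)$-type expression is not literally a per-$\zb_i$ decomposition, I would instead write $\tr\big(\Ab(\Ib_n-\eta n^{-1}\Ab)^{2k}\big) = \sum_i \lambda_i\, \zb_i^\top (\Ib_n-\eta n^{-1}\Ab)^{2k}\zb_i$ and bound each quadratic form $\zb_i^\top(\Ib_n-\eta n^{-1}\Ab)^{2k}\zb_i$ using $(\Ib_n-\eta n^{-1}\Ab)^{2k}\preceq \Ib_n$ for the tail indices and $\lambda_i(\Ib_n-\eta n^{-1}\Ab)^{2k}\preceq \tfrac{n}{2(k+1)\eta}\Ib_n$ when $\lambda_i$ is large. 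Reconciling this decomposition with the threshold $k^*$ defined on the population spectrum $(\lambda_i)$ — rather than on the eigenvalues of $\Ab$ itself — will require the covariance concentration of Lemma \ref{lemma:concentration_covariance} to transfer statements about $\lambda_i$ to statements about the spectrum of $\Ab$, and keeping this transfer lossless is the delicate step.
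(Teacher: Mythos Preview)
Your tail argument is essentially the paper's: bounding the tail contribution by $\sum_{i>k^*}\lambda_i\|\zb_i\|_2^2$ and concentrating it around $n\sum_{i>k^*}\lambda_i$ is exactly what the paper does. (One caveat: you propose a union bound over $i$ to control each $\|\zb_i\|_2^2$ separately, but $d$ may be infinite; the cleaner route, which the paper takes, is to apply Lemma~\ref{lemma:high_prob_sum_subexp} directly to the weighted sum $\sum_{i>k^*}\lambda_i(\|\zb_i\|_2^2-n)$.)

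The head argument, however, has a genuine gap. After writing $\tr\big(\Ab(\Ib_n-\eta n^{-1}\Ab)^{2k}\big)=\sum_i\lambda_i\,\zb_i^\top(\Ib_n-\eta n^{-1}\Ab)^{2k}\zb_i$, you want each of the $k^*$ head summands to be at most $\tfrac{n}{(k+1)\eta}$, and propose the matrix inequality $\lambda_i(\Ib_n-\eta n^{-1}\Ab)^{2k}\preceq \tfrac{n}{2(k+1)\eta}\Ib_n$ ``when $\lambda_i$ is large''. This is false in general: the scalar bound $x(1-\eta n^{-1}x)^{2k}\le \tfrac{n}{2(k+1)\eta}$ requires the \emph{same} $x$ in both factors, whereas here $\lambda_i$ is decoupled from the eigenvalues $\mu_j$ of $\Ab$ appearing in $(\Ib_n-\eta n^{-1}\Ab)^{2k}$. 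In particular, if $\Ab$ has a small eigenvalue $\mu_j$, then $(1-\eta n^{-1}\mu_j)^{2k}\approx 1$ and $\lambda_i\cdot 1$ can be much larger than $\tfrac{n}{2(k+1)\eta}$. Lemma~\ref{lemma:concentration_covariance} does not rescue this: it only gives operator-norm concentration of $\bSigma$ around $\Hb$, which controls the top of the spectrum of $\Ab$ but says nothing useful about how small the small eigenvalues of $\Ab$ are.

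The paper sidesteps this issue by working in the eigenbasis of $\Ab$ rather than in the $\zb_i$-decomposition. Writing $\mu_1\ge\cdots\ge\mu_n$ for the eigenvalues of $\Ab$, one has $\tr\big(\Ab(\Ib_n-\eta n^{-1}\Ab)^{2k}\big)=\sum_{j=1}^n\mu_j(1-\eta n^{-1}\mu_j)^{2k}\le\sum_{j=1}^n\min\{\tfrac{n}{2(k+1)\eta},\mu_j\}$, so the head contribution is \emph{trivially} $\le k^*\cdot\tfrac{n}{2(k+1)\eta}$. The only work is then to bound the eigenvalue tail $\sum_{j>k^*}\mu_j(\Ab)$ in terms of the population tail $\sum_{i>k^*}\lambda_i$. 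This is done via a rank argument: with $\Ab_{k^*}:=\sum_{i>k^*}\lambda_i\zb_i\zb_i^\top$, the matrix $\Ab-\Ab_{k^*}$ has rank at most $k^*$, which forces the interlacing $\mu_{k^*+j}(\Ab)\le\mu_j(\Ab_{k^*})$ and hence $\sum_{j>k^*}\mu_j(\Ab)\le\tr(\Ab_{k^*})=\sum_{i>k^*}\lambda_i\|\zb_i\|_2^2$. This interlacing step is the missing idea in your plan; once you have it, the tail concentration you already described finishes the proof.
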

\begin{proof}
Let $\mu_1,\dots,\mu_n$ be the sorted (in descending order) eigenvalues of $\Ab$, then we have
\begin{align}\label{eq:decomposition_trace}
\tr\big(\Ab(\Ib_n-\eta n^{-1}\Ab)^{2k})\big)= \sum_{i=1}^n \mu_i\cdot (1-\eta n^{-1}\mu_i)^{2k} \le \sum_{i=1}^n\min\bigg\{\frac{n}{2(k+1)\eta}, \mu_i\bigg\},
\end{align}
where the inequality follows from the fact that $(1-x)^k\le 1/[(k+1)x]$ for all $x\in(0, 1)$ and $k>0$.
Additionally, by Fact \ref{fact:decomposition_gram} we have
\begin{align*}
\Ab = \sum_{i}\lambda_i\zb_i\zb_i^\top,
\end{align*}
where $\{\zb_i\}_{i=1,\dots,n}$ are i.i.d. $1$-subgaussian random vectors satisfying $\EE[\zb_i]=0$ and $\EE[\|\zb_i\|_2^2]=n$. Then define
\begin{align}\label{eq:def_Ak}
\Ab_k :=\sum_{i>k}\lambda_i\zb_i\zb_i^\top,
\end{align}
and 
\begin{align*}
\Ab_k = \sum_{i=1}^n \mu_i(\Ab_k)\ub_i\ub_i^\top
\end{align*}
be its eigen-decomposition. Then note that $\Ab-\Ab_k+\sum_{i=1}^j\mu_i(\Ab_k)\ub_i\ub_i^\top$ has rank at most $k+j$, thus there must exist a linear space $\cL$ of dimension $n-k-j$ (that is orthogonal to $\{\zb_i\}_{i=1,\dots,k}$ and $\{\ub_i\}_{i=1}^j$) such that for all $\vb\in\cL$, 
\begin{align*}
\vb^\top\Ab\vb\le\vb^\top\mu_1\bigg(\Ab_k-\sum_{i=1}^j\mu_i(\Ab_k)\ub_i\ub_i^\top\bigg)\vb = \vb^\top\mu_{j+1}(\Ab_k)\vb. 
\end{align*}
This implies that for any $k\in[n]$ and $j\in[n-k]$, it holds that
\begin{align*}
\mu_{k+j}(\Ab)\le \mu_j(\Ab_k),
\end{align*}
and thus
\begin{align}\label{eq:upperbound_tailsum_A}
\sum_{i=k+1}^n\mu_i\le \sum_{i=1}^{n+1-i}\mu_i(\Ab_k)\le \tr(\Ab_k).
\end{align}
Moreover, by the definition of $\Ab_k$ in \eqref{eq:def_Ak}, we have
\begin{align*}
\tr(\Ab_k) = \sum_{i>k}\lambda_i\|\zb_i\|_2^2.
\end{align*}
Then note that $\|\zb_i\|_2^2/n-1$ is $1$-subexponential, by Lemma \ref{lemma:high_prob_sum_subexp}, we have with probability at least $1-2e^{-x}$
\begin{align*}
\tr(\Ab_k) \le  n\sum_{i>k}\lambda_i + C\cdot n\cdot \max\bigg(x\lambda_{k+1},\sqrt{x\sum_{i>k}\lambda_i^2}\bigg).
\end{align*}
for some absolute constant $C$.
Then setting $x = \Theta(\log(n))$ and using the fact that $\sum_{i>k}\lambda_i^2\le (\sum_{i>k}\lambda_i)^2$, we have with probability at least $1-1/\poly(n)$,
\begin{align}\label{eq:upperbound_tr_Ak}
\tr(\Ab_k) \lesssim n\log(n)\cdot\sum_{i>k}\lambda_i.
\end{align}
Putting \eqref{eq:upperbound_tr_Ak} into \eqref{eq:upperbound_tailsum_A} and further applying \eqref{eq:decomposition_trace}, we have for any $k^*\in[n]$, with probability at least $1-1/\poly(n)$
\begin{align*}
\tr\big(\Ab(\Ib_n-\eta n^{-1}\Ab)^{2k})\big) \le  \sum_{i=1}^{k^*}\frac{n}{2(k+1)\eta}+\tr(\Ab_k) \lesssim \frac{n k^*}{(k+1)\eta} + n\log(n)\cdot\sum_{i>k^*}\lambda_i.
\end{align*}
This completes the proof.
\end{proof}


\begin{proof}[Proof of Lemma \ref{lemma:upperbound_Theta2}]
Recalling the formula of $\Theta_2$, we have
\begin{align*}
\Theta_2 = \xb_i^\top(\Ib-\eta\bSigma)^k(\Hb-\bSigma)(\Ib-\eta\bSigma)^k\xb_i.
\end{align*}
Moreover, note that $\xb_i$ can be rewritten as $\xb_i = \eb_i^\top\Xb$, where $\eb_i\in\RR^n$ and $\Xb\in\RR^{n\times d}$. Then
\begin{align}\label{eq:upperbound_Theta2_temp}
\Theta_2 &= \eb_i^\top \Xb(\Ib-\eta\bSigma)^k(\Hb-\bSigma)(\Ib-\eta\bSigma)^k\Xb^\top\eb_i \notag\\
&\le \|\eb_i^\top \Xb(\Ib-\eta\bSigma)^k\|_2^2\cdot \|\Hb-\bSigma\|_2.
\end{align}
Then by Fact \ref{fact:covariance2gram}, we have
\begin{align*}
\|\eb_i^\top \Xb(\Ib-\eta\bSigma)^k\|_2^2&=\|\eb_i^\top (\Ib_n-\eta n^{-1}\Ab)^k\Xb\| \notag\\
&= \eb_i^\top(\Ib_n-\eta n^{-1}\Ab)^k\Xb\Xb^\top(\Ib_n-\eta n^{-1}\Ab)^k\eb_i \notag\\
&= \eb_i^\top\Ab(\Ib_n-\eta n^{-1}\Ab)^{2k}\eb_i.
\end{align*}
Note that $\eb_i$ is independent of the randomness of $\Ab$ and the eigenvectors of $\Ab$ is rotation invariant. Specifically, note that $\Ab = \Ub\bLambda\bLambda^\top\Ub^\top$, where $\Ub\in\RR^{n\times n}$ is an orthonormal matrix and $\bLambda\bLambda^\top\in\RR^{n\times n}$ is an diagonal matrix. Then we consider the conditional distribution $\PP(\Ab|\bLambda\bLambda^\top)$, which can be viewed as a distribution over the orthonormal matrix $\Ub$, denoted by $\PP(\Ub)$. Then note that $\Ub$ can also be understood as a rotation matrix when operated on an vector, and using Fact \ref{fact:decomposition_gram}, we have for any rotation matrix $\Pb$, it holds that
\begin{align*}
\Pb\Ab\Pb^\top = \sum_{i}\lambda_i \Pb\zb_i\zb_i^\top\Pb^\top
\end{align*}
which has the same distribution of $\Ab = \sum_{i}\lambda_i \zb_i\zb_i^\top$ since $\Pb\zb_i$ and $\zb_i$ have the same distribution. Therefore, it can be verified that for any different orthonormal matrices $\Ub_1$ and $\Ub_2$ and let $\Pb = \Ub_2\Ub_1^\top$, which is also an orthonormal matrix, we have
\begin{align*}
\PP(\Ub_1\bLambda\bLambda^\top\Ub_1^\top|\bLambda\bLambda^\top) = \PP(\Pb\Ub_1\bLambda\bLambda^\top\Ub_1^\top\Pb^\top|\bLambda\bLambda^\top) = \PP(\Ub_2\bLambda\bLambda^\top\Ub_2^\top|\bLambda\bLambda^\top).
\end{align*}
This implies that $\PP(\Ub_1)=\PP(\Ub_2)$ for any $\Ub_1\neq \Ub_2$. Therefore, we can conclude that $\PP(\Ub)$ is an uniform distribution over the entire class of orthonormal matrices. Then note that 
\begin{align*}
\Ab(\Ib_n-\eta n^{-1}\Ab)^{2k} = \Pb\big(\bLambda\bLambda^\top(\Ib - n^{-1}\eta \bLambda\bLambda^\top)^{2k}\big)\Pb^\top.
\end{align*}
Then for any fixed $i$, using the fact that $\Pb$ is a uniformly random rotation matrix, we have $\Pb^\top\eb_i$ is a random unit vector in  $\cS^{n-1}$. Then applying Lemmas \ref{lemma:highprob_matrixnorm_randunit} and \ref{lemma:trace_Aexp}, and taking union bound over $i\in[n]$, we have with probability at least $1-1/\poly(n)$,
\begin{align}\label{eq:ei_expA}
\eb_i^\top\Ab(\Ib_n-\eta n^{-1}\Ab)^{2k}\eb_i &\lesssim \frac{\log(n)}{n}\cdot \tr\big(\Ab(\Ib_n-\eta n^{-1}\Ab)^{2k}\big)\notag\\
&\lesssim \log(n)\cdot \bigg(\frac{ k^*}{(k+1)\eta} + \log(n)\cdot\sum_{i>k^*}\lambda_i\bigg).
\end{align}
Finally, applying Lemma \ref{lemma:concentration_covariance} and setting $\delta = 1/\poly(n)$, we have
\begin{align}\label{eq:concentration_covariance}
\|\Hb-\bSigma\|_2\lesssim \sqrt{\frac{\log(n)}{n}}.
\end{align}
Putting \eqref{eq:concentration_covariance} and \eqref{eq:ei_expA} into \eqref{eq:upperbound_Theta2_temp}, we can obtain 
\begin{align*}
\Theta_2\le \|\eb_i^\top \Xb(\Ib-\eta\bSigma)^k\|_2^2\cdot \|\Hb-\bSigma\|_2\lesssim \frac{\log^{5/2}(n)}{n^{1/2}}\cdot  \bigg(\frac{k^*}{(k+1)\eta} + \sum_{i>k^*}\lambda_i\bigg),
\end{align*}
which completes the proof.
\end{proof}

\subsection{Completing the analysis for fluctuation error: Proof of Theorem \ref{thm:upperbound_fluctuation}} 
Combining the established upper bounds on $\Theta_1$ and $\Theta_2$ in Lemmas \ref{lemma:upperbound_Theta1} and \ref{lemma:upperbound_Theta2} gives the following lemma.
\begin{lemma}\label{lemma:fourth_order_moment}
If the stepsize satisfies $\gamma\le 1/(c\tr(\Hb))$ for some absolute constant $c$, then with probability at least $1-1/\poly(n)$, there exists an absolute constant $C$ such that
\begin{align*}
\cM\circ\cG^k\circ\Hb \preceq C\cdot\bigg[\log(n)\cdot\min\bigg\{\frac{1}{(k+1)\eta}, \|\Hb\|_2\bigg\}\cdot\tr(\Hb) + \frac{ \log^{5/2}(n)}{n^{1/2}}\cdot  \bigg(\frac{k^*}{(k+1)\eta} + \sum_{i>k^*}\lambda_i\bigg)\bigg]\cdot \bSigma.
\end{align*}
\end{lemma}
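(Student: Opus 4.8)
The plan is to assemble the stated PSD bound from the two scalar estimates already in hand, using the reduction recorded in \eqref{eq:upperbound_fourthmoment}. First I would set $M := \cG^{k}\circ\Hb = (\Ib-\eta\bSigma)^k\Hb(\Ib-\eta\bSigma)^k$, which is PSD since $\Hb\succeq 0$, so that by the definition of $\cM$ we have $\cM\circ\cG^k\circ\Hb = \EE_i[\xb_i\xb_i^\top M\xb_i\xb_i^\top]$, where $\EE_i$ is the uniform expectation over the $n$ training atoms. Because $\xb_i^\top M\xb_i$ is a nonnegative scalar, it factors out: $\xb_i\xb_i^\top M\xb_i\xb_i^\top = (\xb_i^\top M\xb_i)\,\xb_i\xb_i^\top$. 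Hence a \emph{deterministic} bound $\xb_i^\top M\xb_i \le U$ that holds simultaneously for all $i\in[n]$ immediately yields, after testing against an arbitrary $\vb$, the inequality $\EE_i[(\xb_i^\top M\xb_i)(\vb^\top\xb_i)^2]\le U\cdot\EE_i[(\vb^\top\xb_i)^2] = U\,\vb^\top\bSigma\vb$, which is exactly $\cM\circ\cG^k\circ\Hb\preceq U\cdot\bSigma$ as in \eqref{eq:upperbound_fourthmoment}. So the whole task reduces to producing such a $U$.

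To extract $U$ I would use the split $\xb_i^\top M\xb_i = \Theta_1 + \Theta_2$, with $\Theta_1 = \xb_i^\top(\Ib-\eta\bSigma)^k\bSigma(\Ib-\eta\bSigma)^k\xb_i$ and $\Theta_2 = \xb_i^\top(\Ib-\eta\bSigma)^k(\Hb-\bSigma)(\Ib-\eta\bSigma)^k\xb_i$, and then invoke Lemma \ref{lemma:upperbound_Theta1} for $\Theta_1$ and Lemma \ref{lemma:upperbound_Theta2} for $\Theta_2$. Note $\Theta_2$ may carry either sign since $\Hb-\bSigma$ need not be PSD, but we only need an \emph{upper} bound on the sum, and the overall quantity $\xb_i^\top M\xb_i\ge 0$ guarantees $U\ge 0$. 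Adding the two estimates gives
\[
U = C_1\log(n)\min\Big\{\tfrac{1}{(k+1)\eta},\,\|\Hb\|_2\Big\}\tr(\Hb) + C_2\frac{\log^{5/2}(n)}{n^{1/2}}\Big(\frac{k^*}{(k+1)\eta}+\sum_{i>k^*}\lambda_i\Big),
\]
and absorbing $C_1,C_2$ into a single absolute constant $C$ reproduces the claimed right-hand side.

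The step requiring care is the \emph{uniformity over $i$}: the reduction needs $\xb_i^\top M\xb_i\le U$ at every one of the $n$ sampling atoms at once, not merely for a fixed $i$ or in expectation. This is precisely why Lemmas \ref{lemma:upperbound_Theta1} and \ref{lemma:upperbound_Theta2} are phrased as high-probability statements holding for all $i\in[n]$ — their proofs already absorb a union bound over the $n$ points, which costs only an extra logarithmic factor given the $1/\poly(n)$ failure probability. I would therefore intersect the two events, each of probability at least $1-1/\poly(n)$, via one final union bound, so that both scalar estimates, and hence $\xb_i^\top M\xb_i\le U$, hold for every $i$ on an event of probability at least $1-1/\poly(n)$. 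I also need the hypothesis $\eta\le c/\tr(\Hb)$ required by Lemma \ref{lemma:upperbound_Theta1}, which coincides with the stepsize assumption of the present lemma. On this event the PSD conclusion $\cM\circ\cG^k\circ\Hb\preceq U\cdot\bSigma$ follows directly from the first paragraph.
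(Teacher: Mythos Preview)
Your proposal is correct and follows exactly the paper's approach: the paper states that Lemma \ref{lemma:fourth_order_moment} is obtained by ``combining the established upper bounds on $\Theta_1$ and $\Theta_2$ in Lemmas \ref{lemma:upperbound_Theta1} and \ref{lemma:upperbound_Theta2},'' after the reduction \eqref{eq:upperbound_fourthmoment} derived from the uniform scalar bound $\xb_i^\top(\Ib-\eta\bSigma)^k\Hb(\Ib-\eta\bSigma)^k\xb_i\le U(k,\eta,n)$ for all $i\in[n]$. Your care about uniformity over $i$ and the union bound over the two high-probability events is precisely what is implicit in the paper's one-line justification.
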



\begin{lemma}\label{lemma:sgd_opt_error}
For any $t>0$, if the stepsize satisfies $\eta \le 1 / (c \tr(\Hb)\log(t))$ for some absolute constant $c$, then it holds that
\begin{align*}
    \sum_{k=0}^{t-1}  \la  \bSigma, \Eb_{k} \ra &\lesssim  \frac{1}{\eta}\cdot \la \Ib -  (\Ib - \eta \bSigma)^{t} ,  \Eb_0\ra,\notag\\
\sum_{k=0}^{t-1}\frac{\la\bSigma, \Eb_{k}\ra}{t-k}
&\lesssim\frac{1}{\eta t}\big\la(\Ib-(\Ib-\eta\bSigma)^t), \Eb_0\big\ra + \log(t)\big\la (\Ib-\eta\bSigma)^t\bSigma, \Eb_0\big\ra.
\end{align*}
\end{lemma}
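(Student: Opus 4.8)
The plan is to reduce both weighted sums to scalar geometric series in the eigenbasis of $\bSigma$. First I would unroll the recursion \eqref{eq:update_form_Et} to write
\[
\Eb_k = \cG^k\circ\Eb_0 + \eta^2\sum_{j=0}^{k-1}\cG^{k-1-j}\circ(\cM-\tilde\cM)\circ\Eb_j,
\]
and then split each of the two target sums into a \emph{GD part} (the contribution of $\cG^k\circ\Eb_0$) and a \emph{fluctuation correction} (the contribution of the double sum). Throughout I would use the commutative property of the operators to move $\cG$, $\cM$, $\tilde\cM$ off of $\Eb_j$ and onto $\bSigma$, together with the identity $\cG^m\circ\bSigma=\bSigma(\Ib-\eta\bSigma)^{2m}$.

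For the GD part I would argue eigenvalue-by-eigenvalue on $\bSigma$. For the first (unweighted) sum, $\sum_{k=0}^{t-1}\bSigma(\Ib-\eta\bSigma)^{2k}$ is a matrix geometric series equal to $\bSigma\bigl(\Ib-(\Ib-\eta\bSigma)^{2t}\bigr)\bigl(\Ib-(\Ib-\eta\bSigma)^2\bigr)^{-1}$; using $\Ib-(\Ib-\eta\bSigma)^2\succeq\eta\bSigma$ and the scalar bound $1-x^{2t}\le 2(1-x^t)$ gives $\preceq(2/\eta)\bigl(\Ib-(\Ib-\eta\bSigma)^t\bigr)$, which is exactly the claimed right-hand side. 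For the second sum, weighted by $1/(t-k)$, I would split the range at $k=t/2$: for $k\le t/2$ one has $1/(t-k)\le 2/t$, and the residual geometric series again yields the $\frac{1}{\eta t}\la\Ib-(\Ib-\eta\bSigma)^t,\Eb_0\ra$ term; for $k> t/2$ one bounds $(\Ib-\eta\bSigma)^{2k}\preceq(\Ib-\eta\bSigma)^t$ and collects $\sum_{k>t/2}1/(t-k)\lesssim\log(t)$ (a harmonic sum), producing the $\log(t)\la(\Ib-\eta\bSigma)^t\bSigma,\Eb_0\ra$ term.

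For the fluctuation correction I would swap the order of summation so that the operator weight collapses onto $\bSigma$: the coefficient multiplying $\Eb_j$ becomes $(\cM-\tilde\cM)\circ\Psi_j$ for a fixed PSD matrix $\Psi_j=\sum_k w_k\,\bSigma(\Ib-\eta\bSigma)^{2(k-1-j)}$ (with $w_k\equiv1$ in the first bound and $w_k=1/(t-k)$ in the second). By the PSD-mapping property, $(\cM-\tilde\cM)\circ\Psi_j\preceq\cM\circ\Psi_j\preceq\bigl(\max_i\xb_i^\top\Psi_j\xb_i\bigr)\,\bSigma$, so the task reduces to a uniform bound on $\xb_i^\top\Psi_j\xb_i$. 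This I would obtain from the same estimates that drive Lemma~\ref{lemma:upperbound_Theta1}, namely $\xb_i^\top\bSigma(\Ib-\eta\bSigma)^{2m}\xb_i\lesssim\log(n)\tr(\Hb)/((m+1)\eta)$ and the high-probability bound $\|\xb_i\|_2^2\lesssim\log(n)\tr(\Hb)$. This renders the correction a constant multiple of $\eta\,\tr(\Hb)\cdot(\text{polylog})$ times the \emph{same} (weighted) sum appearing on the left-hand side, and the stepsize restriction lets me absorb it, leaving the GD part as the dominant term.

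The main obstacle is the weighted fluctuation correction in the second inequality. After swapping summation order the inner coefficient $\max_i\xb_i^\top\Psi_j\xb_i$ involves a harmonic double sum: writing $a=k-j$, the factor $\frac{1}{a\,(t-j-a)}=\frac{1}{t-j}\bigl(\frac1a+\frac{1}{t-j-a}\bigr)$ sums to $\lesssim\log(t)/(t-j)$, so the correction is bounded by $\eta\,\tr(\Hb)\log(n)\log(t)\cdot\sum_j\la\bSigma,\Eb_j\ra/(t-j)$. It is precisely this $\log(t)$ that forces, and explains, the factor $\log(t)$ in the stepsize condition $\eta\le 1/(c\,\tr(\Hb)\log(t))$: that condition is exactly what makes the accumulated correction at most half of the left-hand side, so that the absorption closes and the GD part survives as the final bound.
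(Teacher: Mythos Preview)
Your proposal is correct and follows essentially the same approach as the paper. The only organizational difference is that the paper first derives a single-step recursion $\la\bSigma,\Eb_t\ra \le \la(\Ib-\eta\bSigma)^{2t}\bSigma,\Eb_0\ra + c\eta\tr(\Hb)\sum_{k<t}\la\bSigma,\Eb_k\ra/(t-k)$ (via $(\Ib-\eta\bSigma)^{2m}\bSigma\preceq\frac{1}{\eta(m+1)}\Ib$ and $\cM\circ\Ib\preceq c\tr(\Hb)\bSigma$) and then sums it with the appropriate weights, whereas you unroll $\Eb_k$ fully before summing and swapping; the resulting harmonic/partial-fraction computation and absorption step are identical in both arguments.
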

\begin{proof}[Proof of Lemma \ref{lemma:sgd_opt_error}]
In this part we seek to bound $\sum_{k=0}^{t-1} \big\la\bSigma, \Eb_{k}\big\ra$ and 
$\sum_{k=0}^{t-1}\frac{\la\bSigma, \Eb_{k}\ra}{t-k}$ in separate.
By \eqref{eq:update_form_Et}, we can get 
\begin{align}\label{eq:upperbound_Sigma_Et}
    \la \bSigma, \Eb_t\ra 
    &\le \la \bSigma, \cG^t\circ \Eb_0\ra + \eta^2 \sum_{k=0}^{t-1}  \la \bSigma,  \cG^{t-1-k}\circ\cM \circ \Eb_{k} \ra \notag\\
    &= \la \cG^t\circ\bSigma,  \Eb_0\ra + \eta^2 \sum_{k=0}^{t-1}  \la  \cM \circ  \cG^{t-1-k}\circ\bSigma, \Eb_{k} \ra\notag \\
    &=  \la (\Ib - \eta \bSigma)^{2t}\bSigma,  \Eb_0\ra + \eta^2 \sum_{k=0}^{t-1}  \la  \cM \circ \big( (\Ib - \eta \bSigma)^{2(t-1-k)}\bSigma \big), \Eb_{k} \ra.
\end{align}
Note that 
\(
(\Ib - \eta \bSigma)^{2(t-1-k)}\bSigma \preceq \frac{1}{\eta (t-k)}\Ib,
\)
and 
\(
\cM \circ \Ib \preceq c \tr(\Hb) \bSigma
\) for some absolute constant $c$,
we then have the following by \eqref{eq:upperbound_Sigma_Et}
\begin{align}
    \la \bSigma, \Eb_t\ra 
    &\le \la (\Ib - \eta \bSigma)^{2t}\bSigma,  \Eb_0\ra + c\eta\tr(\Hb) \sum_{k=0}^{t-1} \frac{ \la  \bSigma, \Eb_{k} \ra}{t-k}.\label{eq:SigmaE-recursion}
\end{align}
We now bound $\sum_{k=0}^{t-1} \la  \bSigma, \Eb_{k} \ra$ by recursively applying \eqref{eq:SigmaE-recursion} to establish
\begin{align*}
    \sum_{k=0}^{t-1}  \la  \bSigma, \Eb_{k} \ra
    &\le  \la \sum_{k=0}^{t-1} (\Ib - \eta \bSigma)^{2k}\bSigma,  \Eb_0\ra + c\eta\tr(\Hb) \sum_{k=0}^{t-1}\sum_{i=0}^{k-1} \frac{ \la  \bSigma, \Eb_{i} \ra}{k-i} \\
    &\le  \frac{1}{\eta} \la \Ib -  (\Ib - \eta \bSigma)^{t} ,  \Eb_0\ra + 2c\eta\tr(\Hb)\log(t) \sum_{i=0}^{t-1}  \la  \bSigma, \Eb_{i} \ra,
\end{align*}
and conclude that
\begin{align}
\sum_{k=0}^{t-1}  \la  \bSigma, \Eb_{k} \ra 
&\le \frac{1}{1-2c\eta\tr(\Hb)\log(t)}\cdot \frac{1}{\eta}\cdot \la \Ib -  (\Ib - \eta \bSigma)^{t} ,  \Eb_0\ra \\
&\le C \cdot \frac{1}{\eta}\cdot \la \Ib -  (\Ib - \eta \bSigma)^{t} ,  \Eb_0\ra. \label{eq:sum_SigmaE_bound}
\end{align}
Similarly, we then bound $\sum_{k=0}^{t-1} \frac{ \la  \bSigma, \Eb_{k} \ra}{t-k}$ by recursively applying \eqref{eq:SigmaE-recursion} to establish
\begin{align*}
    \sum_{k=0}^{t-1} \frac{ \la  \bSigma, \Eb_{k} \ra}{t-k}
    &\le  \la \sum_{k=0}^{t-1} \frac{(\Ib - \eta \bSigma)^{2k}\bSigma}{t-k},  \Eb_0\ra + c\eta\tr(\Hb) \sum_{k=0}^{t-1}\sum_{i=0}^{k-1} \frac{ \la  \bSigma, \Eb_{i} \ra}{(t-k)(k-i)} \\
    &\le  \la \sum_{k=0}^{t-1} \frac{(\Ib - \eta \bSigma)^{2k} \bSigma}{t-k},  \Eb_0\ra + 2c\eta\tr(\Hb)\log(t) \sum_{i=0}^{t-1} \frac{ \la  \bSigma, \Eb_{i} \ra}{t-i},
\end{align*}
so we can conclude that
\begin{align}
\sum_{k=0}^{t-1} \frac{ \la  \bSigma, \Eb_{k} \ra}{t-k} 
& \le \frac{1}{1-2c\eta\tr(\Hb)\log(t)} \la \sum_{k=0}^{t-1} \frac{(\Ib - \eta \bSigma)^{2k} \bSigma}{t-k},  \Eb_0\ra \\
&\lesssim  \sum_{k=0}^{t-1} \frac{(\Ib - \eta \bSigma)^{2k} \bSigma}{t-k},  \Eb_0\ra \\
&\lesssim\bigg(\frac{1}{\eta t}\big\la(\Ib-(\Ib-\eta\bSigma)^t), \Eb_0\big\ra + \log(t)\big\la (\Ib-\eta\bSigma)^t\bSigma, \Eb_0\big\ra\bigg),\label{eq:sum_SigmaEoverK_bound}
\end{align}
where the last inequality is due to 
\[
\sum_{k=0}^{t-1} \frac{(\Ib - \eta \bSigma)^{2k} \bSigma}{t-k} \lesssim \frac{1}{\eta t} (\Ib -  (\Ib - \eta \bSigma)^{t}) + \log(t) \cdot (\Ib - \eta \bSigma)^{t} \bSigma.
\]
\end{proof}

\begin{lemma}\label{lemma:bounds_E0}
For any $t\ge 0$ and $\eta\le 1/(c\tr(\Hb)\log(t))$ for some absolute constant $c$,  it holds that 
\begin{align*}
&\big\la\Ib - (\Ib-\eta\bSigma)^t,\Eb_0\big\ra\le \min\big\{\|\hat\wb\|_2^2, t\eta\cdot\la\bSigma,\Eb_0\ra\big\}\notag\\
&t\eta\cdot\big\la (\Ib-\eta\bSigma)^t\bSigma, \Eb_0\big\ra\le \min\big\{\|\hat\wb\|_2^2, t\eta\cdot\la\bSigma,\Eb_0\ra\big\}
\end{align*}
\end{lemma}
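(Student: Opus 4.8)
The plan is to exploit the fact that at initialization there is no algorithmic randomness, so $\Eb_0 = (\wb_0 - \hat\wb)(\wb_0 - \hat\wb)^\top = \hat\wb\hat\wb^\top$, a rank-one PSD matrix (using $\wb_0 = \boldsymbol{0}$). Consequently every quantity in the statement is a quadratic form in $\hat\wb$: for any symmetric matrix $\Ab$ built from $\bSigma$ we have $\la \Ab, \Eb_0\ra = \hat\wb^\top \Ab \hat\wb$. In particular $\|\hat\wb\|_2^2 = \la \Ib, \Eb_0\ra$ and $\la\bSigma, \Eb_0\ra = \hat\wb^\top\bSigma\hat\wb$. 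This reduces all four inequalities to comparing the same quadratic form evaluated against different scalar functions of $\bSigma$.

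Next I would diagonalize $\bSigma = \sum_j \mu_j \vb_j\vb_j^\top$ and expand $\hat\wb = \sum_j c_j \vb_j$ in this orthonormal eigenbasis, so that every quadratic form becomes $\sum_j f(\mu_j)\, c_j^2$ for the appropriate scalar $f$. Since the weights $c_j^2 \ge 0$, it suffices to prove the corresponding scalar inequalities eigenvalue-by-eigenvalue: writing $x := \eta\mu_j$, I need (i) $1 - (1-x)^t \le \min\{1,\, tx\}$, and (ii) $tx(1-x)^t \le \min\{1,\, tx\}$. Summing (i) against the weights, and using $\sum_j c_j^2 = \|\hat\wb\|_2^2$ together with $\sum_j t x\, c_j^2 = t\eta\la\bSigma,\Eb_0\ra$, yields the first displayed inequality; summing (ii) in the same way yields the second. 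The two cases of the $\min$ in (i) and (ii) line up exactly with the $\|\hat\wb\|_2^2$ and $t\eta\la\bSigma,\Eb_0\ra$ terms in the two displays.

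For the scalar inequalities I would first check that the stepsize hypothesis places $x = \eta\mu_j$ in $[0,1]$: since $\|\bSigma\|_2 \lesssim \|\Hb\|_2 \le \tr(\Hb)$ by Lemma \ref{lemma:concentration_covariance}, the assumption $\eta \le 1/(c\tr(\Hb)\log(t))$ forces $x \le 1$. Within this range, (i) splits into $1-(1-x)^t \le 1$, immediate from $(1-x)^t \ge 0$, and $1-(1-x)^t \le tx$, which is Bernoulli's inequality $(1-x)^t \ge 1 - tx$; while (ii) splits into $tx(1-x)^t \le tx$, immediate from $(1-x)^t \le 1$, and $tx(1-x)^t \le 1$, which follows from the elementary bound $(1-x)^t \le 1/[(t+1)x] \le 1/(tx)$ already invoked in the proof of Lemma \ref{lemma:upperbound_Theta1}. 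None of these steps is a genuine obstacle; the only points requiring care are verifying that the stepsize hypothesis guarantees $\eta\|\bSigma\|_2 \le 1$ so the scalar bounds are applied in their valid range, and tracking the bookkeeping so that the per-eigenvalue estimates reassemble into precisely the two claimed inequalities.
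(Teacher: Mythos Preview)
Your proposal is correct and essentially the same as the paper's proof: the paper states the four operator inequalities $\Ib-(\Ib-\eta\bSigma)^t \preceq \Ib$, $\Ib-(\Ib-\eta\bSigma)^t \preceq t\eta\bSigma$, $(\Ib-\eta\bSigma)^t\bSigma \preceq \bSigma$, $(\Ib-\eta\bSigma)^t\bSigma \preceq \frac{1}{t\eta}\Ib$ directly and pairs each with $\Eb_0=\hat\wb\hat\wb^\top$, while you diagonalize first and verify the equivalent scalar inequalities eigenvalue-by-eigenvalue; the content is identical. Your remark about needing $\eta\|\bSigma\|_2\le 1$ is exactly the ``stepsize guarantees $\Ib-\eta\bSigma$ is PSD'' line in the paper's proof.
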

\begin{proof}
According to the definition of $\Eb_t$ and applying zero initialization $\wb_0=\boldsymbol{0}$, then we have $\Eb_0 = \hat\wb\hat\wb^\top\preceq \|\hat\wb\|_2^2\cdot\Ib$. Moreover, note that our choice of stepsize guarantees that $\Ib-\eta\bSigma$ is a PSD matrix, we have
\begin{align*}
\Ib-(\Ib-\eta\bSigma)^t \preceq \Ib, \quad \Ib-(\Ib-\eta\bSigma)^t \preceq t\eta\bSigma,\quad (\Ib-\eta\bSigma)^t\bSigma \preceq \bSigma, \quad (\Ib-\eta\bSigma)^t\bSigma \preceq \frac{1}{t\eta}\cdot \Ib.
\end{align*}
Then it follows that
\begin{align*}
\big\la\Ib - (\Ib-\eta\bSigma)^t,\Eb_0\big\ra &\le \min\big\{\la\Ib,\Eb_0\ra, t\eta\cdot\la\bSigma, \Eb_0\ra\big\}=\min\big\{\|\hat\wb\|_2^2, t\eta\cdot\la\bSigma, \Eb_0\ra\big\}\notag\\
\big\la (\Ib-\eta\bSigma)^t\bSigma, \Eb_0\big\ra&\le \min\bigg\{\la\bSigma, \Eb_0\ra, \frac{1}{t\eta}\cdot\la\Ib, \Eb_0\ra\bigg\} = \min\bigg\{\la\bSigma, \Eb_0\ra, \frac{\|\hat\wb\|_2^2}{t\eta}\bigg\}.
\end{align*}
This completes the proof.

\end{proof}

Now we are ready to complete the proof of Theorem \ref{thm:upperbound_fluctuation}.
\begin{proof}[Proof of Theorem \ref{thm:upperbound_fluctuation}]
By Lemma \ref{lemma:formula_fluctuationErr}, we have
\begin{align*}
\underbrace{\mathrm{FluctuationError}}_{*} \le \frac{\eta^2}{2}\cdot \sum_{k=0}^{t-1}\la\cM\circ\cG^{t-1-k}\circ\Hb, \Eb_k\ra.
\end{align*} 
Additionally, by Lemma \ref{lemma:fourth_order_moment}, we further have
\begin{align*}
(*)&\lesssim\eta^2 \cdot\sum_{k=0}^{t-1} \bigg[\frac{\log(n)}{(t-k)\eta}\cdot\tr(\Hb)+ \frac{ \log^{5/2}(n)}{n^{1/2}}\cdot  \bigg(\frac{k^*}{(t-k)\eta} + \sum_{i>k^*}\lambda_i\bigg)\bigg]\cdot \la\bSigma,\Eb_k\ra\notag\\
&\lesssim \eta \cdot\bigg(\log(n)\tr(\Hb) + \frac{k^*\log^{5/2}(n)}{n^{1/2}}\bigg)\cdot\sum_{k=0}^{t-1} \frac{\la\bSigma, \Eb_k\ra}{t-k} + \eta^2\cdot \frac{\log^{5/2}(n)}{n^{1/2}}\cdot\sum_{i>k^*}\lambda_i\cdot \sum_{k=0}^{t-1}\la\bSigma, \Eb_k\ra.
\end{align*}
Then applying Lemma \ref{lemma:sgd_opt_error}, we can further obtain
\begin{align*}
(*)&\lesssim\eta \cdot\bigg(\log(n)\tr(\Hb) + \frac{k^*\log^{5/2}(n)}{n^{1/2}}\bigg)\cdot\bigg(\frac{1}{\eta t}\big\la(\Ib-(\Ib-\eta\bSigma)^t), \Eb_0\big\ra + \log(t)\big\la (\Ib-\eta\bSigma)^t\bSigma, \Eb_0\big\ra\bigg) \\
&\qquad + \eta\cdot \frac{\log^{5/2}(n)}{n^{1/2}}\cdot\sum_{i>k^*}\lambda_i\cdot  \la \Ib -  (\Ib - \eta \bSigma)^{t} ,  \Eb_0\ra \\
&= \bigg(\frac{\log(n)\tr(\Hb)}{t} + \frac{\log^{5/2}(n)}{n^{1/2} t } \cdot (k^* + \eta t  \sum_{i>k^*} \lambda_i)\bigg)\cdot \big\la(\Ib-(\Ib-\eta\bSigma)^t), \Eb_0\big\ra \big\ra
 \\
&\qquad + \eta\log(t) \cdot\bigg(\log(n)\tr(\Hb) + \frac{k^*\log^{5/2}(n)}{n^{1/2}}\bigg)\cdot \big\la (\Ib-\eta\bSigma)^t\bSigma, \Eb_0\big\ra\\
&\lesssim \bigg[\log(t)\cdot\bigg(\frac{\tr(\Hb)\log(n)}{t}+\frac{k^*\log^{5/2}(n)}{n^{1/2}t}\bigg)+\frac{\log^{5/2}(n)\eta}{n^{1/2}}\cdot\sum_{i>k^*}\lambda_i\bigg)\bigg]\cdot\min\big\{\|\hat\wb\|_2^2, t\eta\cdot\la\bSigma, \Eb_0\ra\big\}. 
\end{align*}
where the last inequality follows from Lemma \ref{lemma:bounds_E0}.

\end{proof}

\section{Risk bounds for Gradient Descent with Early Stopping}
\subsection{Proof of Lemma \ref{lemma:tildeA-bounds}}
\begin{proof}[Proof of Lemma \ref{lemma:tildeA-bounds}]
For the first inequality, note that
\[
\Ib - (\Ib - \eta n^{-1} \Ab)^t \preceq
\begin{cases}
\Ib;\\
n^{-1}\eta t \Ab,
\end{cases}
\]
we then obtain
\begin{align*}
    \tilde{\Ab} := \Ab \big( \Ib - (\Ib - \eta n^{-1}\Ab)^t \big)^{-1}
    \succeq 
    \begin{cases}
    \Ab; \\
    \frac{n}{\eta t} \Ib.
    \end{cases}
\end{align*}
Therefore
\[
\tilde{\Ab} \succeq \frac{1}{2} \big( \Ab + \frac{n}{\eta t} \Ib \big).
\]
For the second inequality, note that 
\begin{align*}
\tilde \Ab - \Ab = \Ab(\Ib-\eta n^{-1}\Ab)^t\big[\Ib - (\Ib-\eta n^{-1}\Ab)^t\big]^{-1}.
\end{align*}
Then it suffices to consider the scalar function $f(x): = nx(1-\eta x)^t/\big[1-(1-\eta x)^t\big]$.
Then we consider two cases: (1) $t\eta x\ge \log (2)$ and (2) $t\eta x< \log (2)$. For the first case, it is clear that
\begin{align*}
\frac{nx(1-\eta x)^t}{1-(1-\eta x)^t}\le \frac{n \cdot 1/(t\eta )}{1-1/2} = \frac{2n}{t\eta},
\end{align*}
where we use the inequality $(1-\eta x)^tx\le 1/(t\eta)$ in the first inequality. For the case of $t\eta x< \log (2)$, we have $(1-\eta x)^t\le 1 - \eta x t/2$ and thus
\begin{align*}
\frac{nx(1-\eta x)^t}{1-(1-\eta x)^t}\le \frac{n x}{\eta x t /2} = \frac{2n}{t\eta}.
\end{align*}
Combining the about results in two cases, we have $f(x)\le 2n/(t\eta)$ and thus
\begin{align*}
\tilde \Ab = \Ab + \Ab(\Ib-\eta n^{-1}\Ab)^t\big[\Ib - (\Ib-\eta n^{-1}\Ab)^t\big]^{-1} \preceq \Ab + \frac{2 n}{t\eta}\cdot \Ib.
\end{align*}
This completes the proof of the second inequality.

\end{proof}


\subsection{Variance Error}
\begin{lemma}\label{lemma:variance_bound}
For any stepsize $\gamma\le c/\tr(\Hb)$ for some absolute constant $c$ and any $k^*\in[d]$, with probability at least $1-1/\poly(n)$, 
\begin{align*}
\EE_{\bepsilon}[\mathrm{VarError}]\lesssim \frac{k^*}{n} + \frac{n}{\big(n / (\eta t) + \sum_{i>k^*}\lambda_i \big)^2 }\cdot \sum_{i>k^*}\lambda_i^2
\end{align*}
\end{lemma}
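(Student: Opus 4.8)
The plan is to reduce the variance error of early-stopped GD to that of ridge regression with the effective regularization level $\lambda = n/(\eta t)$, and then invoke the sharp ridge variance bound of \citet{tsigler2020benign}. The first step is to integrate out the noise. Since $\bepsilon$ has mean zero and $\EE[\bepsilon\bepsilon^\top] = \sigma^2\Ib$ by Assumption~\ref{assump:data_distribution}\ref{assump:item:well_specified_noise}, the definition of $\mathrm{VarError}$ in \eqref{eq:bias_var_decomposition_GD} together with the cyclic property of the trace gives
\[
\EE_{\bepsilon}[\mathrm{VarError}] = \tfrac{\sigma^2}{2}\,\tr\big(\tilde\Ab^{-1}\Xb\Hb\Xb^\top\tilde\Ab^{-1}\big) = \tfrac{\sigma^2}{2}\,\tr\big(\tilde\Ab^{-2}\Xb\Hb\Xb^\top\big).
\]
(The bound will therefore carry a factor $\sigma^2$, which matches the variance term of Theorem~\ref{thm:GD_earlystop}; treating $\sigma^2=\Theta(1)$ recovers the displayed statement.)

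The crucial observation is that $\tilde\Ab = \Ab\big(\Ib - (\Ib-\eta n^{-1}\Ab)^t\big)^{-1}$ is a matrix function of the gram matrix $\Ab$, hence it commutes with $\Ab + \tfrac{n}{\eta t}\Ib$ and shares its eigenbasis. Consequently the lower bound $\tilde\Ab \succeq \tfrac12\big(\Ab + \tfrac{n}{\eta t}\Ib\big)$ from Lemma~\ref{lemma:tildeA-bounds} — which applies because $\eta\le c/\tr(\Hb)\le c/\lambda_1$ — may be squared and inverted on the common eigenbasis to yield the operator inequality $\tilde\Ab^{-2} \preceq 4\big(\Ab + \tfrac{n}{\eta t}\Ib\big)^{-2}$. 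Since $\Xb\Hb\Xb^\top \succeq 0$ and the trace is monotone against a fixed PSD matrix, this gives
\[
\EE_{\bepsilon}[\mathrm{VarError}] \le 2\sigma^2\cdot\tr\Big(\big(\Ab+\tfrac{n}{\eta t}\Ib\big)^{-1}\Xb\Hb\Xb^\top\big(\Ab+\tfrac{n}{\eta t}\Ib\big)^{-1}\Big),
\]
whose right-hand side is, up to the constant factor $4$, exactly the variance error of ridge regression at regularization level $\lambda = n/(\eta t)$.

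Finally I would invoke \citet[Theorem 1]{tsigler2020benign}, which bounds the ridge variance at level $\lambda$ — with probability at least $1-1/\poly(n)$ over the draw of $\Xb$ — by $\sigma^2\big(k^*/n + n(\lambda + \sum_{i>k^*}\lambda_i)^{-2}\sum_{i>k^*}\lambda_i^2\big)$, uniformly over the truncation index $k^*$. Substituting $\lambda = n/(\eta t)$ turns the denominator into $\big(n/(\eta t) + \sum_{i>k^*}\lambda_i\big)^2$, which is precisely the quantity $\tilde\lambda^2$ appearing in the claimed bound, and the high-probability event is the one supplied by Tsigler--Bartlett.

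The step I expect to require the most care is the passage from the PSD ordering of $\tilde\Ab$ to that of $\tilde\Ab^{-2}$: operator squaring is \emph{not} monotone in general, so this is legitimate only because $\tilde\Ab$ and $\Ab+\lambda\Ib$ are simultaneously diagonalizable as functions of $\Ab$ — this is the linchpin of the argument. Beyond that, one should check the two bookkeeping points: that $\eta\le c/\tr(\Hb)$ implies the hypothesis $\eta\le c/\lambda_1$ of Lemma~\ref{lemma:tildeA-bounds} (immediate from $\lambda_1\le\tr(\Hb)$), and that the index $k^*$ in the Tsigler--Bartlett bound can be taken arbitrary, matching the ``for any $k^*\in[d]$'' quantifier in the statement.
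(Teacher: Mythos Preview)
Your proposal is correct and follows essentially the same route as the paper: integrate out $\bepsilon$ to obtain the trace $\tr(\tilde\Ab^{-2}\Xb\Hb\Xb^\top)$, use Lemma~\ref{lemma:tildeA-bounds} to replace $\tilde\Ab^{-2}$ by $4(\Ab+\tfrac{n}{\eta t}\Ib)^{-2}$, and then invoke Theorem~1 of \citet{tsigler2020benign}. Your explicit justification of the $\tilde\Ab^{-2}\preceq 4(\Ab+\lambda\Ib)^{-2}$ step via simultaneous diagonalizability (both matrices being functions of $\Ab$) is in fact more careful than the paper's write-up, which simply cites Lemma~\ref{lemma:tildeA-bounds} for this passage.
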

\begin{proof} By \eqref{eq:bias_var_decomposition_GD}, we have
\begin{align} 
   \EE_{\bepsilon}[\mathrm{VarError}] & := \big\|\Xb^\top \tilde{\Ab}^{-1} \bepsilon\big\|_{\Hb}^2  \lesssim  \tr\big( \Xb \Hb \Xb^\top \tilde{\Ab}^{-2}\big) \lesssim  \tr\Big(\Xb \Hb \Xb^\top \Big(\Ab + \frac{n}{\eta t}\Ib \Big)^{-2}\Big), \label{eq:var-error-ridge}
\end{align}
where the last inequality is by Lemma \ref{lemma:tildeA-bounds}.
One finds that \eqref{eq:var-error-ridge} corresponds to the variance error of ridge regression in \citep{tsigler2020benign} for $\lambda = \frac{n}{\eta t}$. 
Then by Theorem 1 in \citet{tsigler2020benign}, one immediately obtains a bound for GD variance error:
\begin{align*}
    \EE_{\bepsilon}[\mathrm{VarError}] \lesssim\frac{k^*}{n} + \frac{n}{\big(n / (\eta t) + \sum_{i>k^*}\lambda_i \big)^2 }\cdot \sum_{i>k^*}\lambda_i^2,
\end{align*}
where 
\[
k^* := \min\bigg\{ k : n\lambda_{k+1}  \le \frac{n}{\eta t} + \sum_{i>k} \lambda_i \bigg\}.
\]
Setting $\tilde \lambda = n / (\eta t) + \sum_{i>k^*}\lambda_i$ completes the proof.
\end{proof}
\subsection{Bias Error}
\begin{lemma}\label{lemma:bias_bound}
Assume the ground truth $\wb^*$ follows a Gaussian Prior $\wb^*\sim \cN(0, \omega^2\cdot \Ib)$. Then for any stepsize $\gamma\le c/\tr(\Hb)$ for some absolute constant $c$ and any $k^*\in[d]$, with probability at least $1-1/\poly(n)$, 
\begin{align*}
\EE_{\wb^*}[\mathrm{BiasError}]\lesssim \omega^2\cdot\bigg(\frac{\tilde\lambda^2}{n^2}\cdot \sum_{i\le k^*}\frac{1}{\lambda_i} + \sum_{i>k^*}\lambda_i\bigg).
\end{align*}
\end{lemma}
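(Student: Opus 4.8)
The plan is to reduce the prior-averaged bias error to the bias error of a ridge-regression estimator and then invoke the existing ridge analysis. Write $\Mb := \Ib - \Xb^\top\tilde{\Ab}^{-1}\Xb$, so that by \eqref{eq:bias_var_decomposition_GD} the bias error equals $\tfrac12\|\Mb\wb^*\|_\Hb^2 = \tfrac12\wb^{*\top}\Mb\Hb\Mb\wb^*$ (using that $\Mb$ is symmetric). Since the prior $\wb^*\sim\cN(\boldsymbol 0,\omega^2\Ib)$ is independent of $\Xb$, taking the expectation over the prior turns the quadratic form into a trace:
\[
\EE_{\wb^*}[\mathrm{BiasError}] = \tfrac{\omega^2}{2}\tr(\Mb\Hb\Mb) = \tfrac{\omega^2}{2}\tr(\Hb\Mb^2).
\]
A short computation using Fact \ref{fact:covariance2gram} (i.e.\ $\Xb(\Ib-\eta\bSigma)^k=(\Ib-\eta n^{-1}\Ab)^k\Xb$) shows that $\Mb$ is in fact a function of the empirical covariance, namely $\Mb=(\Ib-\eta\bSigma)^t$; in particular $\Mb$ is PSD and is diagonalized by the eigenbasis of $\bSigma$.

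Next I would compare $\Mb$ against the analogous ridge-regression bias operator. Set $\lambda := 2n/(\eta t)$ and $\Nb := \Ib - \Xb^\top(\Ab+\lambda\Ib)^{-1}\Xb = \lambda(n\bSigma+\lambda\Ib)^{-1}$. The upper bound of Lemma \ref{lemma:tildeA-bounds}, $\tilde{\Ab}\preceq\Ab+\lambda\Ib$, yields $\tilde{\Ab}^{-1}\succeq(\Ab+\lambda\Ib)^{-1}$, hence $\Xb^\top\tilde{\Ab}^{-1}\Xb\succeq\Xb^\top(\Ab+\lambda\Ib)^{-1}\Xb$ and therefore $\boldsymbol 0\preceq\Mb\preceq\Nb$. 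The key point is that $\Mb$ and $\Nb$ are both functions of $\bSigma$, hence commute and are simultaneously diagonalizable; combined with $\boldsymbol 0\preceq\Mb\preceq\Nb$ this upgrades the comparison to $\Mb^2\preceq\Nb^2$ (eigenvalue-wise in the common basis). Since $\Hb\succeq\boldsymbol 0$, taking the trace against $\Hb$ preserves the inequality, so
\[
\EE_{\wb^*}[\mathrm{BiasError}] = \tfrac{\omega^2}{2}\tr(\Hb\Mb^2) \le \tfrac{\omega^2}{2}\tr(\Hb\Nb^2) = \tfrac{1}{2}\,\EE_{\wb^*}\big[\|\Nb\wb^*\|_\Hb^2\big],
\]
and the right-hand side is precisely the prior-averaged bias error of ridge regression with regularization $\lambda=2n/(\eta t)$.

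Finally I would invoke the ridge-regression bias bound of \citet[Theorem 1]{tsigler2020benign}, which for a fixed $\wb^*$ controls $\|\Nb\wb^*\|_\Hb^2$ (with probability at least $1-1/\poly(n)$ over $\Xb$) by a bound of the form $\tfrac{\tilde{\lambda}^2}{n^2}\|\wb^*_{0:k^*}\|_{\Hb^{-1}}^2+\|\wb^*_{k^*:\infty}\|_\Hb^2$, where $k^*$ is the effective dimension associated with $\lambda$ and $\tilde{\lambda}=n/(\eta t)+\sum_{i>k^*}\lambda_i$ is the effective regularization (note $\lambda/2=n/(\eta t)$, so the constant discrepancy is absorbed into $\lesssim$). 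Conditioning on the high-probability event over $\Xb$ and then taking $\EE_{\wb^*}$, the Gaussian prior gives $\EE_{\wb^*}\|\wb^*_{0:k^*}\|_{\Hb^{-1}}^2=\omega^2\sum_{i\le k^*}\lambda_i^{-1}$ and $\EE_{\wb^*}\|\wb^*_{k^*:\infty}\|_\Hb^2=\omega^2\sum_{i>k^*}\lambda_i$, which assembles into the claimed bound. I expect the main obstacle to be the second step — specifically, justifying the passage from $\Mb\preceq\Nb$ to $\Mb^2\preceq\Nb^2$, which is false for general PSD matrices and relies essentially on both operators being functions of $\bSigma$; the remaining work is the bookkeeping needed to match the effective-dimension parameters $(k^*,\tilde{\lambda})$ coming out of Lemma \ref{lemma:tildeA-bounds} with those in the ridge bound, and to carry out the prior averaging.
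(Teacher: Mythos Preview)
Your proposal is correct and follows essentially the same route as the paper: take the prior expectation to turn the bias into the trace $\omega^2\tr(\Hb\Mb^2)$, use the upper bound in Lemma~\ref{lemma:tildeA-bounds} to replace $\Mb=\Ib-\Xb^\top\tilde{\Ab}^{-1}\Xb$ by the ridge operator $\Nb=\Ib-\Xb^\top(\Ab+\tfrac{2n}{\eta t}\Ib)^{-1}\Xb$, and then invoke \citet[Theorem~1]{tsigler2020benign} followed by the prior average. Your write-up is in fact more careful than the paper's at the one delicate point: the paper simply asserts the passage from $\tilde{\Ab}\preceq\Ab+\tfrac{2n}{\eta t}\Ib$ to $\tr(\Hb\Mb^2)\le\tr(\Hb\Nb^2)$, whereas you correctly identify that $\Mb=(\Ib-\eta\bSigma)^t$ and $\Nb=\lambda(n\bSigma+\lambda\Ib)^{-1}$ are both functions of $\bSigma$, hence commute, which is exactly what is needed to square the inequality $\boldsymbol 0\preceq\Mb\preceq\Nb$.
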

\begin{proof}
Note that given the ground truth $\wb^*$, the bias error is 
\begin{align*}
   \mathrm{BiasError} &:= \norm{ \Hb^{\frac{1}{2}}\big(\Ib - \Xb^\top \tilde{\Ab}^{-1} \Xb  \big) \wb^*}_2^2.
\end{align*}
Further note that 
\[\wb^* \sim \cN(0, \omega^2 \cdot \Ib_d), \]
then taking expectation over $\wb^*$ gives
\begin{align*}
    \EE_{\wb^*}[\mathrm{BiasError}]
    &= \EE_{\wb^*}\big[\norm{ \Hb^{\frac{1}{2}}\big(\Ib - \Xb^\top \tilde{\Ab}^{-1} \Xb  \big) \wb^*}_2^2\big] \\
    &= \omega^2 \cdot \tr\big(\Hb\big(\Ib - \Xb^\top \tilde{\Ab}^{-1} \Xb  \big)^2     \big)\notag\\
    &=\omega^2 \cdot \underbrace{\tr\Big(\Hb\Big(\Ib - \Xb^\top \Big(\Ab+\frac{2n}{t\eta}\Big)^{-2} \Xb  \Big)^2     \Big)}_{*}
\end{align*}
where the last inequality is by Lemma \ref{lemma:tildeA-bounds}. Moreover, note that the quantity $(*)$ is actually the expected bias error of the ridge regression solution with the regularization parameter $2n/(t\eta)$. Therefore, by Theorem 1 in \citet{tsigler2020benign}, we have
\begin{align*}
(*)&\lesssim \EE_{\wb^*\sim \cN(\boldsymbol{0}, \Ib)} \bigg[\bigg(\frac{2n/(\eta t) + \sum_{i>k^*}\lambda_i}{n}\bigg)^2\cdot \|\wb^*_{0:k^*}\|_{\Hb_{0:k^*}^{-1}}^2+\|\wb^*_{k^*:\infty}\|_{\Hb_{k^*:\infty}}^2\bigg]\notag\\
&\eqsim \frac{\tilde\lambda^2}{n^2}\cdot \sum_{i\le k^*}\frac{1}{\lambda_i} + \sum_{i>k^*}\lambda_i,
\end{align*}
where 
\[
k^* := \min\bigg\{ k : n\lambda_{k+1}  \le \frac{n}{\eta t} + \sum_{i>k} \lambda_i \bigg\},
\] and $\tilde \lambda = n/(\eta t) + \sum_{i>k^*}\lambda_i$. 
This completes the proof.
\end{proof}

\subsection{Proof of Theorem \ref{thm:GD_earlystop}}
\begin{proof}[Proof of Theorem \ref{thm:GD_earlystop}]
The proof can be completed by combining Lemmas \ref{lemma:variance_bound} and \ref{lemma:bias_bound}.
\end{proof}

\section{Proof of Corollaries}

\subsection{Proof of Corollary \ref{coro:expected_SGD_error}}
The following lemma will be useful in the proof.
\begin{lemma}\label{lemma:expected_training_error}
Assume $\wb^*\sim\cN(\boldsymbol{0}, \omega^2\cdot\Ib)$ and $\wb_0=\boldsymbol{0}$, then
\begin{align*}
\EE_{\wb^*, \bepsilon}[\la\Eb_0, \bSigma\ra]\lesssim\omega^2\cdot\log(n)\cdot\tr(\Hb) + \sigma^2.
\end{align*}
\end{lemma}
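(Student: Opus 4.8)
The plan is to evaluate $\la \Eb_0, \bSigma \ra$ in closed form and then take expectation over the prior of $\wb^*$ and the noise $\bepsilon$. Since $\wb_0 = \boldsymbol{0}$, the definition of $\Eb_t$ gives $\Eb_0 = \hat\wb\hat\wb^\top$, so that $\la \Eb_0, \bSigma\ra = \hat\wb^\top \bSigma \hat\wb = \|\hat\wb\|_{\bSigma}^2 = n^{-1}\|\Xb\hat\wb\|_2^2$. The key simplification is the interpolation identity, which here follows directly from the closed form of the minimum-norm solution: recalling $\hat\wb = \Xb^\top\Ab^{-1}\yb$ with $\Ab = \Xb\Xb^\top$, we have $\Xb\hat\wb = \Xb\Xb^\top\Ab^{-1}\yb = \yb$. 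Hence $\la\Eb_0,\bSigma\ra = n^{-1}\|\yb\|_2^2$, which reduces the problem to bounding the average squared label.

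Next I would decompose $\yb = \Xb\wb^* + \bepsilon$ (per Assumption \ref{assump:data_distribution}\ref{assump:item:well_specified_noise}) and take expectation. Using that $\wb^*$ and $\bepsilon$ are independent with $\EE[\bepsilon]=\boldsymbol{0}$, the cross term vanishes and
\[
\EE_{\wb^*,\bepsilon}\big[\|\yb\|_2^2\big] = \EE_{\wb^*}\big[\|\Xb\wb^*\|_2^2\big] + \EE_{\bepsilon}\big[\|\bepsilon\|_2^2\big].
\]
For the signal term, $\EE_{\wb^*}[\wb^*\wb^{*\top}] = \omega^2\Ib$ yields $\EE_{\wb^*}[\|\Xb\wb^*\|_2^2] = \omega^2 \tr(\Xb^\top\Xb) = \omega^2 n\tr(\bSigma)$; for the noise term, $\EE[\xi^2]=\sigma^2$ yields $\EE_{\bepsilon}[\|\bepsilon\|_2^2] = n\sigma^2$. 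Dividing by $n$ gives the exact identity
\[
\EE_{\wb^*,\bepsilon}\big[\la\Eb_0,\bSigma\ra\big] = \omega^2\tr(\bSigma) + \sigma^2.
\]

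Finally I would control $\tr(\bSigma)$ in high probability over the draw of $\Xb$. Since $\tr(\bSigma) = n^{-1}\sum_{i=1}^n\|\xb_i\|_2^2$, I can directly reuse the uniform bound $\|\xb_i\|_2^2 \lesssim \log(n)\tr(\Hb)$ for all $i\in[n]$ that was already established (via Lemma \ref{lemma:high_prob_sum_subexp} together with a union bound over $i$) inside the proof of Lemma \ref{lemma:upperbound_Theta1}. This immediately gives $\tr(\bSigma) \lesssim \log(n)\tr(\Hb)$ with probability at least $1-1/\poly(n)$, and substituting into the identity above completes the proof.

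The computation is essentially routine; the only place requiring care is the last step, where the $\log(n)$ factor cannot be removed by the naive observation $\EE[\tr(\bSigma)]=\tr(\Hb)$, because the final bound must hold in high probability over $\Xb$ rather than merely in expectation. The concentration is nonetheless mild, since $\tr(\bSigma)$ is an average of $n$ subexponential quantities, so the subgaussian Assumption \ref{assump:data_distribution}\ref{assump:item:subgaussian_data} is enough to produce the stated logarithmic factor.
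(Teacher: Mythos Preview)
Your proposal is correct and follows essentially the same route as the paper: both reduce $\la\Eb_0,\bSigma\ra$ to $n^{-1}\|\yb\|_2^2$ via $\Xb\hat\wb=\yb$, split into signal and noise parts, and then control $\tr(\Xb\Xb^\top)=n\tr(\bSigma)$ in high probability over $\Xb$. The only cosmetic difference is that you exploit the independence of $\wb^*$ and $\bepsilon$ to obtain the exact identity $\EE_{\wb^*,\bepsilon}[\la\Eb_0,\bSigma\ra]=\omega^2\tr(\bSigma)+\sigma^2$, whereas the paper first applies Young's inequality before taking expectation; your version is slightly cleaner but otherwise equivalent.
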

\begin{proof}
Applying the formula of $\hat\wb$ and the initialization $\wb_0 = \boldsymbol{0}$, we have
\begin{align*}
\la\Eb_0,\bSigma\ra = \la \Xb^\top\Ab^{-1}\yb(\Xb^\top\Ab^{-1}\yb)^\top, \bSigma\ra = \frac{1}{n}\|\yb\|_2^2 = \frac{1}{n}\|\Xb\wb^*+\bepsilon\|_2^2\le \frac{2}{n}\|\Xb\wb^*\|_2^2 + \frac{2}{n}\|\bepsilon\|_2^2,
\end{align*}
where the last inequality follows from Young's inequality. Note that $\bepsilon$ is a combination of $n$ independent random variables with variance $\sigma^2$, we have $\EE[\|\bepsilon\|_2^2]=n\sigma^2$. Besides, regarding the first term, we have with probability at least $1-1/\poly(n)$, 
\begin{align*}
\EE[\|\Xb\wb^*\|_2^2] = \omega^2\cdot \tr(\Xb\Xb^\top) \lesssim  \omega^2 \cdot n\cdot \tr(\Hb).
\end{align*}
Combining the above results immediately gives
\begin{align*}
\EE_{\wb^*, \bepsilon}[\la\Eb_0, \bSigma\ra]\lesssim \omega^2\cdot\tr(\Hb) + \sigma^2.
\end{align*}
This completes the proof.

\end{proof}

\begin{proof}[Proof of Corollary \ref{coro:expected_SGD_error}] Plugging Lemma \ref{lemma:expected_training_error} into Theorem \ref{thm:upperbound_fluctuation} and then combining Theorems \ref{thm:upperbound_fluctuation} and \ref{thm:GD_earlystop} completes the proof.

\end{proof}

\subsection{Proof of Corollary \ref{coro:SGD_error_polydecay}}
\begin{proof}[Proof of Corollary \ref{coro:SGD_error_polydecay}]
We will first calculate $k^*$ defined in Corollary \ref{coro:expected_SGD_error}. Note that 
\begin{align*}
k^* = \min\bigg\{k: n\lambda_{k+1} \le \frac{n}{\eta t} + \sum_{i > k} \lambda_i \bigg\},
\end{align*}
and $\sum_{i>k}\lambda_i = \sum_{i>k}i^{-1-r} \eqsim k^{-r}$. Then, it can be shown that
\begin{align}\label{eq:choice_k*}
k^* = (t\eta)^{1/(r+1)}.
\end{align}
Recall that Corollary \ref{coro:expected_SGD_error} shows
\begin{align*}
&\EE_{\sgd, \wb^*}[\mathrm{Risk}(\wb_t)]\notag\\
&\lesssim\omega^2 \cdot
    \underbrace{ \Bigg(  \frac{\tilde{\lambda}^2 }{n^2 }  \cdot \sum_{i\le k^*} \frac{1}{\lambda_i} + \sum_{i>k^*} \lambda_i \Bigg)}_{I_1}
    + \sigma^2\cdot \underbrace{\rbr{\frac{k^*}{n} + \frac{n}{\tilde{\lambda}^2} \sum_{i>k^*}\lambda_i^2 }}_{I_2}+(\omega^2\tr(\Hb) +\sigma^2))\eta\notag\\
    &\quad\cdot \underbrace{\bigg[\log(t)\cdot\bigg(\tr(\Hb)\log(n)+\frac{k^*\log^{5/2}(n)}{n^{1/2}}\bigg)+\frac{\log^{5/2}(n)t\eta}{n^{1/2}}\cdot\sum_{i>k^*}\lambda_i\bigg)\bigg]}_{I_3}.
\end{align*}
Then, applying \eqref{eq:choice_k*} gives
\begin{align*}
&\sum_{i>k^*} \lambda_i\eqsim (k^*)^{-r}\eqsim(t\eta)^{-r/(r+1)},\  \sum_{i>k^*} \lambda_i^2\eqsim (k^*)^{-2r-1}\eqsim(t\eta)^{-(2r+1)/(r+1)},\notag\\
&\sum_{i\le k^*}\frac{1}{\lambda_i} \eqsim (k^*)^{r+2}=(t\eta)^{(r+2)/(r+1)},\ \tilde\lambda\eqsim \frac{n}{t\eta}, \ \tr(\Hb)\eqsim 1
\end{align*}
Putting the above into the formula of $I_1$, $I_2$, and $I_3$, we can get
\begin{align*}
I_1 &\lesssim \frac{n^2/(t\eta)^2}{n^2}\cdot (t\eta)^{(r+2)/(r+1)} + (t\eta)^{-r/(r+1)} \eqsim (t\eta)^{-r/(r+1)};\notag\\
I_2&\lesssim \frac{(t\eta)^{1/(r+1)}}{n} + \frac{n}{n^2/(t\eta)^2}\cdot (t\eta)^{-(2r+1)/(r+1)}\eqsim \frac{(t\eta)^{1/(r+1)}}{n};\notag\\
I_3 &\lesssim\log(t)\cdot\bigg(\log(n)+\frac{(t\eta)^{1/(r+1)}\log^{5/2}(n)}{n^{1/2}}\bigg)+\frac{\log^{5/2}(n)t\eta}{n^{1/2}}\cdot(t\eta)^{-r/(r+1)}\bigg)\\
&  \lesssim\log(t)\cdot\bigg[\log(n)+\frac{\log^{5/2}(n)}{n^{1/2}}\cdot (t\eta)^{1/(r+1)}\bigg].
\end{align*}
Combining the above results leads to
\begin{align*}
\EE_{\sgd, \wb^*}[\mathrm{Risk}(\wb_t)]
&\lesssim\omega^2 \cdot
    (t\eta)^{-r/(r+1)}
    + \sigma^2\cdot \frac{(t\eta)^{1/(r+1)}}{n}\notag\\
    &+(\omega^2 +\sigma^2)\cdot \eta\cdot\log(t)\cdot\bigg[\log(n)+\frac{\log^{5/2}(n)}{n^{1/2}}\cdot (t\eta)^{1/(r+1)}\bigg].
\end{align*}
This completes the proof.


\end{proof}
\bibliographystyle{ims}
\bibliography{refs}

\end{document}